\pgfplotsset{plot coordinates/math parser=false}
\setlist[itemize]{leftmargin=*}
\newtheorem{assumption}{Assumption}
\newtheorem{observation}{Observation}
\theoremstyle{definition}
\newtheorem{definition}{Definition}
\newtheorem{lemma}{Lemma}
\newtheorem{theorem}{Theorem}
\theoremstyle{remark}
\newtheorem{remark}{Remark}
\newcommand{\set}[1]{\left\lbrace#1\right\rbrace}
\newcommand{\setIn}[1]{\mathbbm{1}_{\set{#1}}}
\newcommand{\E}{\mathbb{E}}
\newcommand{\N}{\mathbb{N}}
\newcommand{\cK}{\mathcal{K}}
\newcommand{\cS}{\mathcal{S}}
\newcommand{\cT}{\mathcal{T}}
\newcommand{\cV}{\mathcal{V}}
\newcommand{\cW}{\mathcal{W}}
\newcommand{\cX}{\mathcal{X}}
\newcommand{\cZ}{\mathcal{Z}}
\newcommand{\Prb}{\mathrm{Pr}}
\renewcommand{\ge}{\geqslant}
\renewcommand{\le}{\leqslant}
\newcommand{\MAB}{\textsc{Aging Bandit with Adaptive Reset}\xspace}
\newcommand{\mab}{\textsc{ABAR}\xspace}
\title{Adaptive Requesting in Decentralized Edge Networks via Non-Stationary Bandits}
\author{Yi Zhuang\IEEEauthorrefmark{1}, 
Kun Yang\IEEEauthorrefmark{2}, \IEEEmembership{Fellow, IEEE},
Xingran Chen\IEEEauthorrefmark{3}, \IEEEmembership{Member, IEEE}
\IEEEcompsocitemizethanks 
{
\IEEEcompsocthanksitem Yi Zhuang is with School of Information and Communication Engineering, University of Electronic Science and Technology of China, Chengdu, Sichuan, China (Email: yizhuang265@163.com).
\IEEEcompsocthanksitem Kun Yang is with School of Computer Science and Electronic Engineering, University of Essex, Colchester, UK (Email: kunyang@essex.ac.uk).
\IEEEcompsocthanksitem Xingran Chen is with Department of Electrical and Computer Engineering, Rutgers University, Piscataway Township, NJ, USA (Email: xingranc@ieee.org). \textit{(Corresponding Author: Xingran Chen)}
}
} 
\begin{document}
\maketitle

\begin{abstract} 
We study a decentralized collaborative requesting problem that aims to optimize the information freshness of time-sensitive clients in edge networks consisting of multiple clients, access nodes (ANs), and servers. Clients request content through ANs acting as gateways, without observing AN states or the actions of other clients.
We define the reward as the age of information reduction resulting from a client’s selection of an AN, and formulate the problem as a non-stationary multi-armed bandit.
In this decentralized and partially observable setting, the resulting reward process is history-dependent and coupled across clients, and exhibits both abrupt and gradual changes in expected rewards, rendering classical bandits ineffective. To address these challenges, we propose the \MAB algorithm, which combines adaptive windowing with periodic monitoring to track evolving reward distributions. We establish theoretical performance guarantees showing that the proposed algorithm achieves near-optimal performance, and we validate the theoretical results through simulations.
\end{abstract}

\begin{IEEEkeywords}
Decentralized learning, non-stationary bandits, edge networks, age of information.
\end{IEEEkeywords}

\section{Introduction}\label{sec:Intro}

The proliferation of latency-sensitive applications, such as real-time sensing~\cite{Kaul2011AoI}, interactive services~\cite{8262777}, and distributed control~\cite{AoISurvey}, poses a significant challenge to maintaining the freshness of information in modern computer networks, as the utility of such applications critically depends on the timely delivery of updates. Traditional cloud-centric networks rely on centralized processing, in which data generated at the network edge must be transmitted to remote cloud servers for computation and decision making~\cite{6553297}. This centralized workflow introduces long communication paths and concentrates traffic on backhaul links, which, in turn, leads to increased transmission delays and network congestion. As a result, cloud-centric networks often struggle to meet the stringent latency requirements imposed by latency-sensitive applications, significantly degrading information freshness \cite{cxrFRAN, AoIrandomaccess}.

To overcome these limitations, modern network designs are increasingly shifting toward decentralized edge networks \cite{8016573} that distribute  computation, storage, and control across the network \cite{cxrFRAN}. Such networks typically consist of end users, access nodes (ANs), and servers, where ANs are deployed closer to end users and serve as intermediate network entities. Within this framework, ANs function as gateways that perform localized caching and forwarding of information, enabling data to be processed and delivered without always traversing the entire network to the cloud. By shortening communication paths and alleviating backhaul congestion, this decentralized network effectively reduces end-to-end latency and allows time-critical updates to be delivered to clients in a more timely manner \cite{cxrFRAN}.

This paper addresses the issue of timely content requests for latency-sensitive end users, referred to as clients, in a decentralized edge network (see Fig.~\ref{fig:Example}). In this setting, multiple servers, ANs, and clients interact within the edge network, where clients cannot directly communicate with servers. Instead, ANs act as gateways, either fetching cached content or sending commands to servers for content retrieval \cite{eRRHgateways, cxrFRAN}. To ensure timely information delivery, we adopt the Age of Information (AoI) \cite{Kaul2011AoI, AoISurvey} as the performance metric and aim to minimize the time-average AoI of clients, where AoI quantifies the time elapsed since the generation of the most recently received update. Optimizing AoI in a decentralized edge network therefore has broad implications for real-time applications---including smart cities, autonomous vehicles, industrial automation, and health monitoring systems \cite{AoIrandomaccess, AoIEstimation, AoICache}---where low-latency and fresh information are essential for safety and operational efficiency.

\begin{figure}[htbp]
\centering
\includegraphics[width=0.6\linewidth, height=0.5\linewidth]{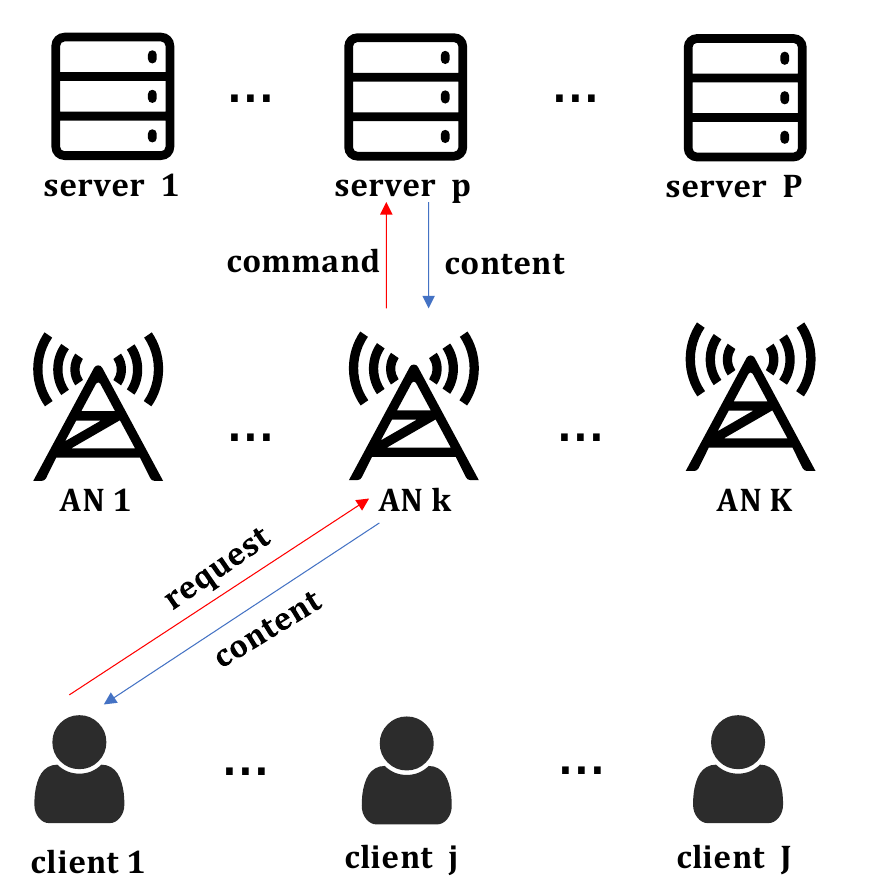}
\caption{ An example of decentralized an edge network.}
\label{fig:Example}
\end{figure}

This problem presents several key challenges. Some related challenges have been partially discussed in prior work~\cite{cxrFRAN}, but they are systematically addressed here. (i) \textit{Decentralization among clients}: each client makes decisions based solely on local information, hindering global coordination across the network. (ii)  \textit{ Intra-client decision coupling}: since each client can send at most one request per server per time slot, its decisions across different servers are inherently coupled. This coupling significantly complicates the analysis and renders existing tools---such as the age-of-version metric introduced in prior work~\cite{AgeVersion}---inapplicable. (iii) \textit{Inter-client coupling via shared ANs}: a client's request affects the state of shared ANs, which in turn alters the observations available to other clients. This creates a dynamically coupled environment, making real-time decision-making analytically intractable. (iv) \textit{Topological complexity}: the decentralized edge network exhibits a two-hop network with multiple receivers per hop and limited communication resources. This structural complexity further complicates policy optimization, as coordination must be achieved under stringent resource constraints.

Given the challenges above, traditional theoretical analysis becomes infeasible, motivating the use of reinforcement learning techniques, which have demonstrated strong performance in real-time decision-making tasks \cite{RLbook}. In decentralized edge networks, it is essential to develop decentralized learning that allows individual nodes---each with access only to local observations---to collaboratively optimize global objectives without centralized coordination \cite{DecLearn}. Among such approaches, Multi-Armed Bandit (MAB) methods \cite{MABbook} are particularly promising due to their simplicity, scalability, and favorable analytical properties, making them well-suited for distributed online decision-making under uncertainty.

According to the definition of AoI~\cite{AoISurvey}, the age increases when no packet is delivered and drops upon the reception of a fresh packet. As a result, the state of each arm---typically including the age---evolves over time even when it is not selected. This ``restless'' evolution induces highly non-stationary, history-dependent reward dynamics and naturally characterizes the problem as a restless bandit~\cite{restlessBandit} (referred to as an \textit{Aging Bandit} problem~\cite{Agingbandit}).

Although non-stationary MAB variants (e.g., SW-UCB and D-UCB) have been developed~\cite{garivier2008upper}, they remain insufficient for our setting for three reasons: (i) they often require prior knowledge of the change frequency; (ii) they typically assume independent rewards across agents, which is violated by our shared-update mechanism; and (iii) they are usually tailored to either abrupt or gradual changes, but not both simultaneously.

A well-known approach for restless bandits is Whittle's index policy, which has been widely applied in Aging Bandit settings~\cite{8437712,9241401,10666836,11015494,9579440,8493069,10462087}. However, Whittle-type approaches are ill-suited for decentralized decision-making because they generally require global state information \cite{8493069}. Moreover, verifying Whittle indexability is often challenging in complex decentralized environments; prior work~\cite{8493069} only extends the framework to a limited and simplified decentralized case that does not cover our setting.

Decentralized Aging Bandit formulations have also been studied in~\cite{Agingbandit,9559999,9668362}, primarily for dynamic channel selection in single-hop wireless networks to improve information freshness. These works focus on estimating channel erasure probabilities with rewards constrained to $[0,1]$, and their modeling assumptions and objectives differ substantially from ours, limiting their applicability to the decentralized edge-network setting considered here.

\subsection{Contributions}\label{subsec:Contribution}

In this work, we study the problem of optimizing information freshness for time-critical clients in decentralized edge networks. Each client independently selects an AN based solely on its local observations, without access to the states of ANs or the actions of other clients. By defining the reward as the instantaneous reduction in AoI, we formulate the coordinated request optimization problem as a \textit{decentralized Aging Bandit} problem with highly non-stationary and correlated rewards, featuring both abrupt and gradual environmental changes.

To address the challenges arising from this setting, we make the following key contributions:

\begin{compactenum}[(i)]
\item \textbf{Age-based Reward Design and Bandit Reformulation}.  
We define the reward of each action as the instantaneous reduction in age, such that an action receives a higher reward when it makes the information fresher. Under this reward definition, minimizing the time-average AoI is reduced to maximizing the cumulative reward over time. This reformulation enables us to cast the original information freshness optimization problem as a non-stationary multi-armed bandit problem.

\item \textbf{Decentralized Algorithm for Non-Stationary Aging Bandits.}  
Unlike existing Aging Bandit algorithms~\cite{Agingbandit,9559999,9668362}, which focus on settings with rewards constrained to $[0,1]$ and independent across agents, we design a decentralized algorithm, termed \MAB (\mab), for Aging Bandits with non-stationary and correlated rewards. The proposed algorithm combines adaptive windowing with a monitoring-based reset strategy so that each client can locally detect when the reward dynamics change and react accordingly. As a result, the algorithm can cope with reward variations caused by its own decisions, other clients’ actions, and ANs' update behaviors, without relying on global coordination or prior knowledge of environmental dynamics.

\item \textbf{Theoretical Guarantees.}  
We develop a theoretical framework for non-stationary multi-armed bandits that accommodates environments in which abrupt and gradual changes coexist. The existing framework (e.g., ADR~\cite{JMLR}) typically relies on simplified assumptions that the environment exhibits either purely abrupt or purely gradual changes. In this work, we systematically extend this framework to a more general non-stationary setting with mixed change dynamics. Based on the proposed theoretical framework, we prove that the proposed algorithm is asymptotically optimal, achieving sub-linear regret over time. Extensive simulations further corroborate the theoretical findings.

\end{compactenum}

\subsection{Notation}\label{subsec:Notations}
We use the notation $\E[\cdot]$ and $\Prb(\cdot)$ to denote expectation and probability, respectively.  The index sets $[J] = \set{1, 2, \dots, J}$, $[K] = \set{1, 2, \dots, K}$, and $[P] = \set{1, 2, \dots, P}$ represent the sets of clients, ANs, and servers, respectively. Let $T$ denote the time horizon. The indicator function $\setIn{A}$ equals $1$ if the event $A$ occurs, and $0$ otherwise.  The functions $h_{jp}(t)$ and $g_{kp}(t)$ denote the age of information of server~$p$ at client~$j$ and at AN~$k$ at time slot~$t$, respectively. The notation $\mathcal{O}(\cdot)$ follows the Bachmann--Landau convention and represents Big-O asymptotic bounds.

The rest of the paper is organized as follows. Section~\ref{sec:ProFor} introduces the system model and problem formulation. Section~\ref{sec:OptMAB} defines the AoI-based reward and reformulates the original problem as a non-stationary multi-armed bandit framework. Section~\ref{sec:ProAlg} presents the proposed \mab algorithm. Theoretical guarantees are established in Section~\ref{sec:assumptions} and Section~\ref{sec:ProAnl}. Simulation results are reported in Section~\ref{sec:NumRes}. Finally, Section~\ref{sec:Conclusion} concludes the paper.

% \begin{comment}
% \begin{table}[htbp]
% \caption{Notation Table}
% \begin{center}
% \small
% \renewcommand{\arraystretch}{1.1} 
% \begin{tabular}{|c|c|}
% \hline
% \textbf{Notation} & \textbf{Definition} \\
% \hline
% $M$ & Number of CUs \\[2pt] 
% \hline
% $N$ & Number of eRRHs \\[2pt]
% \hline
% $K$ & Number of CPs \\[2pt]
% \hline
% $[M], [N], [K]$ & Sets of CUs, eRRHs, CPs \\[2pt]
% \hline
% $\beta_{mk}^{(n)}(t)$ & CU $m$ requests content $k$ from eRRH $n$ at $t$ \\[2pt]
% \hline
% $b_{mk}^{(n)}(t)$ & Request probability of $\beta_{mk}^{(n)}(t)$ \\[2pt]
% \hline
% $\gamma_{nk}(t)$ & eRRH $n$ commands packet from CP $k$ at $t$ \\[2pt]
% \hline
% $r_{nk}(t)$ & Update probability of $\gamma_{nk}(t)$ \\[2pt]
% \hline
% $h_{mk}(t)$ & AoI at CU $m$ for content $k$ at time $t$ \\[2pt]
% \hline
% $\tilde{h}_{mk}^{(n)}(t)$ & $\min\{h_{mk}(t), g_{nk}(t)\}$ \\[2pt]
% \hline
% $g_{nk}(t)$ & AoI of cached content $k$ at eRRH $n$ at time $t$ \\[2pt]
% \hline
% $B$ & Request budget constraint per CU \\[2pt]
% \hline
% $R$ & Update budget constraint per eRRH \\[2pt]
% \hline
% \end{tabular}
% \label{notation table}
% \end{center}
% \end{table}
% \end{comment}

\section{Problem Formulation}\label{sec:ProFor}
\subsection{Network Model}
We consider a decentralized network consisting of $J$ clients, $K$ access nodes (ANs), and $P$ servers. Clients correspond to end devices (e.g., smartphones, laptops, or IoT terminals) that issue time-sensitive content requests and require up-to-date packet updates. ANs act as edge nodes equipped with local caching and forwarding capabilities, while servers are content sources responsible for generating the latest content to meet user demands. 
The sets of clients, ANs, and servers are denoted by $[J] = \set{1, 2, \dots, J}$, $[K] = \set{1, 2, \dots, K}$, and $[P] = \set{1, 2, \dots, P}$, respectively. An example of the network is illustrated in Fig.~\ref{fig:SysMod}. In this system, clients cannot communicate with servers directly; instead, ANs serve as gateways between clients and servers. 

Since content is transmitted in the form of packets, we use the terms \textit{content} and \textit{packets} interchangeably. Each AN is capable of caching and forwarding packets. When client~$j$ requests the most recent packet from server~$p$, the request is forwarded to an AN, denoted by AN~$k$. Upon receiving the request, AN~$k$ may either serve the packet from its local cache or command server~$p$ to generate and transmit a fresh packet.

\begin{figure}[htbp]
\centering
\includegraphics[width=0.6\linewidth, height=0.5\linewidth]{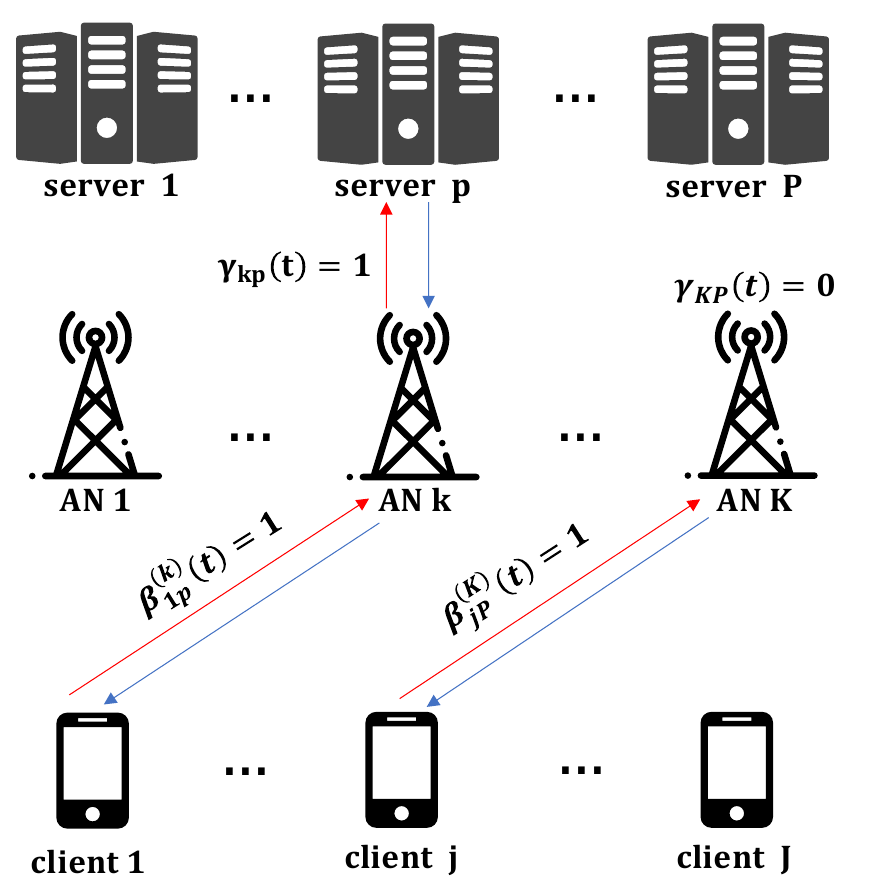}
\caption{Two illustrative service scenarios in a decentralized network. In the first, client~$1$ requests content from server $p$ via AN $k$. Upon receiving the request, AN~$k$ decides to command the server~$p$ to generate a new packet. In the second, another client~$j$ requests content from server~$P$ via AN~$K$, and the AN serves the request directly from its local cache.}
\label{fig:SysMod}
\end{figure}

We consider a slotted-time system indexed by $t\in [T]$. Let $\beta_{jp}^{(k)}(t) \in \set{0, 1}$ denote whether client~$j$ sends a request for content from server $p$ via an AN~$k$ in time slot $t$. Specifically, $\beta_{jp}^{(k)}(t) = 1$ indicates that such a request is sent, and $\beta_{jp}^{(k)}(t) = 0$ otherwise. We assume that, for any  client-server pair $(j, p)$, the client sends its request through at most one AN in each time slot. This is captured by the following constraint:
\begin{align}\label{eq:request at most one}
\sum_{k \in [K]} \beta_{jp}^{(k)}(t) \le 1,\,\, \forall\, j \in [J],\,p \in [P],\, t \in [T].
\end{align}
It is worth noting that $\set{\beta_{jp}^{(k)}(t)}_{k}$ are not independent over $k$. This inter-dependence stems from the fact that at most one request can be sent among the $K$ available ANs in each time slot. This structural dependence introduces additional complexity compared to previous studies \cite{AoICache, 9771060, AgeVersion, 9917351, 10349704}, where request decisions are typically modeled as independent across nodes.

Let $b_{jp}^{(k)}(t)$ denote the probability that client $j$ sends a request to server $p$ via  AN~$k$ at time $t$, i.e., 
\begin{align}\label{eq:request prob}
b_{jp}^{(k)}(t) \triangleq \Prb\left(\beta_{jp}^{(k)}(t) = 1\right).
\end{align}
From~\eqref{eq:request at most one} and~\eqref{eq:request prob}, we obtain
\begin{align}\label{eq:RqstOneProb}
\sum_{k \in [K]} b_{jp}^{(k)}(t) \le 1,\,\, \forall\, j \in [J],\,p \in [P],\, t \in [T].
\end{align}

Upon receiving content requests from clients, AN~$k$ determines how to serve requests associated with server $p$. Specifically, when a request for server $p$ is present at time slot $t$, AN $k$ decides whether to fetch a fresh packet from server $p$ or to serve the request using a locally cached copy. Let $\gamma_{kp}(t) \in \set{0, 1}$ denote the decision of AN~$k$ for server~$p$ at time slot $t$. Here, $\gamma_{kp}(t) = 1$ indicates that AN~$k$ commands server~$p$ to generate and transmit a fresh packet, whereas $\gamma_{kp}(t) = 0$ indicates that AN~$k$ uses the cached packet. 

We assume that, conditioned on the presence of requests, the decisions $\set{\gamma_{kp}(t)}_p$ are independent across $p$. This assumption is reasonable, as requests for different servers are typically independent in practice, which reflects real-world implementations where ANs update content streams independently. Such an assumption simplifies the system model while retaining practical relevance. We define:
\begin{align}\label{eq:gamma_prob}
r_{kp} \triangleq \Prb\left( \gamma_{kp}(t) = 1 \mid \beta_{jp}^{(k)}(t)=1\right).
\end{align}
which specifies the \textit{fixed} probability that AN~$k$ requests a fresh packet from server~$p$ at any time slot.\footnote{This formulation can be readily generalized to time-varying update probabilities $r_{kp}(t)$.} Considering that AN~$k$ obtains an update from server~$p$ if at least one client~$j$ requests content associated with server~$p$ at time $t$, the update probability is given by:
\begin{align}\label{eqCPupdateprob}
\left(1 - \prod_{j \in [J]} \left( 1 - b_{jp}^{(k)}(t) \right) \right) \cdot r_{kp}.
\end{align}
The term $\prod_{j \in [J]} \left( 1 - b_{jp}^{(k)}(t) \right)$ corresponds to the probability that no client makes a request for server~$p$ at AN~$k$.

To model the limited resources of each AN, we impose a resource constraint on its update decisions. Let $R$ denote the maximum resources that each AN can utilize per time slot:
\begin{align}\label{eq:eRRHConstraint}
\sum_{p \in [P]} r_{kp} \le R,\,\, \forall\, k \in [K].
\end{align}

For clarity, we explicitly state the following key assumptions in our system model: 
\begin{compactenum}[(i)]
\item Client requests and AN update commands are small in size, and their transmission delays are therefore negligible.
\item Fetching a fresh packet from a server or serving a request using locally cached content at an AN each requires exactly one time slot. The transmission delay from ANs to clients is also assumed to be negligible.
\item Interference is ignored, consistent with prior studies~\cite{AoICache, 9771060, AgeVersion, 9917351, 10349704}, as it can be effectively mitigated using PD-NOMA or similar multiple-access techniques.
\item When multiple clients request the same content from server~$p$ via AN~$k$ in the same time slot, AN $k$ adopts a single update decision---either fetching a fresh packet or serving all requests using the cached copy.
\item Each AN can simultaneously serve multiple client requests without incurring queuing delays, enabled by parallel processing.
\end{compactenum}

\subsection{Age of Information}
To capture the timeliness of content, we adopt the age-of-information (AoI) metric~\cite{Kaul2011AoI, AoISurvey} at both the ANs and the clients. Following the modeling paradigm in~\cite{cxrFRAN, AoIrandomaccess}, we consider two types of AoI: AoI defined at the ANs and AoI defined at the clients. Both AoI processes are updated at the end of each time slot.

At time slot $t$, let $\tau_{kp}$ denote the generation time of the most recently received packet from server~$p$ at AN $k$ \textit{prior to} time $t$. The AoI of server~$p$ at AN~$k$ is then defined as
\begin{align*}%\label{eq:ageAN}
g_{kp}(t) = t - \tau_{kp}.
\end{align*}
According to the model assumptions, fetching a new packet from a server requires exactly one time slot. Consequently, the AoI process $\set{g_{kp}(t)}$ evolves as
\begin{align}\label{eq:RecurAgeeRRH}
g_{kp}(t + 1) = & \setIn{\gamma_{kp}(t) \sum_{j} \beta_{jp}^{(k)}(t)>0}\nonumber\\
+& \left(g_{kp}(t) + 1\right)\setIn{\gamma_{kp}(t)\sum_{j} \beta_{jp}^{(k)}(t)=0}.
\end{align}
That is, the AoI is reset to $1$ when AN~$k$ successfully fetches a fresh packet from server~$p$ at time $t$, and increases by $1$ otherwise. The initial condition is given by $g_{kp}(0) = 1$.

Similarly, let $\tau'_{jp}$ denote the generation time of the most recently received packet from server~$p$ at client~$j$ \textit{prior to} time $t$. The corresponding AoI is defined as
\begin{align*}%\label{eq:ageClient}
h_{jp}(t) = t - \tau'_{jp}.
\end{align*}
Under our model assumptions, the transmission delay from ANs to clients is negligible. Furthermore, to ensure data freshness, any packet that is older than the most recently received one is discarded upon delivery. Accordingly, the AoI process $\set{h_{jp}(t)}$ evolves as follows:
\begin{align}\label{eq:REcurAgeClient}
h_{jp}(t+1) =&\sum_{k\in[K]}\Bigg(\setIn{\beta_{jp}^{(k)}(t) \gamma_{kp}(t) = 1}\nonumber\\
+&\left(\tilde{h}_{jp}^{(k)}(t) + 1\right)\setIn{\beta_{jp}^{(k)}(t)\Big(1-\gamma_{kp}(t)\Big) = 1} \Bigg)\nonumber\\
+&\left({h}_{jp}(t) + 1\right)\setIn{\sum_{k\in[K]}\beta_{jp}^{(k)}(t)= 0}.
\end{align}
where $\tilde{h}_{jp}^{(k)}(t) = \min\{h_{jp}(t), g_{kp}(t)\}$ and $h_{jp}(0) = 1$.

The long-term time-average age of information across all clients is defined as follows:
\begin{align}\label{eq:Avg_Age_CU}
J(T) = \frac{1}{T} \sum_{t = 1}^{T} \frac{1}{JP} \sum_{j \in [J]} \sum_{p \in [P]} \mathbb{E}[h_{jp}(t)].
\end{align}

\subsection{Objectives and Policies}\label{subsec:ObjPolicies}
We consider decentralized policies, under which each client makes decisions based solely on its local information. 
Under this decentralized setting, $\set{\beta^{(k)}_{jp}(t)}_{k,p, t}$ are independent across $j$. A decentralized policy is defined as
\begin{align}\label{eq:Policy}
\pi=\left\{ \left\{b_{jp}^{(k)}(t) \right\}_{k,p,t}\right\}_{j}.
\end{align}
Our objective is to minimize the time-average AoI of clients over a time horizon $T$. The optimization problem is formulated as:
\begin{align}\label{eq:Opt0}
\text{(P1)}\;\; & \min_{\pi} \;\; J(T)\nonumber\\
&\;\;\text{s.t.}\;\; \eqref{eq:RqstOneProb}
\end{align}
where $J(T)$ is defined in \eqref{eq:Avg_Age_CU} and $\pi$ is defined in \eqref{eq:Policy}.

\section{From Optimization to Multi-Armed Bandits}\label{sec:OptMAB}
\subsection{Challenges}\label{subsec:Challenges}
The optimization problem \eqref{eq:Opt0} involves several key challenges that fundamentally limit the application of classical decision-making approaches---such as dynamic programming \cite{powell2007approximate}, Lyapunov optimization \cite{neely2010stochastic}, classical MDP policies \cite{puterman2014markov}.

The first challenge lies in \textit{real-time} decision making. In our system, client-side policies may vary over time in response to rapidly changing network environment (e.g., user demand). As a result, the induced AoI processes are generally non-stationary and may not be ergodic. However, classical optimization methods---such as dynamic programming, Lyapunov optimization, and MDP-based policies---typically rely on stationary system dynamics or stable long-term statistical properties to guarantee performance optimality. These assumptions are violated in our setting, rendering classical approaches inapplicable and motivating the need for new optimization techniques.

The second challenge arises from \textit{decentralized} decision making. As discussed in~\cite{cxrFRAN}, there is no central scheduler coordinating the actions of clients. Instead, each client independently makes decisions based solely on local observations. This lack of global information and coordination significantly complicates the analysis and renders many traditional theoretical methods inapplicable.

The third challenge is \textit{partial observability}. Limited client-side observations prevent the use of full-information learning methods, including Q-learning \cite{watkins1992q}, that rely on complete state or transition information.

Due to the challenges discussed above, classical decision-making approaches are not applicable.

\subsection{Myopic Reformulation and Reward Design}\label{subsec:RewardDesign}

To enable tractable decentralized decision-making, we adopt a \textit{myopic} optimization approach that optimizes original problem \eqref{eq:Opt0}, following the idea in~\cite{10666836, chen2023containing}.

Specifically, instead of minimizing the long-term cumulative AoI, we seek to solve the following optimization sequentially in time for $0\le t\le T-1$:
\begin{align}\label{eq:Opt1}
\text{(P2)}\;\; & \min_{\pi} \;\; \frac{1}{TJP}\sum_{\substack{j\in[J]\\p\in[P]}}\E\left[h_{jp}(t)\right]\nonumber\\
&\;\;\text{s.t.}\;\; \eqref{eq:RqstOneProb}
\end{align}
which aims to minimize the expected increase in AoI over the current time slot.

Following a formulation similar to~\cite{Agingbandit}, we define a slot-based reward that directly captures the \textit{instantaneous reduction} in AoI caused by each action. According to the age recursion in~\eqref{eq:REcurAgeClient}, the age of a client increases linearly by one in the absence of a successful update, and resets to a smaller value, either $1$ or $\tilde{h}_{jp}^{(k)}(t)$, upon a successful content delivery. Motivated by this observation, the reward associated with client~$j$ and server~$p$ through AN~$k$, denoted by $x_{jp}^{(k)}(t)$, is defined as
\begin{align}\label{eq:reward_design}
x_{jp}^{(k)}(t) =&h_{jp}(t)\setIn{\beta_{jp}^{(k)}(t) \gamma_{kp}(t) = 1} \nonumber\\
+&\left(h_{jp}(t)-\tilde{h}_{jp}^{(k)}(t)\right)\setIn{\beta_{jp}^{(k)}(t)(1-\gamma_{kp}(t)) = 1}.
\end{align}
If client~$j$ does not request content from server~$p$ via any AN, i.e., $\sum_{k \in [K]} \beta_{jp}^{(k)}(t)=0$, then $x_{jp}^{(k)}(t)=0,\,\, \forall\, k \in [K]$. 

Substituting \eqref{eq:reward_design} into the age recursion \eqref{eq:REcurAgeClient}, we obtain
\begin{align}\label{eq:hxrelationship}
h_{jp}(t+1) = h_{jp}(t)+1 - \sum_{k\in[K]}x_{jp}^{(k)}(t).
\end{align}
Applying \eqref{eq:hxrelationship} recursively and noting that $h_{jp}(0)=1$, it follows that
\begin{align}\label{eq:hxequ}
h_{jp}(t) = -\sum_{\tau=0}^{t-1} \sum_{k\in[K]}x_{jp}^{(k)}(\tau) +t+1.
\end{align}

As a result, according to \eqref{eq:hxequ},  minimizing the myopic AoI objective in~\eqref{eq:Opt1} is equivalent to maximizing the accumulated slot-based reward. This yields the following equivalent reward maximization problem, solved sequentially over time for $0\le t\le T-1$:
\begin{align}\label{eq:Opt2}
\text{(P3)}\;\; & \max_{\pi} \;\; \frac{1}{TJP}\sum_{\substack{j\in[J]\\p\in[P]}}\E\left[\sum_{k\in[K]}x_{jp}^{(k)}(t)\right].\nonumber\\
&\;\;\text{s.t.}\;\; \eqref{eq:RqstOneProb}
\end{align}

This reformulation transforms the original age minimization problem~\eqref{eq:Opt0} into a slot-based reward maximization problem.

\subsection{Non-Stationary MAB and AoI Regrets}\label{subsec:AgeRgrts}

The optimization in \eqref{eq:Opt2} naturally aligns with decentralized online learning and lends itself to a MAB formulation. In particular, for each client-server pair $(j, p)$, the ANs can be viewed as arms, and each client independently interacts with the environment to balance \textit{exploration} (discovering better ANs) and \textit{exploitation} (selecting known to yield higher rewards). While MAB provides a lightweight framework for decentralized online learning, classical MAB models still remain invalid for our system.

The first reason is the \textit{dependence among rewards observed by different clients}. Unlike collision-based decentralized MAB models~\cite{rosenski2016multi,landgren2021distributed}, where multiple agents selecting the same arm independently receive zero rewards, our system is fundamentally different. Specifically, when multiple clients simultaneously request content from the same server~$p$ via a common AN, the AN makes a single update decision---either fetching a fresh packet from the server or serving cached content. This single update decision affects all requesting clients and further determines their received rewards. As a result, the rewards observed by different clients selecting the same AN are no longer independent. This structural dependence violates the reward independence assumptions commonly adopted in classical multi-armed bandit models. 

The second reason is \textit{non-stationarity}. Unlike classical stationary bandit problems, the reward distributions in our system evolve over time and depend on history-dependent AoI states.  For example, if client~$j$ selects an AN whose cached packet for server~$p$ has not been updated for a long period and the AN decides to serve cached content at time~$t$, the resulting AoI reduction---and hence the reward---will be small. In contrast, if client~$j$ currently has a large AoI for content~$p$ and selects an AN that has recently fetched a fresh update from server~$p$, the reward obtained in that slot can be significantly larger.  This complex reward structure is consistent with the \textit{Aging bandit problem}~\cite{Agingbandit}: the reward depends not only on the current state of the selected AN, but also on the history-dependent AoI evolution. 

These two reasons above fundamentally distinguish our setting from classical bandit models and motivate the need for adaptive learning algorithms capable of tracking non-stationary dynamics. We therefore cast the optimization problem~\eqref{eq:Opt2} as a \textit{decentralized}, \textit{non-stationary} MAB problem. To quantify the performance of learning algorithms in such an environment, we introduce the notion of \textit{AoI regret}~\cite{Agingbandit}.

We use $x_{jp}^{a_{jp,t}}(t)$ to denote the reward obtained by client~$j$ when requesting content from server~$p$ via AN $a_{jp,t}$ at time~$t$. Similarly, let $x_{jp}^{*}(t)$ denote the reward obtained under an oracle optimal policy that selects the best AN for each client---server pair $(j,p)$ at time~$t$. The AoI regret of our policy~$\pi$ after $T$ rounds is then defined as
\begin{align}\label{eq:regret_T}
R^{\pi}(T)
= \sum_{t=1}^{T} \sum_{j=1}^{J} \sum_{p=1}^{P}
\mathbb{E}\!\left[ x_{jp}^{*}(t) - x_{jp}^{a_{jp,t}}(t) \right].
\end{align}

\section{The \MAB Algorithm}\label{sec:ProAlg}

\subsection{Design Principles}\label{subsec:motivation}
To address the challenges mentioned in Sections~\ref{subsec:Challenges} and~\ref{subsec:AgeRgrts}, we propose the \MAB (\mab) algorithm, which consists of three key components:
\begin{compactenum}[(i)]
\item \textbf{Adaptive windowing with reset:} \mab combines the adaptive windowing (ADWIN) technique with a reset mechanism. When ADWIN detects a significant change, the algorithm \textit{discards all outdated statistics} and learns from the new environment~\cite{garivier2008upper,trovo2020sliding}.
\item \textbf{Periodic monitoring mechanism:} To timely track changes in reward dynamics, \mab partitions time into blocks and designates a subset of rounds within each block as monitoring rounds.
During non-monitoring rounds, the algorithm exploits the currently estimated optimal arm, while monitoring rounds are used to periodically assess potential changes in the reward distributions~\cite{JMLR}.
\item \textbf{Detection for abrupt and gradual changes:} \mab extends the single-agent ADR~\cite{JMLR} framework to a \textit{decentralized} multi-agent setting with \textit{correlated} rewards, providing an effective solution for detecting both abrupt and gradual changes.
\end{compactenum}

\subsection{Implementation of the Design Principles}\label{subsec:Algoperation}

\paragraph{Adaptive windowing with reset}\label{subsubsec:adwin}
To detect changes in reward distributions, \mab adopts ADWIN~\cite{bifet2007learning} as shown in Algorithm~\ref{alg:ADWIN}. The key idea of ADWIN is to monitor whether the average reward within a sliding window changes significantly over time.

At each time~$t$, the newly observed reward is appended to the current window, denoted by $W(t+1)$. ADWIN then considers all possible consecutive partitions of $W(t)$ into two sub-windows $W_1$ and $W_2$. 
\begin{definition}\label{defn:Detection}
A change is detected at time $t$ if there exists a consecutive partition $W(t+1) = W_1 \cup W_2$ such that:
\begin{align}\label{eq:ChangeDetect}
|\hat{\mu}^{(k)}_{jp,W_1} - \hat{\mu}^{(k)}_{jp,W_2}| \geq \varepsilon_{\text{cut}}^\delta,
\end{align}
\end{definition}
where $\hat{\mu}^{(k)}_{jp,W_1}$ and $\hat{\mu}^{(k)}_{jp,W_2}$ denote the empirical mean reward when client~$j$ requests content from server~$p$ via AN~$k$ during sub-windows $W_1$ and $W_2$. Moreover, we define $\varepsilon_{\text{cut}}^\delta$ as follows:
\begin{align*}
\varepsilon_{\text{cut}}^\delta = \sqrt{ \frac{1}{2|W_1|} \log \left( \frac{1}{\delta} \right) } + \sqrt{ \frac{1}{2|W_2|} \log \left( \frac{1}{\delta} \right) },\, \delta = \frac{1}{T^3},
\end{align*}
where $\varepsilon_{\text{cut}}^\delta$ is designed based on Hoeffding's inequality~\cite{JMLR}.

Once a change is detected, the algorithm immediately performs a reset, discarding outdated observations and restarting the learning process from time $t$ onward, using only the remaining horizon $T-t$ to adapt to the reward dynamics. 
\begin{remark}
Although clients’ rewards are correlated, the reset mechanism mitigates the influence of such correlation after environmental changes. When a client detects a change and performs a reset, it discards historical observations accumulated under the previous environment. As a result, each client can re-estimate rewards based on fresh observations and adapt more effectively to the new environment.

\end{remark}

\begin{algorithm}[htbp]
\caption{Adaptive Windowing (ADWIN)}
\label{alg:ADWIN}
\begin{algorithmic}[1]
\REQUIRE Reward stream $S = (x_1, x_2, \ldots)$, confidence level $\delta \in (0,1)$
\STATE Initialize window $W(1) = \emptyset$
\FOR{$t = 1,2,\ldots$}
    \STATE $W(t+1) = W(t) \cup \{x_t\}$
    \FOR{every split $W(t+1) = W_1 \cup W_2$}
        \STATE Compute empirical means: $\hat{\mu}_{W_1}$ and $\hat{\mu}_{W_2}$
        \IF{$|\hat{\mu}_{W_1} - \hat{\mu}_{W_2}| \ge \varepsilon_{\text{cut}}^\delta$}
            \STATE \textbf{return True} \quad (change detected)
        \ENDIF
    \ENDFOR
\ENDFOR
\STATE \textbf{return False}
\end{algorithmic}
\end{algorithm}

\paragraph{Periodic Monitoring}\label{subsubsec:monitor}
The \mab algorithm employs a monitoring mechanism that periodically selects specific arms to track reward dynamics. Specifically, the horizon $T$ is partitioned into a sequence of blocks, indexed by $l = 1, 2, \dots, \left\lceil \log\left( \frac{T}{KN} + 1 \right) \right\rceil$. Each block consists of $\mathcal{O}(2^{l-1})$ subblocks, and each subblock spans $KN$ time slots, where $N$ is a monitoring parameter ($K$ is the number of ANs).

Within each subblock, rounds are divided into monitoring and non-monitoring rounds. During non-monitoring rounds, client~$j$ selects AN~$I_{jp}(t)$ to request content from server~$p$ according to the Upper Confidence Bound (UCB) algorithm~\cite{auer2002finite}. Specifically, the selected AN is given by
\begin{equation}
    I_{jp}(t) = \arg\max\limits_{k \in [K]} \left(\hat{\mu}_{jp}^{(k)}+\sqrt{\frac{2\log(t)}{T_{jp}^{(k)}(t)}}\right),
    \label{eq:UCB_selection}
\end{equation}
where $\hat{\mu}_{jp}^{(k)}$ denotes the empirical mean reward, $T_{jp}^{(k)}(t)$ is the number of times AN~$k$ has previously been selected by client~$j$ for content from server~$p$ up to time $t$.

During monitoring rounds, client~$j$ sends a request for server~$p$ to AN~$i_{jp}^{(l-1)}$ to periodically track potential changes in reward distributions. Before the final subblock of block~$l$, the \mab algorithm selects a new monitoring AN~$i_{jp}^{(l)}$ for block~$l+1$ based on the historical selection frequency during the non-monitoring rounds:
\begin{align}\label{eq:monitoring_arm_selection}
i_{jp}^{(l)} = \arg\max_{k \in [K]} N_{jp}^{(k)},
\end{align}
where $N_{jp}^{(k)}$ denotes the number of times client~$j$ has selected AN~$k$ to request content from server~$p$ during the non-monitoring rounds up to the current time:
\begin{align*}
N_{jp}^{(k)} = |\{ s : I_{jp}(s) = k \text{ and s is a non-monitoring round} \}|.
\end{align*}
\begin{remark}
This selection prioritizes the AN with the most observations, ensuring that the monitoring process is based on reliable empirical estimates. Therefore, even if the reward distribution changes slowly, it can be detected through accumulated observations.
\end{remark}

\subsection{Complete Algorithm Description}\label{subsec:algdes}

Algorithm~\ref{alg:ABAR} summarizes the complete \mab procedure, integrating all components described above.

Compared with the ADR framework~\cite{JMLR}, \mab introduces two key extensions. First, the existing ADR framework is built on simplied models that assume changes are either strictly abrupt or strictly gradual. In contrast, in our setting, reward dynamics are history-dependent and evolve through a combination of abrupt and gradual changes. By integrating periodic monitoring with adaptive resets, \mab does not require prior assumptions about change patterns, enabling reliable adaptation to complex, real-world network dynamics. Second, \mab operates in a decentralized multi-agent setting, where each client runs its own instance of the algorithm. While clients act independently, shared observations introduce statistical coupling among agents. By resetting and discarding outdated statistics, \mab alleviates the impact of such coupling and improves adaptability to non-stationary environments.

Together, these extensions enable \mab to maintain reliable performance in decentralized and non-stationary environments.

\begin{algorithm}[htbp]
\caption{\MAB (\mab) for the pair $(j, p)$}
\label{alg:ABAR}
\begin{algorithmic}[1]
\REQUIRE Confidence level $\delta$, monitoring parameter $N \in \mathbb{N}$
\STATE Initialize UCB statistics $\hat{\mu}_{jp}^{(k)}$, $T_{jp}^{(k)}$, $N_{jp}^{(k)}$ for all $k\in[K]$
\FOR{$l = 1$ to $\left\lceil \log_2\left( \frac{T}{KN} + 1 \right) \right\rceil$}
    \FOR{$t = (2^{l-1} - 1)KN + 1$ to $\min\left\{ (2^{l} - 1)KN, T \right\}$}
        \IF{$l \geq 2$ and $t \bmod K=0$}
            \STATE $I_{jp}(t)=i_{jp}^{(l-1)}$ 
            (monitoring AN of previous block)
        \ELSIF{$l \geq 2$ and $t \bmod K=1$ and $t \geq (2^{l} - 2)KN + 1$}
            \STATE $I_{jp}(t)=i_{jp}^{(l)}$ 
            (monitoring AN of current block)
        \ELSE
            \IF{$\sum_{k \in [K]} \beta_{jp}^{(k)}(t) = 1$}
                \STATE Select AN based on~\eqref{eq:UCB_selection} and update $N_{jp}^{(I_{jp}(t))}$
            \ENDIF
        \ENDIF
        \STATE Update AoI according to~\eqref{eq:REcurAgeClient} and~\eqref{eq:RecurAgeeRRH}
        \STATE Update the empirical mean reward based on~\eqref{eq:reward_design}
        \IF{ADWIN detects change for client-server pair $(j,p)$}
            \STATE Reset all statistics: $\hat{\mu}_{jp}^{(k)},T_{jp}^{(k)}(t),N_{jp}^{(k)},\forall k\in[K]$
            \STATE Reset the algorithm with $T \leftarrow T - t$
        \ENDIF
        \IF{$t = KN$ \textbf{or} ($l \geq 2$ \textbf{and} $t = (2^{l} - 2)KN$)}
        \STATE $i_{jp}^{(l)} = \arg\max_{k \in [K]} N_{jp}^{(k)}$

            (select AN for next monitoring phase)
        \ENDIF
    \ENDFOR
\ENDFOR
\end{algorithmic}
\end{algorithm}
At the end of this section, we present a simple observation about Algorithm~\ref{alg:ABAR}. By construction, the current monitoring arm $i^{(l-1)}$ will be periodically selected $N$ times in each subblock; while in the last subblock of the $l$-th block, the algorithm will select a new arm $i^{(l)}$ as the monitoring arm for the next round.
\begin{observation}[Monitoring consistency]\label{obs:Monitoring}
For any block $l=1,2,\dots$, there exists at least one arm that is selected at least $N$ times in each subblock of block $l$.
\end{observation}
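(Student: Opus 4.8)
The plan is to treat the base block $l=1$ and the blocks $l\ge 2$ separately, since the periodic monitoring of Algorithm~\ref{alg:ABAR} only becomes active from the second block onward. Throughout, I read the statement as: for each $l$ there is a single AN $k^\star=k^\star(l)$ that is selected at least $N$ times within every subblock of block $l$; and I work within a single epoch between two consecutive ADWIN resets (a reset merely restarts the whole block schedule with horizon $T\leftarrow T-t$, so the argument then reapplies verbatim to the next epoch). I will also restrict attention to complete subblocks, which is the regime the logarithmic block schedule is designed for.

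For $l\ge 2$, the idea is to exhibit $k^\star$ explicitly as the monitoring AN $i_{jp}^{(l-1)}$ inherited from block $l-1$ (this quantity is well defined within the epoch because block~$1$ sets $i_{jp}^{(1)}$ before block~$2$ begins, block~$2$ sets $i_{jp}^{(2)}$, and so on). By the block layout, block~$l$ starts at $t=(2^{l-1}-1)KN+1$ and is partitioned into $2^{l-1}$ subblocks, each a run of $KN$ consecutive time slots. Any window of $KN$ consecutive integers contains exactly $N$ multiples of $K$, and on each such round (the branch $t\bmod K=0$ in Algorithm~\ref{alg:ABAR}) the algorithm deterministically sets $I_{jp}(t)=i_{jp}^{(l-1)}$, independently of the request realization and of the UCB statistics. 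For $K\ge 2$ these rounds are disjoint from the rounds $t\bmod K=1$ that are reserved (only in the last subblock) for installing the next monitoring arm $i_{jp}^{(l)}$, so the forced selections are never overwritten; hence $i_{jp}^{(l-1)}$ is selected at least $N$ times in every subblock of block~$l$. The case $K=1$ is trivial since a single arm is always selected.

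For $l=1$ there is no monitoring and block~$1$ consists of the single subblock $\set{1,\dots,KN}$. Here the argument is a counting one: in each of these $KN$ rounds the algorithm designates an AN $I_{jp}(t)\in[K]$ --- explicitly in monitoring rounds (there are none here) and, in the remaining rounds, through the UCB step, which under the algorithm's own action choice is executed in every round --- so by the pigeonhole principle some AN is chosen in at least $\lceil KN/K\rceil=N$ of them; taking $k^\star$ to be that AN completes the case, as there is only one subblock. The point that warrants the most care --- rather than genuine difficulty --- is the boundary bookkeeping for $l\ge 2$: confirming the exact count of multiples of $K$ inside each subblock, checking that the two monitoring branches ($t\bmod K\in\set{0,1}$) do not collide, and ruling out interference from a mid-block reset (which we sidestep by the epoch decomposition above). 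A minor residual issue is a truncated final subblock when $(2^l-1)KN>T$, for which one either restricts the statement to complete subblocks or notes that a tail of length $\ell$ still contains $\lfloor \ell/K\rfloor$ forced monitoring rounds.
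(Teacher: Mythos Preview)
Your argument is correct and matches the paper's own justification, which is simply the one-line remark preceding the observation: ``By construction, the current monitoring arm $i^{(l-1)}$ will be periodically selected $N$ times in each subblock.'' Your treatment of $l\ge 2$ is exactly this, just spelled out (counting the $N$ multiples of $K$ in each length-$KN$ subblock and noting that the $t\bmod K=0$ and $t\bmod K=1$ branches do not collide), and your pigeonhole handling of $l=1$ together with the epoch/truncation caveats are details the paper leaves implicit.
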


\section{Preliminaries: Notations, Definitions, and Assumptions}\label{sec:assumptions}
In this section, we introduce necessary notations, definitions, and assumptions. For clarity, we illustrate these preliminaries in a simplified setting with \textit{a single client, a single server, and multiple ANs}. The analysis framework can be extended to scenarios with multiple clients and multiple servers straightforwardly.

We begin by extending the definitions of gradual and abrupt reward changes introduced in~\cite{JMLR}. Let $\mu_{i,t}$ denote the expected reward of arm $i$ at time slot~$t$. Moreover, we denote $i^{(l)}$ as the monitoring arm selected by the algorithm in block~$l$.

\begin{definition}[Gradual and Abrupt Changes]\label{defn:GradualAbrupt}
Let $b\in(0, 1)$ be a positive scalar and $t\in\N$. Arm~$i$ undergoes a gradual change in time slot $t$ if
\begin{align}\label{eq:GradualChange}
|\mu_{i, t+1} - \mu_{i, t}| \leq b;
\end{align}
and undergoes an abrupt change in time slot $t$ if
\begin{align}\label{eq:AbruptChange}
|\mu_{i, t+1} - \mu_{i, t}| > b.
\end{align}
\end{definition}

\begin{definition}[Change Points]\label{defn:ChangePoints}
Let $b$ be given in Definition~\ref{defn:GradualAbrupt}. Time $t$ is called a change point if there exists $i\in[K]$ such that
\begin{align}\label{eq:ChangePoint}
\,|\mu_{i, t+1}-\mu_{i, t}|>b.
\end{align}
\end{definition}

\begin{definition}[Gradual Segment]\label{defn:GradualSegment}
A gradual segment with respect to arm~$i$ is a maximal consecutive sequence of time slots in which the gradual condition~\eqref{eq:GradualChange} holds.
\end{definition}
According to Definition~\ref{defn:GradualSegment}, an abrupt change at time $t$ disrupts the ongoing gradual segment and re-starts a new segment beginning at $\mu_{i, t}$.

\begin{assumption}\label{assu:MChangePoints}
Within the time interval $[0, T]$, we assume that the system undergoes $M$ change points, whose occurrence times $(T_1,\cdots,T_M)$ are mutually independent random variables. The set of these change points is denoted by 
\begin{align}\label{eq:ChangePoints}
\cT_{c} = \set{T_{1}, T_{2}, \dots, T_{M}}.
\end{align}
For notational convenience, we denote $T_{0}=0$ and $T_{M+1}=T$.
\end{assumption}
\begin{definition}\label{defn:NonEmptyM}
For any $1\le m\le M$, define 
\begin{align}\label{eq:NonEmptyM}
\cK_m = \set{i\,\middle|\,|\mu_{i, T_{m}+1}-\mu_{i, T_m}|>b, i\in[K]}.
\end{align}
\end{definition}
As defined in Definition~\ref{defn:NonEmptyM}, $\cK_m$ denotes the set of arms that satisfy condition~\eqref{eq:AbruptChange} at time $T_m$. By Definition~\ref{defn:ChangePoints}, this set is nonempty for every change point, i.e., $\cK_m \neq \emptyset$.

\begin{assumption}[{\cite[Definition~15]{JMLR}}]\label{assu:monitoring_ChangePoints}
We assume that for each change point, there exists an arm $j\in\set{i^{(l)},i^{(l-1)}}$ such that condition~\eqref{eq:AbruptChange} is satisfied.
\end{assumption}

\begin{assumption}\label{assu:AbruptReset}
We assume that each abrupt change triggers a detection, as specified in Definition~\ref{defn:Detection}.
\end{assumption}

This assumption is justified by Lemma~\ref{lem:Detection}, which shows that the \mab algorithm detects abrupt changes with high probability within a bounded delay. It is also standard in prior work (see~\cite{JMLR}) and aligns naturally with the operational logic of the \mab. Empirical evaluations further confirm that the algorithm reacts reliably to abrupt changes in practice.

\begin{definition}[Resets]\label{defn:Resets}
Suppose Assumption~\ref{assu:AbruptReset} holds. A reset that follows a detection triggered by an abrupt change is called an \textit{abrupt reset}, while any other reset is referred to as a \textit{gradual reset}.
\end{definition}

\begin{definition}[Reset Times]\label{defn:ResetTime}
Let abrupt and gradual resets be defined in Definition~\ref{defn:Resets}, we define  
\begin{compactenum}[(i)]
\item $X_{t}$ as the time of the most recent gradual reset {\it strictly} before time $t$, with $X_{t} = 0$ if no such reset has occurred;
\item $Y_{t}$ as the time of the most recent abrupt reset {\it strictly} before time $t$, with $Y_{t} = 0$ if no such reset has occurred.
\end{compactenum}
\end{definition}

\begin{definition}[Drift-Tolerant Regret, Definition~12 in \cite{JMLR}]\label{defn:Drift-tolerant regret}
Assume a non-stationary environment that is abruptly or gradually changing. Let
\begin{align}\label{eq:Deltai}
\Delta_{i}=\max_{j}\mu_{j,1}-\mu_{i,1}.
\end{align}
be the gap at $t=1$, and 
\begin{align}\label{eq:Epsilont}
\epsilon(t)=\max_{s\leq t}\max_{i}|\mu_{i,s}-\mu_{i,1}|
\end{align}
be the maximum drift of the arms by time step $t$. For $c>0$, let
\begin{align}\label{eq:Reg}
\mathrm{Reg}_{\mathrm{tr}}(T,c):=\sum_{t=1}^T\left(\mathrm{reg}(t)-c\cdot\epsilon(t)\right)^{+}
\end{align}
where $(x)_{+}=\max(x,0)$. A bandit algorithm has logarithmic drift-tolerant regret if a factor $c_{\text{dt}}=O(1)$ exists such that
\begin{align}\label{eq:TDR}
\mathbb{E}[\mathrm{Reg}_{\mathrm{tr}}(T,c_{\text{dt}})]\leq c_{\text{dt}}\sum_{\Delta_{i}>0}\frac{\log T}{\Delta_{i}}.
\end{align}
\end{definition}
\begin{remark}
We introduce the notion of Drift-tolerant Regret to avoid penalizing errors that are inherently caused by environmental non-stationarity.
\end{remark}
Note that the mean reward $\mu_{i, t}$ evolves over time, while the algorithm can only form estimates real time based on past observations. As a result, some level of estimation error is unavoidable in non-stationary environment. Motivating by this fact, the idea behind Definition~\ref{defn:Drift-tolerant regret} is to distinguish between \textit{natural errors} induced by the drift of the mean rewards and \textit{excess errors} attributable to algorithmic inefficiency. Specifically, at time $t$, if the instantaneous regret is below a threshold $c\epsilon(t)$, this portion is regarded as a natural error and excluded from the cumulative regret. Only the regret exceeding $c\epsilon(t)$ is accumulated.

When $\epsilon(t)=0$, the environment is stationary, and the Drift-tolerant regret reduces to the standard definition in~\cite[\textit{Definition}~11]{JMLR}.

\begin{assumption}\label{assu:Drift-tolerant regret}
 We assume that the base-bandit of our algorithm (i.e., UCB) has logarithmic drift-tolerant regret.
\end{assumption}

\begin{remark}\label{remark:upperbound of Reg(s)}
Under Assumption~\ref{assu:Drift-tolerant regret}, suppose no reset occurs before time slot $S$. Then there exists a constant $c_{\text{dt}}=O(1)$, such that the cumulative regret up to $S$ satisfies
    \begin{align*}
        \mathbb{E}[\mathrm{Reg}(S)]\leq c_{\text{dt}}\left(\sum_{\Delta_{i}>0}\frac{\log T}{\Delta_{i}}+\mathbb{E}[\sum_{i=1}^{S}\epsilon(t)]\right),
    \end{align*}
    with a similar proof of {\cite[Lemma~17]{JMLR}}.
\end{remark}

\begin{definition}[Detectability]\label{defn:Detectability}
Suppose Assumption~\ref{assu:AbruptReset} holds, and let $\cK_m$ be as in Definition~\ref{defn:NonEmptyM}. For the $m$-th change point, define
\begin{align}\label{eq:Detectability}
\epsilon_{m} = \min_{i\in\cK_m}|\mu_{i,T_{m}} - \mu_{i,T_{m}+1}|.
\end{align}
We say that the $m$-th change point is detectable if the following two conditions hold:
\begin{compactenum}[(i)]
\item $\epsilon_{m}\geq \sqrt{\frac{\log (T^{3})}{2U_{m}}}+6bKN+2\sqrt{\frac{\log(T^3)}{2N}}+b$.
\item $T_{m} - X_{T_m} \geq 32KU_{m}$.
\end{compactenum}
\end{definition}

Definition~\ref{defn:Detectability} is different from the counterpart \cite[Definition~20]{JMLR}. In \cite{JMLR}, the reward is assumed to be stationary between change points, our setting permits gradual changes over time. As such, we introduce a modified notion of detectability tailored to this scenario.

\begin{assumption}\label{assu:epsilon_upper_bound}
For each $m\in\{1,2,...,M\}$, assume that 
\begin{align*}
\epsilon_{m}\leq c_{u}(\sqrt{\frac{\log (T^{3})}{2U_{m}}}+6bKN+2\sqrt{\frac{\log(T^3)}{2N}}+b),
\end{align*} 
where $c_{u}$ is a constant.
\end{assumption}

According to Remark~\ref{remark:Hoeffding's inequality} in Appendix~\ref{Appe:LemmaHoeffding}, we know that $\epsilon_{m}$ will have a corresponding upper bound. To facilitate the derivation of Theorem~\ref{th:abrupt}, we present Assumption~\ref{assu:epsilon_upper_bound}.

\section{Asymptotic Optimality}\label{sec:ProAnl}

In this section, we present rigorous theoretical results characterizing the regret of the proposed algorithm. For clarity, we illustrate the results in a simplified setting with \textit{a single client, a single server, and multiple ANs}. Extensions to multiple clients and multiple servers follow naturally.

We divide the entire time horizon $[0, T]$ into 
\begin{align*}
&[0, X_{T_{1}}],\,\, \set{(X_{T_{m}}, Y_{T_{m+1}}]}_{m=1}^{M},\\
&\set{(Y_{T_{m+1}}, X_{T_{m+1}}]}_{m=1}^{M},\,\, \text{and} \,\, (X_{T_{M+1}}, T].
\end{align*}

Specifically, the intervals 
\begin{align*}
\set{(X_{T_{m}},Y_{T_{m+1}}]}_{m=1}^{M}
\end{align*}
correspond to \textit{abrupt reset intervals}, during which the environment has already changed but the algorithm has not yet detected the change. We denote the union of these intervals by $T_{\text{abrupt}}$.

The remaining intervals,
\begin{align*}
[0, X_{T_1}],\,\, \{(Y_{T_{m+1}},\, X_{T_{m+1}}]\}_{m=1}^M,\,\, \text{and}\,\, (X_{T_{M+1}}, T],
\end{align*}
correspond to \textit{gradual reset intervals}, where changes accumulate gradually and resets are triggered due to the accumulated drift. We denote these intervals by $T_{\text{gradual}}$.

We decompose the total regret into two components: the regret incurred during abrupt reset intervals, and the regret accumulated during gradual reset intervals:
\begin{align*}
\mathbb{E}[\text{Reg}(T)] \triangleq \mathbb{E}[\text{Reg}(T_{\text{abrupt}})] + \mathbb{E}[\text{Reg}(T_{\text{gradual}})]. 
\end{align*}
Let the instantaneous regret at time $t$ be defined as
\begin{align}\label{eq:Reg}
\text{Reg}(t)\triangleq\max_i\mu_{i,t}-\mu_{I(t),t},
\end{align}
where $\max_{i}\mu_{i,t}$ is the expected reward of the optimal arm at time $t$, and $\mu_{I(t),t}$ is the expected reward of the arm selected by the algorithm at time $t$.
Since $Y_{T_1}=0$, then the two regret components are then given by:
\begin{align*}
\mathbb{E}[\text{Reg}(T_{\text{abrupt}})] =& \mathbb{E} \Big[ \sum_{m=1}^{M}\sum_{t = X_{T_{m}} + 1}^{Y_{T_{m+1}}} \text{Reg}(t)\Big].\\
\mathbb{E}[\text{Reg}(T_{\text{gradual}})]=&\mathbb{E} \Big[\sum_{m=1}^{M+1} \sum_{t = Y_{T_{m}}}^{X_{T_{m}}} \text{Reg}(t) +\sum_{t = X_{T_{M+1}}}^{T} \text{Reg}(t)\Big].
\end{align*}

\begin{theorem}[Regret bound within abrupt reset intervals]\label{th:abrupt}
Suppose that Assumptions~\ref{assu:MChangePoints},~\ref{assu:monitoring_ChangePoints},~\ref{assu:AbruptReset},~\ref{assu:Drift-tolerant regret} and~\ref{assu:epsilon_upper_bound} hold. Assume that $\cT_c$ is a global change with constant $c_a$
(Definition~\ref{Defn:GlobalChange}). Let $\delta=\frac{1}{T^3}$ and choose parameters such that, for all $m$, $N\ge 16U_{m}$, $\frac{T_{m}-X_{T_{m}}}{2}\ge KN$, $N=\mathcal{O}((bK)^{-\frac{2}{3}})$, and $b=T^{-d}(d>0)$. Then, the expected regret accumulated over the abrupt reset intervals
satisfies
\begin{align}
&\mathbb{E}[\text{Reg}(T_{\text{abrupt}})] <  \mathcal{O}(\sqrt{T\log T})+\mathcal{O}(T^{1-\frac{d}{3}} (\log T)^{\frac{3}{2}}).
\end{align}
\end{theorem}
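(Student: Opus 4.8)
\textbf{Proof plan for Theorem~\ref{th:abrupt}.}

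The plan is to bound $\mathbb{E}[\text{Reg}(T_{\text{abrupt}})]=\mathbb{E}\big[\sum_{m=1}^{M}\sum_{t=X_{T_m}+1}^{Y_{T_{m+1}}}\text{Reg}(t)\big]$ by controlling, for each change point $T_m$, the \emph{detection delay} $Y_{T_{m+1}}-T_m$ together with the stretch $T_m-X_{T_m}$ between the last gradual reset and the change. First I would split each abrupt-reset segment $(X_{T_m},Y_{T_{m+1}}]$ at the change point $T_m$ into the pre-change piece $(X_{T_m},T_m]$ and the post-change piece $(T_m,Y_{T_{m+1}}]$. On the pre-change piece the algorithm runs without a reset since $X_{T_m}$, so I would invoke Assumption~\ref{assu:Drift-tolerant regret} via Remark~\ref{remark:upperbound of Reg(s)} to bound that contribution by $c_{\text{dt}}\big(\sum_{\Delta_i>0}\frac{\log T}{\Delta_i}+\mathbb{E}[\sum \epsilon(t)]\big)$, where the drift term is controlled using $b=T^{-d}$ so that $\epsilon(t)\le b\cdot(\text{segment length})$; summed over the at most $M$ segments and using $M=\mathcal{O}(\cdot)$ this produces the $\mathcal{O}(T^{1-d/3}(\log T)^{3/2})$-type term after substituting $N=\mathcal{O}((bK)^{-2/3})$.

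Next I would handle the post-change (undetected) piece, which is the crux. The length $Y_{T_{m+1}}-T_m$ is exactly the time the ADWIN detector needs to flag the abrupt change at $T_m$. Here I would lean on Lemma~\ref{lem:Detection} (referenced after Assumption~\ref{assu:AbruptReset}) and Observation~\ref{obs:Monitoring}: by the monitoring-consistency property, within each subblock some arm — and by Assumption~\ref{assu:monitoring_ChangePoints} the relevant monitoring arm $i^{(l)}$ or $i^{(l-1)}$ witnessing the abrupt jump — is pulled at least $N$ times, so the empirical means on the two sides of the split across $T_m$ concentrate (Hoeffding, with $\delta=T^{-3}$) to within $\sqrt{\log(T^3)/(2N)}$ of their true values. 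Under the detectability condition (Definition~\ref{defn:Detectability}, guaranteed here by the chosen parameters $N\ge 16U_m$, $\frac{T_m-X_{T_m}}{2}\ge KN$, and Assumption~\ref{assu:epsilon_upper_bound} pinning $\epsilon_m$ to the stated scale), the jump $\epsilon_m$ exceeds the cut threshold $\varepsilon_{\text{cut}}^\delta$ once roughly $\mathcal{O}(KN)$ post-change slots have elapsed, so the detection delay is $\mathcal{O}(KN)=\mathcal{O}(K(bK)^{-2/3})=\mathcal{O}(K^{1/3}T^{2d/3})$ per change point with high probability; on the low-probability failure event ($\le M/T^3$ by a union bound) I bound $\text{Reg}(t)\le 1$ and the total stays $o(1)$. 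The global-change assumption with constant $c_a$ (Definition~\ref{Defn:GlobalChange}) is what lets me control the \emph{number} $M$ of change points and hence the number of such delay windows, yielding a $\sqrt{T\log T}$-scale aggregate after balancing the per-segment $\mathcal{O}(KN)$ delay against $M$.

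Finally I would collect the two contributions: the pre-change/drift part gives $\mathcal{O}(T^{1-d/3}(\log T)^{3/2})$ and the post-change detection-delay part, after the parameter balancing forced by $N=\mathcal{O}((bK)^{-2/3})$ and $b=T^{-d}$, contributes the $\mathcal{O}(\sqrt{T\log T})$ term (the $\log T$ coming from the Hoeffding confidence $\log(T^3)$), giving the claimed bound. I expect the \textbf{main obstacle} to be the post-change analysis: carefully quantifying the ADWIN detection delay in the presence of ongoing gradual drift (the $6bKN$ slack term in Definition~\ref{defn:Detectability} exists precisely to absorb this), and ensuring the statistical coupling between clients — nominally a multi-agent issue, though the theorem is stated for a single client — does not invalidate the Hoeffding concentration on the monitoring arm; the reset mechanism (Remark after Definition~\ref{defn:Detection}) is the tool that decouples segments, but making the delay bound hold \emph{uniformly} over all $M$ segments while keeping the union-bound failure probability summably small is where the delicate bookkeeping lies.
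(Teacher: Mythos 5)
Your overall decomposition --- splitting each abrupt-reset segment at the change point $T_m$, invoking the detection-delay bound $Y_{T_{m+1}}-T_m\le 16KU_m$ from Lemma~\ref{lem:Detection} on a high-probability event, and handling the failure event separately --- matches the paper's proof. However, there are two genuine gaps. First, for the pre-change piece you bound the regret only by the drift-tolerant guarantee $c_{\text{dt}}\bigl(\sum_{\Delta_i>0}\frac{\log T}{\Delta_i}+\E[\sum_t\epsilon(t)]\bigr)$ and claim this yields the $\mathcal{O}(T^{1-d/3}(\log T)^{3/2})$ term. The gap-dependent sum $\sum_{\Delta_i>0}\frac{\log T}{\Delta_i}$ is unbounded as the gaps shrink, so it cannot be absorbed into either term of the claimed bound. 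The paper's essential step here is to pair the logarithmic bound with the trivial bound $\Delta_{i,m}^{(1)}\E[N_{i,m}^{(1)}]$, use $\min(a,b)\le\sqrt{ab}$ to obtain the gap-free quantity $\sqrt{\E[N_{i,m}^{(1)}]\log T}$, and then apply Jensen and Cauchy--Schwarz over arms and over the $M$ segments; this is where the $\mathcal{O}(\sqrt{T\log T})$ contribution of the pre-change piece actually originates. Without this step the stated bound does not follow.

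Second, for the post-change piece you reduce everything to ``detection delay $\times$ per-slot regret,'' but you never bound the per-slot regret during the delay window, which after an abrupt jump can be as large as the jump itself. The paper controls it by combining the global-change condition (Definition~\ref{Defn:GlobalChange}) with Assumption~\ref{assu:epsilon_upper_bound} to bound the post-change drift $\epsilon_m^{(2)}(t)$ by a quantity of the form $c_ac_u\sqrt{\log(T^3)/(2U_m)}+\cdots$, and the $U_m^{-1/2}$ factor is then cancelled against the delay via $\sqrt{U_m^{-1}}\cdot(Y_{T_{m+1}}-T_m)\le\sqrt{16K(Y_{T_{m+1}}-T_m)}$ before Cauchy--Schwarz is applied over $m$. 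Relatedly, you state that the global-change assumption ``controls the number $M$ of change points''; it does not --- $M$ is fixed by Assumption~\ref{assu:MChangePoints}, and the constant $c_a$ is used precisely to transfer the lower bound $\epsilon_m$ on the jump of the detectable arm into an upper bound on the jump of every arm, which is what makes the post-change drift bound possible. These omissions mean the proposal, as written, does not establish the claimed rate.
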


\begin{proof}
\textit{Roadmap}.
\begin{compactenum}[(i)]
\item Under the high-probability event $\cV$ defined in Lemma~\ref{lem:Detection}, the algorithm resets within $16K U_m$ steps after each changepoint $T_m$. We accordingly decompose the regret into two parts: the regret incurred under $\cV^c$ and that under $\cV$. By Remark~\ref{remark:Hoeffding's inequality}, the regret contribution from $\cV^c$ is bounded by $\mathcal{O}(1)$.
\item Conditioning on the event $\cV$, we split the interval $[X_{T_m}+1, Y_{T_{m+1}}]$ at the changepoint $T_m$. Lemma~\ref{lem:regret_gap_relation1} relates the instantaneous regret $\text{Reg}(t)$ to the gap $\Delta_{i,m}^{(1)}$. Combining this relation with the definition of drift-tolerant regret, Jensen’s inequality, and the Cauchy--Schwarz inequality yields an upper bound on the regret accumulated over $[X_{T_m}+1, T_m]$.
\item A similar analysis applies to the interval $[T_m, T_m + 16K U_m]$. Summing over all changepoints and applying the Cauchy--Schwarz inequality leads to the desired bound on the regret accumulated over the abrupt reset intervals.
\end{compactenum}
\end{proof}

\begin{theorem}[Regret bound within gradual reset intervals]\label{th:gradual}
Suppose that Assumptions~\ref{assu:monitoring_ChangePoints} and~\ref{assu:Drift-tolerant regret} hold. Let $\delta = \frac{1}{T^3}$, $b = T^{-d}$ for some $d>0$, and
$N = \mathcal{O}((bK)^{-\frac{2}{3}})$. Then, the expected regret incurred during the gradual reset intervals satisfies
\begin{align*}
&\mathbb{E}[\text{Reg}(T_{\text{gradual}})]<\mathcal{O}\left(\sqrt{(\log T)^{\frac{2}{3}}T^{2-\frac{2}{3}d}+T\log T}\right) \nonumber \\
&+\mathcal{O}\Big(T^{1-\frac{d}{3}}(\log T)^{\frac{3}{2}}\Big).
\end{align*}
\end{theorem}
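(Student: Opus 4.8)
\textbf{Proof proposal for Theorem~\ref{th:gradual}.}

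The plan is to bound the regret on the gradual reset intervals by exploiting the structure of the monitoring mechanism: within a gradual interval no abrupt change occurs, so the only resets are triggered by accumulated drift, and the key is to control (a) how much regret accrues \emph{inside} a single gradual interval between consecutive resets, and (b) how many such intervals there can be. First I would fix a generic gradual interval, say $(Y_{T_m}, X_{T_m}]$, and use the no-reset-before-$S$ estimate of Remark~\ref{remark:upperbound of Reg(s)}: the cumulative regret accumulated within that interval (with the horizon reset to the interval length) is at most $c_{\text{dt}}\big(\sum_{\Delta_i>0}\frac{\log T}{\Delta_i} + \mathbb{E}[\sum_{t}\epsilon(t)]\big)$. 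The first term is the usual $\mathcal{O}(\log T)$ UCB contribution per interval; the second term is where the gradual drift enters. Since a gradual segment satisfies $|\mu_{i,t+1}-\mu_{i,t}|\le b$ at every step, the accumulated drift $\epsilon(t)$ inside a window of length $w$ grows at most like $bw$, so $\sum_{t=1}^{w}\epsilon(t) = \mathcal{O}(bw^2)$.

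Next I would bound the number and length of gradual intervals. A gradual reset is triggered precisely when ADWIN detects a change of magnitude $\varepsilon_{\text{cut}}^\delta$; with $\delta=T^{-3}$ this cut threshold behaves like $\sqrt{\log(T^3)/w}$ over a sub-window of size $w$, so a reset happens once the accumulated drift $bw$ reaches order $\sqrt{\log T / w}$, i.e.\ once $w \asymp (\log T)^{1/3} b^{-2/3}$. This is exactly the scale the theorem's choice $N=\mathcal{O}((bK)^{-2/3})$ is tuned to. Hence each gradual interval has length $\mathcal{O}((\log T)^{1/3}b^{-2/3})$ (up to the monitoring overhead of one sub-block $KN$, which is of the same order by the choice of $N$), and therefore there are at most $\mathcal{O}\big(T / ((\log T)^{1/3}b^{-2/3})\big) = \mathcal{O}(T b^{2/3}(\log T)^{-1/3})$ gradual intervals in $[0,T]$. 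Multiplying the per-interval drift contribution $\mathcal{O}(b w^2) = \mathcal{O}(b\cdot(\log T)^{2/3}b^{-4/3}) = \mathcal{O}((\log T)^{2/3}b^{-1/3})$ by the number of intervals gives $\mathcal{O}\big(T b^{1/3}(\log T)^{1/3}\big)$; and with $b=T^{-d}$ this is $\mathcal{O}(T^{1-d/3}(\log T)^{1/3})$. Similarly, summing the $\mathcal{O}(\log T)$ UCB terms over $\mathcal{O}(T b^{2/3}(\log T)^{-1/3})$ intervals, or alternatively bounding by $\sqrt{T\log T}$ via a Cauchy--Schwarz / Jensen step over intervals of unequal length (the same device used in the proof of Theorem~\ref{th:abrupt}), yields the $\mathcal{O}(\sqrt{T\log T})$ piece; combining the $b^{1/3}$ drift term with this square-root term under a single radical, as the statement does, and absorbing the monitoring-round regret (bounded by the number of monitoring rounds, $\mathcal{O}(T/N)$ times the maximal per-round regret, which contributes the $\mathcal{O}(T^{1-d/3}(\log T)^{3/2})$ term) gives the claimed bound.

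Concretely, the steps in order: (1) partition $T_{\text{gradual}}$ into the maximal no-reset sub-intervals determined by the gradual resets and the endpoints $0,\,Y_{T_m},\,X_{T_m},\,T$; (2) on each sub-interval apply Remark~\ref{remark:upperbound of Reg(s)} to split the regret into a $\log T$ UCB term plus a $\sum\epsilon(t)$ drift term, and bound $\sum\epsilon(t)=\mathcal{O}(bw^2)$ using the per-step drift bound $b$ from Definition~\ref{defn:GradualAbrupt}; (3) show each sub-interval has length $\mathcal{O}((\log T)^{1/3}b^{-2/3})$ by arguing that ADWIN must fire once accumulated drift exceeds $\varepsilon_{\text{cut}}^\delta$, and hence that there are $\mathcal{O}(Tb^{2/3}(\log T)^{-1/3})$ sub-intervals; (4) sum over sub-intervals, using Cauchy--Schwarz/Jensen to handle the unequal lengths for the $\sqrt{T\log T}$ term; (5) add the monitoring-round overhead $\mathcal{O}(T^{1-d/3}(\log T)^{3/2})$ and simplify with $b=T^{-d}$, $N=\mathcal{O}((bK)^{-2/3})$.

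The main obstacle I anticipate is step~(3): proving a \emph{lower} bound on the length of a gradual interval — i.e.\ that ADWIN does \emph{not} reset too early during a purely gradual segment. One has to rule out spurious detections caused by estimation noise rather than real drift, which requires a Hoeffding/union-bound argument (the reason for $\delta=T^{-3}$) showing that with high probability no split of the window exceeds $\varepsilon_{\text{cut}}^\delta$ until the genuine accumulated drift $\Theta(bw)$ catches up to the shrinking threshold $\Theta(\sqrt{\log T/w})$; this is precisely the place where the scaling $w\asymp(\log T)^{1/3}b^{-2/3}$, and hence the prescribed choice of $N$, is forced, and getting the constants and the high-probability bookkeeping right (including the interaction with the monitoring-arm observations and the cross-client correlation handled by the reset) is the delicate part. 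A secondary difficulty is that, unlike the abrupt case, the relevant gaps $\Delta_i$ are not fixed within an interval because the means drift; controlling $\sum_{\Delta_i>0}\frac{\log T}{\Delta_i}$ uniformly (e.g.\ by lower-bounding the effective gap or by absorbing small-gap rounds into the $\epsilon(t)$ term, as Definition~\ref{defn:Drift-tolerant regret} is designed to permit) needs care.
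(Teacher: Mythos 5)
Your high-level architecture matches the paper's: partition $T_{\text{gradual}}$ into the maximal no-reset sub-intervals, bound each via the drift-tolerant regret decomposition (a $\log T$/UCB piece plus a drift piece), lower-bound the sub-interval length to control how many there are (this is exactly Lemma~\ref{lem:reset_number}, and you correctly flag it as the delicate Hoeffding/union-bound step), and recombine with Cauchy--Schwarz over unequal lengths to get the $\sqrt{(\log T)^{2/3}T^{2-2d/3}+T\log T}$ piece. However, there is a genuine gap in how you handle the drift term. You assert that each no-reset sub-interval has length $\Theta((\log T)^{1/3}b^{-2/3})$ --- in particular an \emph{upper} bound on its length --- because ``ADWIN must fire once the accumulated drift $bw$ reaches the threshold.'' But $b$ is only an upper bound on the per-step change (Definition~\ref{defn:GradualAbrupt}); in a slowly drifting or nearly stationary segment the accumulated drift may never reach $\varepsilon_{\text{cut}}^\delta$, no gradual reset need ever occur, and a single no-reset window can be as long as $T$. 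Your per-interval drift bound $\sum_t\epsilon(t)=\mathcal{O}(bw^2)$ then becomes $\mathcal{O}(bT^2)=\mathcal{O}(T^{2-d})$, which is superlinear for $d<1$ and destroys the claimed bound. The reasoning is also internally inconsistent: it simultaneously assumes the drift accrues at the maximal rate $b$ for the whole window (to get $bw^2$) and that the window terminates exactly when that maximal accrual hits the threshold.

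The paper closes this gap with Lemma~\ref{lem:diffenrence}, which inverts the logic: rather than upper-bounding the window length, it upper-bounds the drift \emph{itself} uniformly over any no-reset window. Conditioning on the event that no reset has occurred, and using Observation~\ref{obs:Monitoring} (the monitoring arm is sampled at least $N$ times per sub-block, so a large drift on it would have triggered a detection) together with the globally-gradual assumption to transfer the bound to all arms, one gets $\epsilon_{m,n}^{(3)}(t)\le\frac{\log T}{c_g}\bigl(2bKN+8\sqrt{\log(T^3)/(2N)}\bigr)+b\log T=\mathcal{O}((bK)^{1/3}(\log T)^{3/2})$ \emph{independently of the window length}. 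Then $\sum_t\epsilon(t)$ over all gradual intervals is at most this constant times $T$, giving the $\mathcal{O}(T^{1-d/3}(\log T)^{3/2})$ term directly --- it is the drift contribution, not a separate ``monitoring-round overhead'' as you attribute it. To repair your proof you would need to replace step~(2)'s $\mathcal{O}(bw^2)$ bound by this no-reset-implies-bounded-drift argument; note that it genuinely requires the monitoring mechanism, since without a designated arm that is guaranteed $N$ samples per sub-block an arm's drift inside a long window could go entirely unobserved.
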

\begin{proof}
\textit{Roadmap}.
\begin{compactenum}[(i)]
\item We introduce two key events: $\cZ$, under which the drift is bounded as in Lemma~\ref{lem:diffenrence}, and $\mathcal{Y}^{c}$, under which the number of resets is bounded as in Lemma~\ref{lem:reset_number}. The total regret is decomposed into contributions from $\mathcal{Z}\cap\mathcal{Y}^{c}$ and its complement $\mathcal{Z}^{c}\cup\mathcal{Y}$.

\item By Remark~\ref{remark:Hoeffding's inequality} and the definition of $F_{1}$ (see~\eqref{eq:F_1} in APPENDIX~\ref{Appe:reset_number}), the regret incurred under the event $\mathcal{Z}^{c}\cup\mathcal{Y}$ is bounded by $\mathcal{O}\left((\log T)^{\frac{1}{3}}\right)$.

\item Conditioning on $\mathcal{Z}\cap\mathcal{Y}^{c}$, each gradual segment is partitioned into sub-intervals of length at least $F_{1}b^{-\frac{2}{3}}$. For each sub-interval, we establish a relationship between the instantaneous regret $\text{Reg}(t)$ and the gap $\Delta_{i,m,n}^{(3)}$. Combining this with the definition of drift-‑tolerant regret, together with Jensen’s inequality, and the Cauchy--Schwarz inequality yields an upper bound on the regret incurred over the interval $[Y_{T_{m}},X_{T_{m}}]$.

\item Summing over all gradual segments and applying the Cauchy--Schwarz inequality completes the bound on the regret under $\mathcal{Z}\cap\mathcal{Y}^{c}$. Together with the contribution from $\mathcal{Z}^{c}\cup\mathcal{Y}$, this yields the desired regret bound over the gradual reset intervals.
    
\end{compactenum}

\end{proof}

\begin{remark}\label{remark:Suboptimal}
Combining Theorem~\ref{th:abrupt} with Theorem~\ref{th:gradual}, we obtain that the regret of our algorithm grows sublinearly with the time horizon~$T$, which implies the algorithm is asymptotically optimal.
\end{remark}

\section{Numerical Results}\label{sec:NumRes}
In this section, we evaluate the performance of the proposed \mab algorithm in terms of two key metrics: the average AoI defined in~\eqref{eq:Avg_Age_CU} and the cumulative AoI regret defined in~\eqref{eq:regret_T}.
\subsection{Simulation Setup and Parameter Configuration}
We configure the simulation parameters as follows. The time horizon spans $T=6\times10^{5}$ time slots. The network consists of $J=2$ clients, $K=3$ ANs, and $P=1$ server. Each client sends exactly one request per time slot, i.e., $\sum_{k\in[K]} b_{jp}^{(k)}(t)=1,\,\, \forall\, j \in [J],\,p \in [P],\, t \in [T]$. To evaluate algorithm robustness under varying network conditions, we consider two different sets of probabilities that the ANs fetch a fresh packet from the server:
\begin{compactenum}[(i)]
\item \textbf{Scenario 1}: $\{r_{11}=0.1,\ r_{21}=0.4,\ r_{31}=0.7\}$,
\item \textbf{Scenario 2}: $\{r_{11}=0.3,\ r_{21}=0.4,\ r_{31}=0.5\}$.   
\end{compactenum}

In Scenario $1$, the ANs have well-separated update probabilities, making the optimal AN relatively easy to identify. In contrast, Scenario $2$ has closely updated probabilities, so distinguishing the optimal AN becomes more challenging.

\subsection{Benchmark Policies}
To provide performance benchmarks, we compare \mab with several representative baseline policies:
\begin{compactenum}[(i)]
    \item \textbf{D-UCB and SW-UCB}: Classic bandit algorithms designed for non-stationary environments and adapted for decentralized decision-making \cite{garivier2008upper}.
    \item \textbf{M-D-MAMAB}: A decentralized multi-agent bandit algorithm originally proposed for caching applications \cite{8964583}.
    \item \textbf{centralized policy (Oracle)}: An ideal benchmark where a central controller has full knowledge of the expected rewards and always selects the AN with the highest expected reward at each time slot. Thus this policy provides a lower bound on achievable AoI performance.
\end{compactenum}

Note that many existing AoI-based bandit algorithms~\cite{Agingbandit,9559999,9668362} constrain rewards to be bounded in the interval $[0,1]$, which is incompatible with our setting, where rewards defined by AoI reduction are unbounded and history-dependent. Furthermore, since \mab can be viewed as a generalization of the ADR framework~\cite{JMLR} to decentralized environments with AoI-based rewards, ADR is therefore not included as a separate benchmark.

\subsection{Average AoI Performance}

\begin{figure}[htbp]
    \centering
    \begin{subfigure}[b]{0.24\textwidth}
        \centering
        \includegraphics[width=\linewidth]{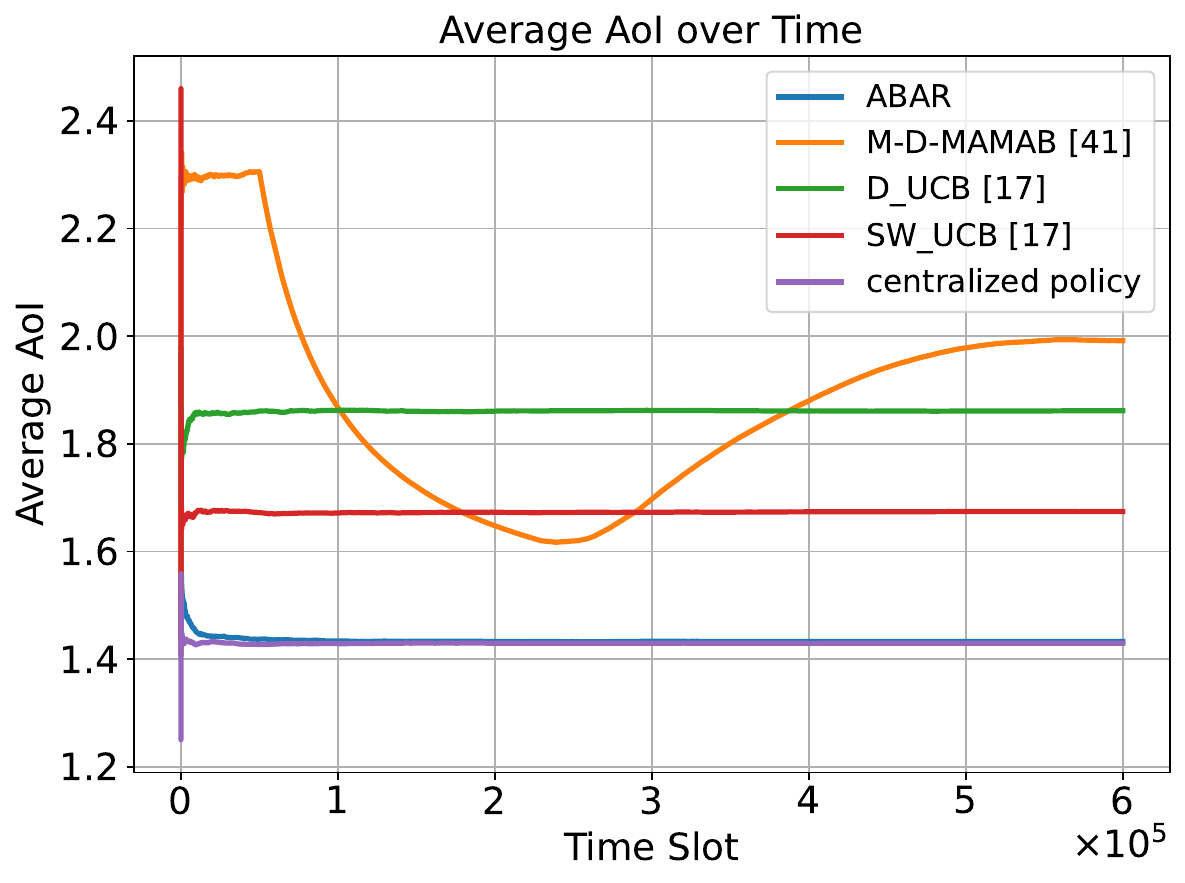}
        \caption{Scenario $1$}
        \label{fig:AoI1}
    \end{subfigure}
    \hfill
    \begin{subfigure}[b]{0.24\textwidth}
        \centering
        \includegraphics[width=\linewidth]{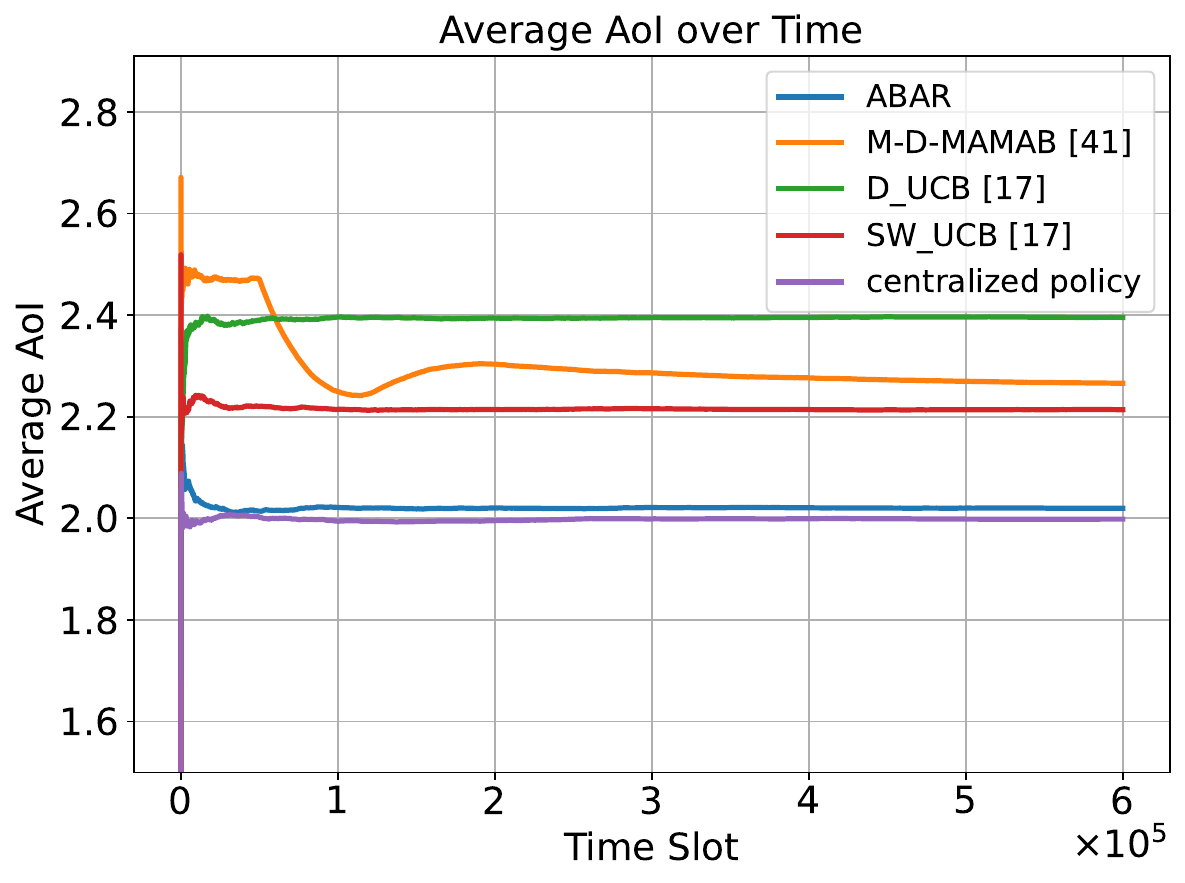}
        \caption{Scenario $2$}
        \label{fig:AoI2}
    \end{subfigure}
    \caption{Average AoI Performance Comparison}
    \label{fig:AoI}
\end{figure}

Fig.~\ref{fig:AoI} compares the evolution of the average AoI under different learning policies for two network scenarios.

As shown in Fig.~\ref{fig:AoI1}, where update probabilities of ANs are well separated, \mab rapidly converges to a low steady-state AoI that is very close to the performance centralized oracle benchmark. This result demonstrates that \mab is able to learn a near-optimal collaborative requesting policy despite operating in a fully decentralized setting and without prior knowledge of the reward statistics.

In contrast, both D-UCB and SW-UCB converge to significantly higher average AoI levels. This performance gap arises because these algorithms are not specifically designed to handle history-dependent and non-stationary AoI-based rewards. The M-D-MAMAB algorithm performs the worst, exhibiting large fluctuations and the highest average AoI. This suggests that although M-D-MAMAB supports decentralized learning, it is less effective at capturing the reward dynamics in our setting.

Fig.~\ref{fig:AoI2} illustrates the average AoI performance in Scenario $2$. In this case, distinguishing the optimal AN becomes more challenging due to the smaller differences in update probabilities. Nevertheless, \mab consistently achieves the lowest average AoI among all decentralized algorithms and remains close to that of centralized oracle. Compared with Scenario~1, the performance gap between \mab and the centralized oracle slightly increases, reflecting the greater learning difficulty in this scenario. Moreover, D-UCB and SW-UCB still converge to substantially higher average AoI levels. Notably, M-D-MAMAB exhibits pronounced instability and slower convergence speed in Scenario $2$.

\subsection{Cumulative AoI Regret Performance}

\begin{figure}[htbp]
    \centering
    \begin{subfigure}[b]{0.24\textwidth}
        \centering
        \includegraphics[width=\linewidth]{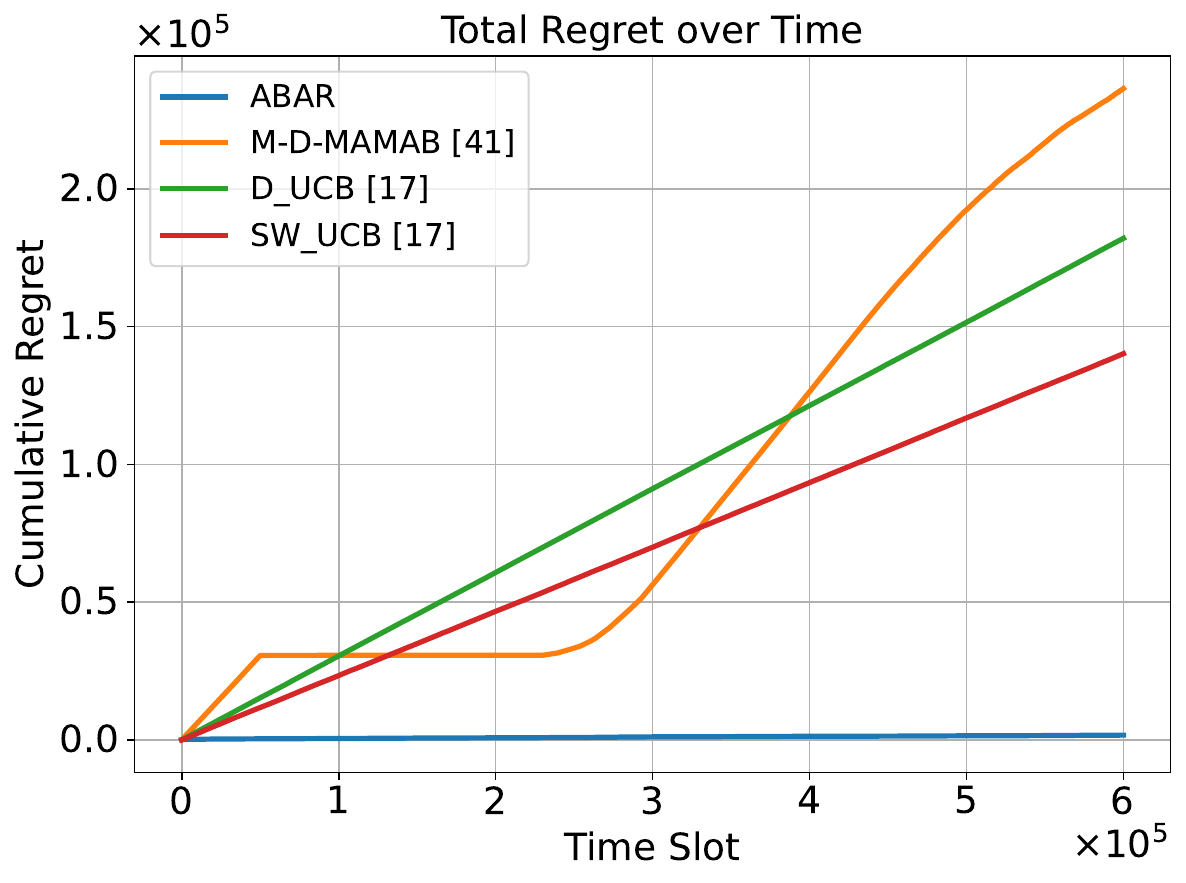}
        \caption{Scenario $1$}
        \label{fig:regret1}
    \end{subfigure}
    \hfill
    \begin{subfigure}[b]{0.24\textwidth}
        \centering
        \includegraphics[width=\linewidth]{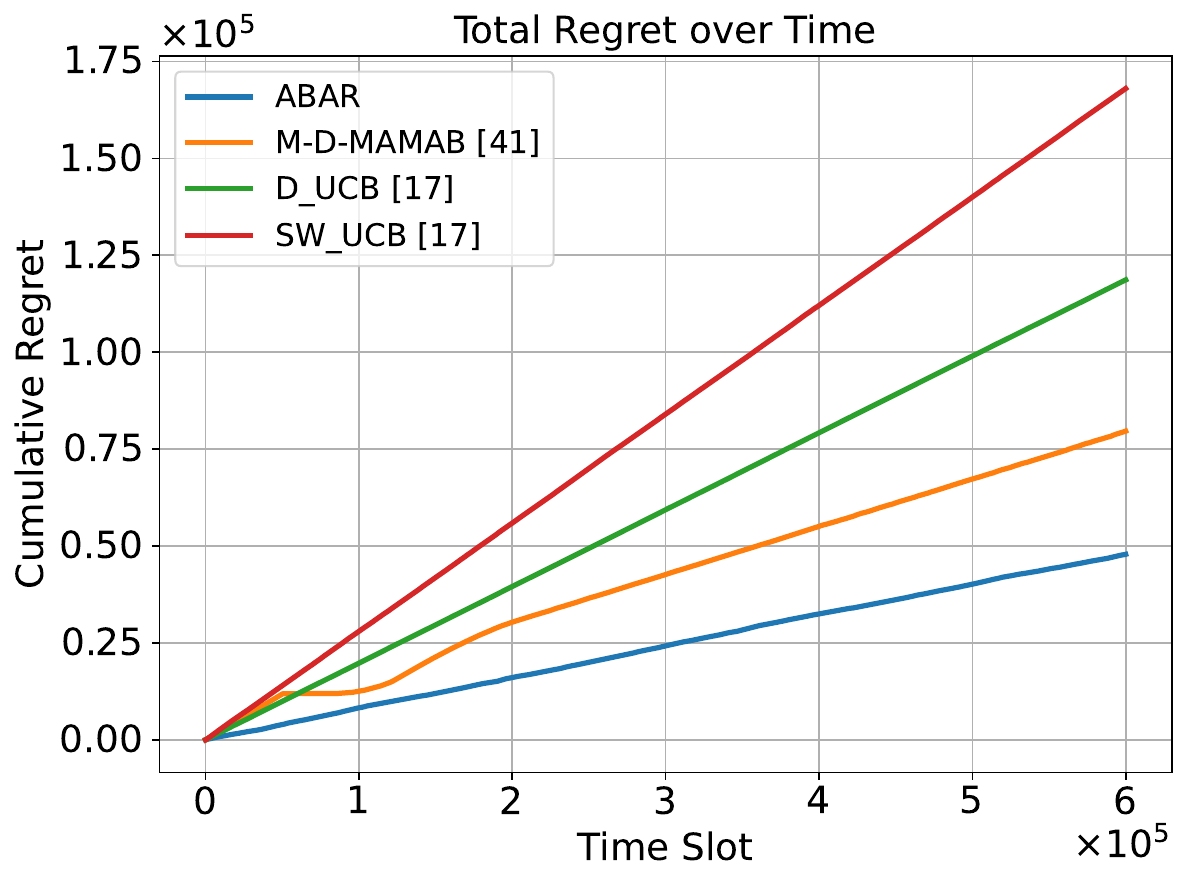}
        \caption{Scenario $2$}
        \label{fig:regret2}
    \end{subfigure}
    \caption{Cumulative AoI Regret Performance Comparison}
    \label{fig:regret}
\end{figure}

Fig.~\ref{fig:regret} illustrates the cumulative AoI regret over time, where a lower regret indicates more efficient learning. The centralized policy always selects the optimal AN and therefore achieves zero regret; it is therefore omitted from the figures.

As shown in Fig.~\ref{fig:regret1},  \mab exhibits the slowest regret growth rate in Scenario~$1$, with the cumulative regret remaining significantly small over the entire time horizon. This indicates that \mab can quickly adapt to non-stationary and history-dependent reward distributions while maintaining near-optimal performance.

In contrast, both D-UCB and SW-UCB display approximately linear regret growth, reflecting their limited ability to track evolving reward statistics. The M-D-MAMAB algorithm exhibits unstable regret behavior, characterized by with multiple changes in slope. This unstable regret growth suggests that its exploration-exploitation mechanism is not well aligned with the AoI-based reward structure in our setting.

Fig.~\ref{fig:regret2} illustrates the cumulative AoI regret in Scenario~$2$. Compared with Scenario~$1$, \mab continues to achieve the best regret performance among all decentralized algorithms, although its final cumulative regret is slightly higher due to the increased difficulty in distinguishing the optimal AN. 

We observe that the regret of \mab grows approximately linearly rather than sublinearly. This behavior can be attributed to the fact that the sublinear regret guarantee in Remark~\ref{remark:Suboptimal} relies on Assumption~\ref{assu:MChangePoints}, which assumes a limited number of change points. In Scenario~$2$, this assumption is violated, as the system can experience a large number of changes in the effective reward dynamics.

Meanwhile, D-UCB and SW-UCB continue to exhibit approximately linear regret growth, while M-D-MAMAB shows pronounced instability with multiple inflection points in its regret curve.

Overall, these results demonstrate that \mab not only achieves near-optimal long-term average AoI but also substantially reduces cumulative learning regret, confirming its effectiveness in decentralized AoI optimization problems under non-stationary and history-dependent reward dynamics.

\section{Conclusion}\label{sec:Conclusion}
In this work, we study a decentralized collaborative requesting problem aimed at minimizing the long-term average AoI in edge networks composed of multiple clients, ANs and servers, where the states of ANs are unknown to the clients. By defining the reward as the AoI reduction, we formulate this sequential decision-making task under the Aging  bandit framework. The reward process is history-dependent and influenced by the actions of other agents, exhibiting both abrupt and gradual changes in epected rewards and resulting non-stationary dynamics. 

To address these challenges, we propose the \mab algorithm. By combining adaptive windowing with periodic monitoring, \mab effectively detect changes in reward distributions and promptly discards outdated observations through reset operations. Compared with existing ADR-based framework, \mab extends the theoretical framework to more general non-stationary setting with mixed change dynamics. We further establish theoretical performance guarantees for \mab and validate its effectiveness through extensive simulations.

Several directions remain for future work: (i) extending the model to combine both content caching and service caching for joint optimization; (ii) taking task deadlines into account to better reflect the time-sensitive requirements in decentralized edge networks.

\bibliographystyle{IEEEtran}
\bibliography{references}

\clearpage

\appendices

\section{Proof of Lemma~\ref{lem:Hoeffding's inequality}}\label{Appe:LemmaHoeffding}

\begin{lemma}[Hoeffding's inequality] \label{lem:Hoeffding's inequality}
Let $p > 0$ be arbitrary and 
\begin{align}\label{eq:cX}
\cX\triangleq\bigcap_{i \in [K]}\bigcap_{W^{\prime} \in \mathcal{W}}\set{\left|\hat{\mu}_{i,W^{\prime}} - \mu_{i,W^{\prime}}\right| \le \sqrt{\frac{\log\left(T^{2+p}\right)}{2\left|(W^{\prime})^{i}\right|}}},
\end{align}
then 
\begin{align}\label{eq:ProbcX}
\Prb\left(\cX\right) \ge 1 - \frac{2K}{T^{p}}.
\end{align}
\end{lemma}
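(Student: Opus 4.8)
The plan is to establish \eqref{eq:ProbcX} by applying Hoeffding's inequality to each empirical mean $\hat\mu_{i,W'}$ individually and then taking a union bound over all arms $i\in[K]$ and all candidate windows $W'\in\mathcal W$. The starting point is that, for a fixed arm $i$ and a fixed window $W'$, the empirical mean $\hat\mu_{i,W'}$ is an average of $|(W')^i|$ reward observations (the superscript $i$ denoting the sub-multiset of pulls of arm $i$ inside $W'$), each of which we treat as an independent sample bounded in a fixed range so that Hoeffding's inequality applies with the standard sub-Gaussian constant. Setting the deviation level to $\sqrt{\log(T^{2+p})/(2|(W')^i|)}$, Hoeffding's two-sided bound gives
\begin{align*}
\Prb\!\left(\left|\hat\mu_{i,W'}-\mu_{i,W'}\right| > \sqrt{\frac{\log(T^{2+p})}{2|(W')^i|}}\right)\le 2\exp\!\left(-2|(W')^i|\cdot\frac{\log(T^{2+p})}{2|(W')^i|}\right)=\frac{2}{T^{2+p}}.
\end{align*}

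Next I would union-bound over the indices. There are $K$ arms, and the family $\mathcal W$ of windows that the algorithm ever examines is controlled by the horizon: at each of at most $T$ time steps ADWIN considers at most $T$ consecutive splits, so $|\mathcal W|\le T$ (or $O(T)$, which only changes the constant in the exponent's polynomial and is absorbed by the slack between $T^{2+p}$ and $T^{2}$). Hence
\begin{align*}
\Prb(\cX^c)\le \sum_{i\in[K]}\sum_{W'\in\mathcal W}\frac{2}{T^{2+p}}\le K\cdot T\cdot\frac{2}{T^{2+p}}=\frac{2K}{T^{1+p}}\le\frac{2K}{T^{p}},
\end{align*}
which rearranges to \eqref{eq:ProbcX}. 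The exponent $2+p$ (rather than $1+p$) in the definition of $\cX$ is precisely the budget that pays for the $|\mathcal W|\le T$ factor in the union bound, so the bookkeeping closes cleanly.

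The main obstacle is not the computation but the justification that Hoeffding's inequality is legitimately applicable here: the rewards $x^{(k)}_{jp}(t)$ in this paper are \emph{history-dependent}, \emph{correlated across clients}, and a priori \emph{unbounded} (they are AoI reductions). So the step that needs care is arguing that, conditioned on the relevant event or on the sub-window $W'$ being one in which arm $i$ was pulled, the observations entering $\hat\mu_{i,W'}$ behave like independent bounded samples with the stated mean $\mu_{i,W'}$ — either by invoking an effective boundedness argument (AoI is bounded with high probability on the horizon, e.g. by $t+1$, which is polynomial in $T$ and thus compatible with the $\log(T^{2+p})$ scaling), or by appealing to a martingale/Azuma version of the concentration bound adapted to the reset-and-window structure. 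I would flag that this is where the extension beyond the stationary ADR setting of \cite{JMLR} bites, and handle it by conditioning appropriately so that the per-window summands form a bounded martingale difference sequence; once that is in place, the union bound above goes through verbatim.
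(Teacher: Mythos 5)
Your proposal follows the paper's proof essentially verbatim: a per-pair Hoeffding bound at deviation level $\sqrt{\log(T^{2+p})/(2|(W')^{i}|)}$ yielding failure probability $2/T^{2+p}$, followed by a union bound over arms and windows. The one concrete slip is your count of the window family: you state that ADWIN examines at most $T$ splits at each of at most $T$ time steps and then conclude $|\mathcal{W}|\le T$, but that reasoning gives $|\mathcal{W}|\le T^{2}$ (the paper counts $1+2+\dots+T\le T^{2}$ candidate sub-windows over the horizon). With the correct count the union bound reads $2KT^{2}/T^{2+p}=2K/T^{p}$, which matches the lemma exactly and with no slack — so the exponent $2+p$ is the budget for a $T^{2}$-sized union bound, not a $T$-sized one as you claim, and your intermediate bound $2K/T^{1+p}$ is not actually available. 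The conclusion is unaffected, but the bookkeeping should be corrected. On the applicability of Hoeffding to unbounded, history-dependent rewards: you are right to flag this, and your instinct to condition and invoke a martingale version is more than the paper does — the paper simply assumes $\mu_{i,s}\in[0,1]$ for the lemma and relegates the extension to bounded ranges $[0,\alpha]$ with $\alpha<\sqrt{(2+p)/2}$ to Remark~\ref{remark:Hoeffding's inequality}, without a martingale argument. So that part of your discussion identifies a real gap in the paper's treatment rather than a gap in your own proof.
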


\begin{remark}\label{remark:Hoeffding's inequality}
    The above lemma is based on the assumption that $\mu_{i,s} \in [0,1]$ where $s$ is any round of $W(t)$. 
    
    Then we assume that $\mu_{i,s} \in [0,\alpha]$ where $\alpha>0$. For each fixed $W^{\prime}$ and arm $i$, we can use Hoeffding's inequality to control the estimation error:
    
    \begin{align*}
    \Prb\left(\left|\hat{\mu}_{i,W^{\prime}}-\mu_{i,W^{\prime}}\right|\ge\varepsilon\right)\le 2\exp\left(\frac{-2\left|\left(W^{\prime}\right)^{i}\right|\varepsilon^{2}}{\alpha^{2}}\right).
    \end{align*}

    Similarly, we can get the following conclusion:
    \begin{align*}
    \Prb(\cX^c)\le&\Prb\set{\left|\hat{\mu}_{i,W^{\prime}} - \mu_{i,W^{\prime}}\right| > \sqrt{\frac{\log\left(T^{2+p}\right)}{2\left|(W^{\prime})^{i}\right|}}} \\
    &\le 2 \cdot T^{2} \cdot K \cdot \frac{1}{T^{\frac{2+p}{\alpha^{2}}}} = \frac{2K}{T^{\frac{2+p}{\alpha^{2}}-2}}.
    \end{align*}

    Thus we need to guarantee that $\frac{2+p}{\alpha^{2}}-2>0$, i.e. $\alpha<\sqrt{\frac{2+p}{2}}$. In other words, in order for event $\cX$ to be true with high probability, $\mu_{i,s}$ needs to satisfy $\mu_{i,s}<\sqrt{\frac{2+p}{2}}$.
\end{remark}

\begin{proof}
According to \eqref{eq:cX}, by De Morgan’s Laws, for any $i\in[K]$ and $W'\in\cW$, we obtain:
\begin{align}\label{eq:DeMorgan}
\cX^c =& \bigcup_{i \in [K]}\bigcup_{W^{\prime} \in \mathcal{W}}\set{\left|\hat{\mu}_{i,W^{\prime}} - \mu_{i,W^{\prime}}\right| > \sqrt{\frac{\log\left(T^{2+p}\right)}{2\left|(W^{\prime})^{i}\right|}}}\nonumber\\
\supset&\set{\left|\hat{\mu}_{i,W^{\prime}} - \mu_{i,W^{\prime}}\right| > \sqrt{\frac{\log\left(T^{2+p}\right)}{2\left|(W^{\prime})^{i}\right|}}}.
\end{align}

For each fixed $W^{\prime}$ and arm $i$, we can use Hoeffding's inequality to control the estimation error:
\begin{align*}
\Prb\left(\left|\hat{\mu}_{i,W^{\prime}}-\mu_{i,W^{\prime}}\right|\ge\varepsilon\right)\le 2\exp\left(-2\left|\left(W^{\prime}\right)^{i}\right|\varepsilon^{2}\right).
\end{align*}
Let 
\begin{align*}
\varepsilon=\sqrt{\frac{\log(1/\delta)}{2\left|\left(W^{\prime}\right)^{i}\right|}}=\sqrt{\frac{\log\left(T^{2+p}\right)}{2\left|\left(W^{\prime}\right)^{i}\right|}}.
\end{align*}
we obtain:
\begin{align*}
2\exp\left(-2\left|\left(W^{\prime}\right)^{i}\right|\varepsilon^{2}\right) = \frac{2}{T^{2+p}},
\end{align*}

Note that the size of the window set satisfies $|\cW| \le T^{2}$\footnote{If $t=1$, the number of windows is 1; if $t=2$, it is 2; $\cdots$; if $t=T$, it is $T$. Therefore, $|\cW| \le \frac{T(T-1)}{2} \le T^{2}$.}. Thus, by the union bound over all windows and all arms:
\begin{align*}
\Prb(\cX^c)\le&\Prb\set{\left|\hat{\mu}_{i,W^{\prime}} - \mu_{i,W^{\prime}}\right| > \sqrt{\frac{\log\left(T^{2+p}\right)}{2\left|(W^{\prime})^{i}\right|}}} \\
&\le 2 \cdot T^{2} \cdot K \cdot \frac{1}{T^{2+p}} = \frac{2K}{T^{p}},
\end{align*}
which implies 
\begin{align*}
\Prb\left(\cX\right) \ge 1 - \frac{2K}{T^{p}}.
\end{align*}
\end{proof}

\section{Proof of Lemma~\ref{lem:diffenrence}}
\begin{definition}[Globally gradual changes, Assumption~22 in \cite{JMLR}]\label{Defn:GlobalGradualChange}
The environment is globally gradual with constant $c_{g}\in (0, 1]$ if for all $i,j \in [K]$, and any slots $t,s$ that belong to a gradual segment,
\begin{equation}
|\mu_{i,t} - \mu_{i,s}| \ge c_g |\mu_{j,t} - \mu_{j,s}|.    
\end{equation}
\end{definition}

\begin{lemma}\label{lem:diffenrence}
Suppose the environment is globally gradual with constant $c_{g}$ (Definition~\ref{Defn:GlobalGradualChange}).
Then, with probability at least $1 - \frac{2K}{T}$, the following holds: for any round $t \in [T]$, any arm $i \in [K]$,
and any two rounds $s, s' \in W(t)$ with window size $|W(t)| > N$, where $N$ is a system parameter in our algorithm,
\begin{align}\label{eq:Diffmus}
&|\mu_{i,s} - \mu_{i,s'}| \le\nonumber\\
&\frac{\log T}{c_{g}}\left( 2bKN + 8\sqrt{\frac{\log (T^{3})}{2N}} \right) + b\log T .
\end{align}
\end{lemma}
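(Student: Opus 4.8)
\emph{Overall approach.} The statement bounds the \emph{pointwise} drift of a single arm inside the live ADWIN window, and I would obtain it by combining four ingredients already available: (i) a uniform Hoeffding control of empirical versus true window‑averaged means; (ii) the fact that ADWIN has not triggered on $W(t)$; (iii) the monitoring‑consistency guarantee of Observation~\ref{obs:Monitoring}; and (iv) the globally‑gradual comparison of Definition~\ref{Defn:GlobalGradualChange}, which transfers any drift bound from a well‑monitored arm to the target arm at the cost of a factor $1/c_g$. First I would condition on the event $\cX$ of Lemma~\ref{lem:Hoeffding's inequality} with $p=1$, which holds with probability at least $1-\tfrac{2K}{T}$ — this already delivers the probability claimed in the statement — and on $\cX$ replace every empirical mean $\hat\mu_{k,W'}$ by the true window‑averaged mean $\mu_{k,W'}$ up to $\sqrt{\log(T^3)/(2|(W')^k|)}$ for every arm $k$ and every $W'\in\cW$.

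\emph{From ADWIN non‑detection to near‑constant averaged means.} Because $W(t)$ is the current window with $|W(t)|>N$, no change has been flagged, so by Definition~\ref{defn:Detection} every consecutive split $W(t)=W_1\cup W_2$ obeys $|\hat\mu_{k,W_1}-\hat\mu_{k,W_2}|<\varepsilon_{\text{cut}}^\delta$ for each arm $k$ with enough pulls on both sides; with the previous step this gives $|\mu_{k,W_1}-\mu_{k,W_2}|\le 2\varepsilon_{\text{cut}}^\delta$. The point is that $W(t)$ starts at the beginning of a (re‑indexed) block schedule, so it meets at most $L=\lceil\log_2(|W(t)|/(KN)+1)\rceil=\mathcal{O}(\log T)$ blocks, and inside each such block Observation~\ref{obs:Monitoring} supplies a fixed arm pulled at least $N$ times in \emph{every} subblock of that block. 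For such a monitoring arm $m$ and any round $\tau$ inside one of its subblocks, the Hoeffding bound together with the fact that the mean moves by at most $b$ over the $KN$ slots of a subblock (Definition~\ref{defn:GradualAbrupt}) yields $|\mu_{m,\tau}-\hat\mu_{m,\text{that subblock}}|\le bKN+\sqrt{\log(T^3)/(2N)}$.

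\emph{Chaining across the $\mathcal{O}(\log T)$ blocks.} Take $s,s'\in W(t)$ in a common gradual segment, lying in blocks $l_s\le l_{s'}$, and build a chain from $\mu_{i,s}$ to $\mu_{i,s'}$ that hops once per block boundary, using at each boundary the arm the algorithm monitors on both of its sides — namely $i^{(l)}$, pulled $N$ times in the last subblock of block $l$ and $N$ times in every subblock of block $l+1$ according to Algorithm~\ref{alg:ABAR} (block $1$, of length $KN$, is handled directly by a single $bKN$ drift term). A single hop consists of: applying Definition~\ref{Defn:GlobalGradualChange} to pass from the increment of arm $i$ to that of $i^{(l)}$ (factor $1/c_g$); converting each endpoint of that increment to the corresponding subblock empirical mean via the previous step (two $bKN$ terms and Hoeffding terms); and closing the loop with the $2\varepsilon_{\text{cut}}^\delta$ slack obtained above from ADWIN non‑detection, applied to the split placed exactly at the block boundary where both sides carry a fully monitored subblock so that $\varepsilon_{\text{cut}}^\delta\le 2\sqrt{\log(T^3)/(2N)}$. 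Collecting constants, each of the $\le L$ hops costs at most $\tfrac{1}{c_g}\bigl(2bKN+8\sqrt{\log(T^3)/(2N)}\bigr)$, and a residual $b\log T$ absorbs the per‑hop additive $b$ and the moves of $s,s'$ to their anchor subblocks; summing over the hops gives exactly~\eqref{eq:Diffmus}.

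\emph{Main obstacle.} The delicate part is the bookkeeping of the chain: one must hop only across the $\mathcal{O}(\log T)$ \emph{block} boundaries and never across the (possibly $\Theta(T/(KN))$ many) subblocks, which forces every ADWIN split used above to sit on a subblock boundary so the within‑window drift is always measured over a single subblock of length $KN$ rather than over an exponentially long block; and one must verify that the two $1/c_g$ transfers per hop (out of arm $i$ and back) compose linearly rather than multiplicatively, and that the monitoring arm really straddles each boundary as the schedule in Algorithm~\ref{alg:ABAR} claims. Once these points are settled, the remaining estimates are routine concentration and triangle‑inequality manipulations.
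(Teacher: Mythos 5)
Your proposal follows essentially the same route as the paper's proof: condition on the Hoeffding event of Lemma~\ref{lem:Hoeffding's inequality} with $p=1$, use Observation~\ref{obs:Monitoring} to obtain a per-block monitoring arm whose subblock averages are pinned down by ADWIN non-detection, transfer to arm $i$ via the globally gradual constant $c_g$, and chain across the $\mathcal{O}(\log T)$ blocks with a per-block cost of $\tfrac{1}{c_g}\bigl(2bKN+8\sqrt{\log(T^3)/(2N)}\bigr)$ plus an additive $b$ absorbed into $b\log T$. The bookkeeping concerns you flag are exactly what the paper resolves (and only one $1/c_g$ transfer per block is needed, not two), so the argument is correct and matches the paper's.
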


\begin{proof}
\textit{Roadmap}.
\begin{compactenum}[(i)]
    \item Under the assumption that no reset occurs up to block~$l$, Observation~\ref{obs:Monitoring} ensures that each subblock contains at least $N$ samples of some arm $i_{l}$. This allows us to bound 
    $\left|\hat{\mu}_{i_{l},W_{(l,c)}} - \hat{\mu}_{i_{l},W_{:(l,c)}}\right|$ and $\left|\hat{\mu}_{i_{l},W_{(l,1):(l,c)}} - \hat{\mu}_{i_{l},W_{:(l,1)}}\right|$.
    By applying the triangle inequality, we further obtain an upper bound on the difference between the estimated reward of any subblock~$(l,c)$ and those of the first or last subblocks within block~$l$.

    \item Combining these bounds with Hoeffding's inequality and the fact that the expected reward moves by at most $bKN$ within each subblock, we can use a recursive approach to obtain the upper bound of the difference between the expected rewards of any two rounds in the gradual segment.
\end{compactenum}

\subsection{Block and Subblock Decomposition}

In a gradual segment, we divide the rounds into blocks and subblocks. 
For each $l = 1, 2, \dots$, the $l$-th block is partitioned into $2^{\,l-1}$ subblocks. 
We use a tuple $(l,c)$, where $c = 1, 2, \dots, 2^{\,l-1}$, to denote the $c$-th subblock of the $l$-th block. 
Specifically, subblock $W_{(l,c)}$ corresponds to the rounds
\begin{align*}
\left(KN(2^{\,l-1} + c - 2) + 1, \; \dots,\; KN(2^{\,l-1} + c - 1)\right),
\end{align*}
counted after the most recent reset. We write $\underline{t}_{l}$ and $\bar{t}_{l}$ for the first and last rounds of the $l$-th block:
\begin{align*}
\underline{t}_{l} = KN(2^{\,l-1} - 1) + 1, 
\quad 
\bar{t}_{l} = KN(2^{\,l} - 1).
\end{align*}
For convenience, we introduce two aggregate windows:
\begin{compactenum}[(i)]
\item $W_{:(l,c)}$: the union of all subblocks preceding $W_{(l,c)}$ (excluding $W_{(l,c)}$ itself);
\item $W_{(l,c):(l,c')}$ for $c < c'$: the joint window consisting of consecutive subblocks $W_{(l,c)}, W_{(l,c+1)}, \dots, W_{(l,c'-1)}$.
\end{compactenum}

\subsection{Bounding empirical mean differences within a block}

Fix an arbitrary $l \in \N$. Observation~\ref{obs:Monitoring} implies the following: Assume that no reset occurred up to the $l$-th block. There exists an arm that is drawn at least $N$ times for each subblock $c = 1, 2, \dots, 2^{l-1}$ in the $l$-th block. Moreover, this arm is drawn at least $N$ times in the final subblock of the $(l-1)$-th block. Thus, there exists $i_{l}$ such that for any $l \in \N$ and $c \in [2^{l-1}]$,
\begin{align}
\left|\hat{\mu}_{i_{l},W_{(l,c)}} - \hat{\mu}_{i_{l},W_{:(l,c)}}\right| \le& \sqrt{\frac{\log(T^3)}{2|W_{(l,c)}|^{i_{l}}}}+\sqrt{\frac{\log(T^3)}{2|W_{:(l,c)}|^{i_{l}}}}  \nonumber \\
\le& 2\sqrt{\frac{\log(T^{3})}{2N}}\label{Hoeffding_1}
\end{align}
and
\begin{align}
&\left|\hat{\mu}_{i_{l},W_{(l,1):(l,c)}} - \hat{\mu}_{i_{l},W_{:(l,1)}}\right|\nonumber\\
\le& \sqrt{\frac{\log(T^{3})}{2|W_{(l,1):(l,c)}|^{i_{l}}}}+\sqrt{\frac{\log(T^{3})}{2|W_{:(l,1)}|^{i_{l}}}} \nonumber \\
\le& 2\sqrt{\frac{\log(T^{3})}{2N}},\label{Hoeffding_2}
\end{align}
otherwise a reset should occur. Then, for any $l \ge 2$ and $2\le c \le 2^{l-1}$ we have the expression as ~\eqref{eq:upper_bound}.

\begin{figure*}[!t]
\begin{align}
&\left|\hat{\mu}_{i_{l},W_{(l,c)}}-\hat{\mu}_{i_{l},W_{(l,1)}}\right|\le\left|\hat{\mu}_{i_{l},W_{:(l,c)}}-\hat{\mu}_{i_{l},W_{(l,1)}}\right|+\left|\hat{\mu}_{i_{l},W_{(l,c)}}-\hat{\mu}_{i_{l},W_{:(l,c)}}\right|\nonumber\\
&\le\left|\hat{\mu}_{i_{l},W_{:(l,c)}}-\hat{\mu}_{i_{l},W_{(l,1)}}\right|+2\sqrt{\frac{\log\left(T^{3}\right)}{2N}} \quad \text{(by~\eqref{Hoeffding_1})} \nonumber \\
&=\left|\frac{N_{i_{l},W_{(l,1):(l,c)}}\hat{\mu}_{i_{l},W_{(l,1):(l,c)}}+\left(N_{i_{l},W_{:(l,c)}}-N_{i_{l},W_{(l,1):(l,c)}}\right)\hat{\mu}_{i_{l},W_{:(l,1)}}}{N_{i_{l},W_{:(l,c)}}}-\hat{\mu}_{i_{l},W_{(l,1)}}\right|+2\sqrt{\frac{\log\left(T^{3}\right)}{2N}}\nonumber \\
&\le2\sqrt{\frac{\log\left(T^{3}\right)}{2N}}+\left|\frac{N_{i_{l},W_{(l,1):(l,c)}}\left(\hat{\mu}_{i_{l},W_{:(l,1)}}-\hat{\mu}_{i_{l},W_{(l,1):(l,c)}}\right)}{N_{i_{l},W_{:(l,c)}}}\right|\nonumber \\
&+ \left|\frac{N_{i_{l},W_{(l,1):(l,c)}}\hat{\mu}_{i_{l},W_{:(l,1)}}+\left(N_{i_{l},W_{:(l,c)}}-N_{i_{l},W_{(l,1):(l,c)}}\right)\hat{\mu}_{i_{l},W_{:(l,1)}}}{N_{i_{l},W_{:(l,c)}}}-\hat{\mu}_{i_{l},W_{(l,1)}}\right|\quad \text{(Triangle Inequality)} \nonumber \\
&\le\left|\frac{N_{i_{l},W_{(l,1):(l,c)}}\hat{\mu}_{i_{l},W_{:(l,1)}}+\left(N_{i_{l},W_{:(l,c)}}-N_{i_{l},W_{(l,1):(l,c)}}\right)\hat{\mu}_{i_{l},W_{:(l,1)}}}{N_{i_{l},W_{:(l,c)}}}-\hat{\mu}_{i_{l},W_{(l,1)}}\right|+4\sqrt{\frac{\log\left(T^{3}\right)}{2N}} \quad \text{(by~\eqref{Hoeffding_2})}\nonumber \\
&=\left|\hat{\mu}_{i_{l},W_{:(l,1)}}-\hat{\mu}_{i_{l},W_{(l,1)}}\right|+4\sqrt{\frac{\log\left(T^{3}\right)}{2N}}\le 6\sqrt{\frac{\log\left(T^{3}\right)}{2N}}.\quad \text{(by~\eqref{Hoeffding_1})}\label{eq:upper_bound}
\end{align}
\vspace{0.5em} 
\noindent\rule{\textwidth}{0.4pt} 
\end{figure*}

Also, for $c=1$, \eqref{eq:upper_bound} is trivial. For $l=1$, it is also trivial since $c=1$ must hold from
$c\le 2^{l-1}$. By following the same discussion, we also have
\begin{align}
|\hat{\mu}_{i_{l}, W_{(l,c)}}-\hat{\mu}_{i_{l}, W_{(l, 2^{l-1})}}|\le 6\sqrt{\frac{\log\left(T^{3}\right)}{2 N}}
\end{align}

\subsection{Bounding reward differences over the gradual segment}

By Lemma~\ref{lem:Hoeffding's inequality} with $p=1$ we have
\begin{align}\label{mean_estimation}
|\mu_{i_{l}, W_{(l,c)}}-\hat{\mu}_{i_{l}, W_{(l, c)}}|\le \sqrt{\frac{\log\left(T^{3}\right)}{2 N}}
\end{align}
for any $l \in \mathbb{N}$ and $c \in [2^{l-1}]$ with probability at least $1-\frac{2K}{T}$. Using the fact that $\mu_t$ will not move more than $bKN$ within a subblock of size $KN$, we get the following conclusion:
\begin{align}\label{eq:round_in_subblock}
|\mu_{i_{l}, W_{(l,c)}}-\mu_{i_{l}, t}|\leq bKN,\quad t \in W_{(l,c)}.
\end{align}
We let $s$-th round belong to the subblock $W_{(l,c)}$ and $s^{\prime}$-th round belong to the subblock $W_{(l^{\prime},c^{\prime})}$. Here, we assume without loss of generality that $s<s^{\prime}$. From \eqref{eq:upper_bound}, \eqref{mean_estimation}, and \eqref{eq:round_in_subblock}, we have the conclusion shown as in \eqref{eq:anytime_upper_bound}.
\begin{figure*}[!t]
\begin{align}
&|\mu_{i,s}-\mu_{i,s^{\prime}}|\le|\mu_{i,s}-\mu_{i,\bar{t}_{l}}|+|\mu_{i,\bar{t}_{l}}-\mu_{i,\underline{t}_{l+1}}|+|\mu_{i,\underline{t}_{l+1}}-\mu_{i,s^{\prime}}| \le|\mu_{i,s}-\mu_{i,\bar{t}_{l}}|+b+|\mu_{i,\underline{t}_{l+1}}-\mu_{i,s^{\prime}}| \nonumber \\
&\le\frac{1}{c_{g}}|\mu_{i_{l},s}-\mu_{i_{l},\bar{t}_{l}}|+b+|\mu_{i,\underline{t}_{l+1}}-\mu_{i,s^{\prime}}|\quad \text{(Globally Gradual Changes)}\nonumber \\
&\le\frac{1}{c_{g}}(|\mu_{i_{l},s}-\mu_{i_{l},W_{(l,c)}}|+|\mu_{i_{l},W_{(l,c)}}-\mu_{i_{l},W_{(l,2^{l-1})}}|+|\mu_{i_{l},W_{(l,2^{l-1})}}-\mu_{i_{l},\bar{t}_{l}}|)+b+|\mu_{i,\underline{t}_{l+1}}-\mu_{i,s^{\prime}}|  \nonumber \\
&\le\frac{1}{c_{g}}(|\mu_{i_{l},W_{(l,c)}}-\mu_{i_{l},W_{(l,2^{l-1})}}|+2bKN)+b+|\mu_{i,\underline{t}_{l+1}}-\mu_{i,s^{\prime}}| \quad\text{(by~\eqref{eq:round_in_subblock})} \nonumber \\
&\le\frac{1}{c_{g}}(|\mu_{i_{l},W_{(l,c)}}-\hat{\mu}_{i_{l},W_{(l,c)}}|+|\hat{\mu}_{i_{l},W_{(l,c)}}-\hat{\mu}_{i_{l},W_{(l,2^{l}-1)}}|+|\mu_{i_{l},W_{(l,2^{l}-1)}}-\hat{\mu}_{i_{l},W_{(l,2^{l}-1)}}|+2bKN)+b+|\mu_{i,\underline{t}_{l+1}}-\mu_{i,s^{\prime}}|\nonumber \\
&\le\frac{1}{c_{g}}\left(8\sqrt{\frac{\log(T^{3})}{2N}}+2bKN\right)+b+|\mu_{i,\underline{t}_{l+1}}-\mu_{i,s^{\prime}}|\quad\text{(by~\eqref{eq:upper_bound},~\eqref{mean_estimation})}\label{eq:anytime_upper_bound}
\end{align}
\vspace{0.5em} % 上下间距可调
\noindent\rule{\textwidth}{0.4pt} % 横线
\end{figure*}

Similar for \eqref{eq:anytime_upper_bound}, we can get the following conclusion:
\begin{align}
            &|\mu_{i,\underline{t}_{l+1}}-\mu_{i,s^{\prime}}|\nonumber \\
            &\leq|\mu_{i,\underline{t}_{l+1}}-\mu_{i,\bar{t}_{l+1}}|+|\mu_{i,\bar{t}_{l+1}}-\mu_{i,\underline{t}_{l+2}}|+|\mu_{i,\underline{t}_{l+2}}-\mu_{i,s^{\prime}}|  \nonumber \\
            &\leq|\mu_{i,\underline{t}_{l+1}}-\mu_{i,\bar{t}_{l+1}}|+b+|\mu_{i,\underline{t}_{l+2}}-\mu_{i,s^{\prime}}|  \nonumber \\
            &\leq\frac{1}{c_{g}}|\mu_{i_{l},\underline{t}_{l+1}}-\mu_{i_{l},\bar{t}_{l+1}}|+b+|\mu_{i,\underline{t}_{l+2}}-\mu_{i,s^{\prime}}|  \nonumber \\
            &\leq\frac{1}{c_{g}}(|\mu_{i_{l},{W}_{(l+1,1)}}-\mu_{i_{l},{W}_{(l+1,2^{l})}}|+2b  KN)+b\nonumber \\
            &+|\mu_{i,\underline{t}_{l+2}}-\mu_{i,s^{\prime}}|  \nonumber \\
            &\leq\frac{1}{c_{{g}}}(8\sqrt{\frac{\log(T^{3})}{2N}}+2bKN)+b+|\mu_{i,\underline{t}_{l+2}}-\mu_{i,s^{\prime}}|
\label{eq:mean_diffrence}
\end{align}
By recursively applying the inequality in~\eqref{eq:mean_diffrence} for indices $l,l+1,l+2,\dots,l^{\prime}$, we have
\begin{align}\label{eq:Diffmusl}
|\mu_{i, s}-\mu_{i, s^{\prime}}|\le& \frac{l^{\prime}-l+1}{c_{g}}(8 \sqrt{\frac{\log (T^{3})}{2 N}}+2bKN)\nonumber\\
+&b(l^{\prime}-l).
\end{align}
Substituting the fact that $l^{\prime}\le \log T$ to \eqref{eq:Diffmusl}, we obtain \eqref{eq:Diffmus}.

In words, the difference between the mean rewards of any two windows within the same gradual segment 
is upper bounded by a term that grows logarithmically with $T$.
\end{proof}

\section{Proof of Lemma~\ref{lem:Detection}}
\begin{lemma}[Detection Times for Change Points]\label{lem:Detection}
Let Assumptions~\ref{assu:MChangePoints} and \ref{assu:monitoring_ChangePoints} hold, and $\mathcal{T}_d = \{Y_{T_{2}},Y_{T_{3}}, \dots, Y_{T_{M}},Y_{T_{M+1}}\}$ denote the set of detection times of change points where $Y_{t}$ be in Definition~\ref{defn:ResetTime}. Let $\cT_c$ be in Assumption~\ref{assu:MChangePoints} and all change points are detectable (Definition~\ref{defn:Detectability}). Define:
\begin{align*}
\cV = \{ \forall m \in [M], \, 0 \le Y_{T_{m+1}} - T_m \le 16K U_{m} \}. 
\end{align*}
Under the conditions that $\delta = \frac{1}{T^3}$, $N \ge 16 U_{m}$,$\frac{T_{m} - X_{T_m}}{2}\ge KN$ holds for all $m$, we have
\begin{align*}
\Prb(\cV) \ge 1-\frac{2K}{T}.
\end{align*}
\end{lemma}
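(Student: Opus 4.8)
The plan is to establish $\Prb(\cV)\ge 1-\tfrac{2K}{T}$ by showing that on the high-probability Hoeffding event $\cX$ (Lemma~\ref{lem:Hoeffding's inequality} with $p=1$, which has probability at least $1-\tfrac{2K}{T}$), every change point $T_m$ is detected within $16KU_m$ steps; the probabilistic content is then entirely absorbed into $\cX$, and the rest is a deterministic argument. So first I would fix $m$ and condition on $\cX$, and set up the window structure: after the most recent gradual reset $X_{T_m}$, time is partitioned into subblocks of length $KN$ as in the proof of Lemma~\ref{lem:diffenrence}. The hypothesis $\tfrac{T_m-X_{T_m}}{2}\ge KN$ guarantees that at least one full subblock of ``old'' data (pre-$T_m$) is available, and detectability condition (ii), $T_m-X_{T_m}\ge 32KU_m$, together with $N\ge 16U_m$, ensures there is enough room both before and after $T_m$ to form two sub-windows whose sizes are controlled from below by (a constant times) $U_m$ or $N$.

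Next I would carry out the core estimate. By Assumption~\ref{assu:monitoring_ChangePoints}, one of the two monitoring arms $j\in\{i^{(l)},i^{(l-1)}\}$ experiences an abrupt jump at $T_m$, and by Observation~\ref{obs:Monitoring} that arm is pulled at least $N$ times in each relevant subblock, so its empirical means on a pre-$T_m$ window $W_1$ and a post-$T_m$ window $W_2$ are each within $\sqrt{\log(T^3)/(2|W_i|^{j})}$ of the corresponding true means on the event $\cX$. Writing
\begin{align*}
|\hat\mu_{j,W_1}-\hat\mu_{j,W_2}|\ge |\mu_{j,T_m}-\mu_{j,T_m+1}| - (\text{drift within }W_1) - (\text{drift within }W_2) - (\text{two Hoeffding radii}),
\end{align*}
I would lower-bound the true gap by $\epsilon_m$ (Definition~\ref{defn:Detectability}), bound the within-window drift by $bKN$ per subblock (the mean moves at most $b$ per slot, and by global-change / globally-gradual control the jumping arm's drift dominates up to the constant $c_a$ or $c_g$), pick up the extra additive $b$ from crossing one subblock boundary, and bound the Hoeffding radii by $\sqrt{\log(T^3)/(2U_m)}+\sqrt{\log(T^3)/(2N)}$ once the sub-windows have size $\gtrsim U_m$ and $\gtrsim N$ respectively. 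Detectability condition (i) is precisely calibrated so that this lower bound exceeds $\varepsilon_{\mathrm{cut}}^\delta$ with $\delta=1/T^3$; hence the ADWIN test in Definition~\ref{defn:Detection} fires, i.e.\ $Y_{T_{m+1}}-T_m$ cannot exceed the number of slots needed to accumulate these windows, which the constants $N\ge 16U_m$ and the subblock length $KN$ make at most $16KU_m$. I would also note $Y_{T_{m+1}}-T_m\ge 0$ is immediate since detection strictly follows the change.

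Finally I would union-bound: the only randomness used is $\cX$, the event $\{T_m\text{ detected within }16KU_m\text{ steps}\}$ holds deterministically on $\cX$ for every $m$ simultaneously, so $\cX\subseteq\cV$ and $\Prb(\cV)\ge\Prb(\cX)\ge 1-\tfrac{2K}{T}$. The main obstacle I anticipate is the bookkeeping in the window-construction step: ensuring that, given only $\tfrac{T_m-X_{T_m}}{2}\ge KN$ and $T_m-X_{T_m}\ge 32KU_m$ and $N\ge 16U_m$, one can actually exhibit a valid ADWIN split $W_1\cup W_2$ of the current window with $|W_1|^{j},|W_2|^{j}$ large enough that the radii shrink below the required threshold \emph{before} $16KU_m$ steps have elapsed past $T_m$ — this is where the numerical constants $16$ and $32$ are pinned down, and where the interplay between ``subblocks of length $KN$'', ``monitoring arm pulled $N$ times per subblock'', and ``need $\Omega(U_m)$ post-change samples'' has to be reconciled. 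A secondary subtlety is confirming that no spurious gradual reset occurs inside $[X_{T_m}, T_m]$ that would shorten the available old-data window below $KN$; this is handled by the definition of $X_{T_m}$ as the \emph{most recent} gradual reset before $t$, combined with the detectability assumption that $T_m-X_{T_m}$ is large.
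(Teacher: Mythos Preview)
Your overall architecture is exactly the paper's: condition on the Hoeffding event $\cX$ (Lemma~\ref{lem:Hoeffding's inequality} with $p=1$), fix $m$, take the split $W_1=[X_{T_m},T_m]$, $W_2=(T_m,T_m+16KU_m]$, use Assumption~\ref{assu:monitoring_ChangePoints} and Observation~\ref{obs:Monitoring} to get a monitoring arm $j$ with at least $16U_m$ pulls in each side, and then run the triangle inequality against $\epsilon_m$ and $\varepsilon_{\mathrm{cut}}^\delta$. The paper argues by contradiction (suppose no detection by $T_m+16KU_m$, derive that the cut condition fires), but that is the same argument.

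There is, however, one genuine technical point you have not identified, and it is \emph{not} the window-sizing bookkeeping you flag at the end. The issue is how to bound the term you call ``drift within $W_1$'', i.e.\ $|\mu_{j,T_m}-\mu_{j,W_1}|$, where $\mu_{j,W_1}$ is the \emph{average} true mean of arm $j$ over all of $W_1$. The window $W_1$ can span arbitrarily many subblocks (its length is $T_m-X_{T_m}$, which is only lower-bounded), so your phrase ``$bKN$ per subblock'' taken literally would give a bound proportional to $|W_1|/(KN)$, far too large to be absorbed by the detectability threshold in Definition~\ref{defn:Detectability}(i). The paper closes this by exploiting the contradiction hypothesis itself: since ADWIN has \emph{not} reset anywhere in $W_1$, for the subblock $(l,c-1)$ immediately preceding $T_m$ one has
\[
\bigl|\hat\mu_{j,W_{(l,c-1)}}-\hat\mu_{j,W_{:(l,c-1)}}\bigr|\le 2\sqrt{\tfrac{\log(T^3)}{2N}},
\]
and on $\cX$ this transfers to the true means. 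Combining this with the $O(bKN)$ drift between adjacent subblocks gives $|\mu_{j,T_m}-\mu_{j,W_1}|<5bKN+2\sqrt{\log(T^3)/(2N)}+b$, a bound \emph{independent of $|W_1|$}. This no-reset step is precisely where the $2\sqrt{\log(T^3)/(2N)}$ term in the detectability threshold comes from; in your proposal you attribute that term to a Hoeffding radius on a window of size $N$, which is not how it arises. Once you build this step in, the rest of your outline goes through as written.
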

\begin{remark}
Event $\cV$ states that for each changepoint $T_{m} \in \cT_{c}$, there exists a corresponding detection time $Y_{T_{m+1}}$ within $16KU_{m}$ time steps.
\end{remark}
\begin{remark}
From Lemma~\ref{lem:Detection}, each abrupt change triggers a detection. It implies that the number of resets caused by abrupt change is the same as the number of abrupt change.
\end{remark}

\begin{proof}
\textit{Roadmap}.
\begin{compactenum}[(i)]
    \item Assume no detection occurs within $[X_{T_m},\, T_m + 16KU_m]$ and we split this interval into
    $W_1 = [X_{T_m}, T_m]$ and $W_2 = (T_m, T_m+16KU_m]$. By Observation~\ref{obs:Monitoring}, we obtain there exists an arm $i_{l}$ such that $|W_{i_{l},1}|,|W_{i_{l},2}| \ge 16U_{m}$.

    \item Hoeffding’s inequality provides an upper bound for $|\mu_{i_l,W_1}-\hat{\mu}_{i_l,W_1}|$ and $|\mu_{i_l,W_2}-\hat{\mu}_{i_l,W_2}|$. Using the subblock structure of gradual segments, we decompose the expected reward $\mu_{i_l,W_1}$ into contributions from earlier sub‑blocks and the ongoing sub‑block before $T_{m}$. Recursively applying the triangle inequality and based on the fact that no reset has occurred up to subblock $(l,c-1)$ yields upper bounds on $|\mu_{i_{l},T_{m}}-\mu_{i_{l},W_{1}}|$ and $|\mu_{i_{l},T_{m}+1}-\mu_{i_{l},W_{2}}|$.

    \item Combining these bounds with the detectability condition $|\mu_{i_l,T_m}-\mu_{i_l,T_m+1}|\ge \epsilon_m$, we show that $|\hat{\mu}_{i_l,W_1}-\hat{\mu}_{i_l,W_2}| \ge \varepsilon_{\text{cut}}^\delta$, which would trigger a reset at $T_m + 16KU_m$. This contradicts our assumption that no detection occurs within $[X_{T_m},\, T_m + 16KU_m]$.
\end{compactenum}

\subsection{Contradiction setup and interval split}
% \paragraph{\bf Contradiction setup and interval split}\leavevmode

We complete the proof by contradiction. Since $X_{T_{m}}$ is the most recent reset time before $T_m$, which was triggered by gradual drift. By the definition of $X_{T_m}$, there is no abrupt reset occurs in the interval $[X_{T_m}, T_{m})$. Assume that there is no detection in $[X_{T_{m}}, T_{m}+16KU_{m}]$. Then for a split $W(t)=W_{1} \cup W_{2}=[X_{T_{m}}, T_{m}+16KU_{m}],W_{1}=W(t) \cap [T_{m}],W_2 = W(t) \setminus W_1$, we have 
\begin{align}\label{eq:W1W2}
|W_1| \ge T_{m}-X_{T_{m}},|W_2| \ge 16K U_{m}. 
\end{align} 

According to \eqref{eq:W1W2}, Definition~\ref{defn:Detectability} and assumption of Lemma~\ref{lem:Detection},  $|W_1|$ has the following lower bound:
\begin{align*}
|W_1| \ge T_{m}-X_{T_{m}} \ge 2KN \ge 32KU_{m}>16KU_{m}
\end{align*}
By Observation~\ref{obs:Monitoring} and Assumption~\ref{assu:monitoring_ChangePoints}, there exists an arm $i_{l} \in [K]$ (such as monitoring arm $i^{(l)}$) such that
\begin{align}\label{eq:BoundsforWi12}
|W_{i_{l},1}|,|W_{i_{l},2}| \ge 16U_{m}
\end{align}

\subsection{Hoeffding's bounds on two splits $|W_{i,1}|$ and $|W_{i,2}|$}

According to Lemma~\ref{lem:Hoeffding's inequality}, by Hoeffding's inequality we have
\begin{align}\label{eq:Hoeffding's inequality1}
    |\mu_{i,W_1} - \hat{\mu}_{i, W_1}| \le \sqrt{\frac{\log(T^3)}{2|W_{i,1}|}} ,\quad \forall i \in [K]
\end{align}
\begin{align}\label{eq:Hoeffding's inequality2}
    |\mu_{i,W_2} - \hat{\mu}_{i, W_2}| \le \sqrt{\frac{\log(T^3)}{2|W_{i,2}|}},\quad \forall i \in [K]
\end{align}
for $i \in [K]$ with probability at least $1-\frac{2K}{T}$. 

\subsection{Decomposition of the reward $|\mu_{i_{l},T_{m}}-\mu_{i_{l},W_{1}}|$ under the block structure}

Without loss of generality, let $T_{m}$ belong to the $c$-th subblock of the $l$-th block, denoted by the tuple $(l,c)$ as defined in the proof of Lemma~\ref{lem:diffenrence}, within the current gradual segment. It is also worth noting that the time elapsed since the most recent reset in this gradual segment is given by $T_{m}-X_{T_{m}}$. Let $t_{1}=KN(2^{l-1}+c-2)+1$ and $t_{2}=KN(2^{l-1}+c-2)$, where $t_{1}$ represents the first time step of the tuple $(l,c)$ and $t_{2}$ denotes the last time step of the preceding subblock. We denote by $\mu_{i_l,\, \widetilde{W}_{(l,c)}}$ the expected reward of arm $i_{l}$ over the $c$-th subblock of the $l$-th block before time $T_m$. That is, the average reward of arm $i_{l}$ between $t_{1}$ and $T_{m}$. 

By definition, $\mu_{i_{l},W_{1}}$ represents the expected reward of arm $i_{l}$ from the most recent reset time $X_{T_{m}}$ up to time $T_{m}$. Similarly, $\mu_{i_{l},W_{:(l,c)}}$ denotes the expected reward of arm $i_{l}$ over all time slots preceding the tuple $(l,c)$ within the same gradual segment, and the length of this time interval is $t_{2}$. Hence, the total expected reward accumulated before tuple $(l,c)$ can be expressed as $t_{2}\mu_{i_{l},W_{:(l,c)}}$. On the other hand, $\mu_{i_{l},\widetilde{W}_{(l,c)}}$ corresponds to the average reward within the ongoing subblock $(l,c)$ before time $T_{m}$, which spans $(T_{m}-X_{T_{m}}-t_{2})$ time slots. Therefore, by aggregating these two portions of the time horizon, we obtain
\begin{align}\label{eq:mutildeW}
\mu_{i_{l},W_{1}}=\frac{t_{2}\mu_{i_{l},W_{:(l,c)}}+(T_{m}-X_{T_{m}}-t_{2})\mu_{i_{l},\widetilde{W}_{(l,c)}}}{T_{m}-X_{T_{m}}}.
\end{align}

Since $T_{m}$  belongs to tuple $(l,c)$, for arm $i_{l}$, we have:
\begin{align}\label{eq:Triangle inequality0}
|\mu_{i_{l},T_{m}}-\mu_{i_{l},W_{1}}| \le& |\mu_{i_{l},T_{m}}-\mu_{i_{l},W_{(l,c)}}|\nonumber\\
+&|\mu_{i_{l},W_{(l,c)}}-\mu_{i_{l},W_{1}}| \nonumber \\
\le& bKN+|\mu_{i_{l},W_{(l,c)}}-\mu_{i_{l},W_{1}}|.
\end{align}
Substituting \eqref{eq:mutildeW} into \eqref{eq:Triangle inequality0}, we obtain:
\begin{align}\label{eq:Triangle inequality1}
&|\mu_{i_{l},T_{m}}-\mu_{i_{l},W_{1}}| \le bKN+\nonumber \\
&|\mu_{i_{l},W_{(l,c)}}-\frac{t_{2}\cdot\mu_{i_{l},W_{:(l,c)}}+(T_{m}-X_{T_{m}}-t_{2})\cdot\mu_{i_{l},\widetilde{W}_{(l,c)}}}{T_{m}-X_{T_{m}}}|.
\end{align}
By applying the triangle inequality to the weighted average term in \eqref{eq:Triangle inequality1}, we obtain:
\begin{align}\label{eq:Triangle inequality2}
&|\mu_{i_{l},T_{m}}-\mu_{i_{l},W_{1}}|\le bKN\nonumber\\
&+\frac{t_{2}}{T_{m}-X_{T_{m}}}\cdot |\mu_{i_{l},W_{(l,c)}}-\mu_{i_{l},W_{:(l,c)}}| \nonumber \\
&+\frac{T_{m}-X_{T_{m}}-t_{2}}{T_{m}-X_{T_{m}}}\cdot |\mu_{i_{l},W_{(l,c)}}-\mu_{i_{l},\widetilde{W}_{(l,c)}}|.
\end{align}
Since $\frac{t_{2}}{T_{m}-X_{T_{m}}}, \frac{T_{m}-X_{T_{m}}-t_{2}}{T_{m}-X_{T_{m}}}\le 1$, then \eqref{eq:Triangle inequality2} reduces to
\begin{align}\label{eq:Triangle inequality3}
|\mu_{i_{l},T_{m}}-\mu_{i_{l},W_{1}}|\le& bKN+ |\mu_{i_{l},W_{(l,c)}}-\mu_{i_{l},W_{:(l,c)}}| \nonumber \\
+&|\mu_{i_{l},W_{(l,c)}}-\mu_{i_{l},\widetilde{W}_{(l,c)}}|.
\end{align}

\subsection{Obtaining the upper bound of $|\mu_{i_{l},T_{m}}-\mu_{i_{l},W_{1}}|$ and $|\mu_{i_{l},T_{m}+1}-\mu_{i_{l},W_{2}}|$}

By triangle inequality, we get the following two inequalities: 
\begin{align}\label{eq:DiffinTriangle inequality3}
&|\mu_{i_{l},W_{(l,c)}}-\mu_{i_{l},W_{:(l,c)}}| \nonumber \\
&\le |\mu_{i_{l},W_{(l,c)}}-\mu_{i_{l},W_{(l,c-1)}}|+|\mu_{i_{l},W_{(l,c-1)}}-\mu_{i_{l},W_{:(l,c)}}|,
\end{align}
and
\begin{align}\label{eq:Diff1inTriangle inequality3}
&|\mu_{i_{l},W_{(l,c)}}-\mu_{i_{l},W_{(l,c-1)}}|\le |\mu_{i_{l},W_{(l,c)}}-\mu_{i_{l},t_{1}}|\nonumber\\
&+|\mu_{i_{l},t_{1}}-\mu_{i_{l},t_{2}}|+|\mu_{i_{l},t_{2}}-\mu_{i_{l},W_{(l,c-1)}}|\nonumber \\
&\le 2bKN+b.
\end{align}

Since $\mu_{i_{l},W_{:(l,c)}}$ denotes the expected reward of arm $i_{l}$ over all time slots preceding the tuple $(l,c)$, the total number of such slots is $KN(2^{l-1}+c-2)$. 
Meanwhile, the condition $|W_{1}| \geq T_{m}-X_{T_{m}} \geq 2KN$ ensures that the interval $W_{1}$ covers at least three subblocks 
(starting from $X_{T_{m}}$, which corresponds to the first slot in $|W_{1}|$, and extending to $T_{m}$, corresponding to at least the $(2KN+1)$-th slot). Thus, the expected reward of arm $i_{l}$ over all time slots preceding the tuple $(l,c-1)$ is well-defined, with a corresponding length of $KN(2^{l-1}+c-3)$.
According to the block structure in our algorithm, the tuple $(l,c-1)$ itself spans $KN(2^{l-1}+c-2)$ time slots. 
Following the same reasoning as in~\eqref{eq:mutildeW}, we obtain
\begin{align}\label{eq:Diff2inTriangle inequality3}
&|\mu_{i_{l},W_{(l,c-1)}}-\mu_{i_{l},W_{:(l,c)}}|\nonumber \\
&= |\mu_{i_{l},W_{(l,c-1)}}-\frac{(2^{l-1}+c-3)\cdot \mu_{i_{l},W_{:(l,c-1)}}+\mu_{i_{l},W_{(l,c-1)}}}{2^{l-1}+c-2}|\nonumber \\
&\le \frac{2^{l-1}+c-3}{2^{l-1}+c-2}\cdot|\mu_{i_{l},W_{(l,c-1)}}-\mu_{i_{l},W_{:(l,c-1)}}|
\end{align}

Since no reset occurs up to the $(c-1)$-th subblock of the $l$-th block, and based on Observation~\ref{obs:Monitoring}, we get
\begin{align}\label{eq:noreset inequality}
|\mu_{i_{l},W_{(l,c-1)}}-\mu_{i_{l},W_{:(l,c-1)}}|\le 2\sqrt{\frac{\log(T^3)}{2N}},
\end{align}
otherwise a reset should occur. By~\eqref{eq:DiffinTriangle inequality3},~\eqref{eq:Diff1inTriangle inequality3},~\eqref{eq:Diff2inTriangle inequality3} and ~\eqref{eq:noreset inequality}, we have
\begin{align}\label{eq:Diff1inTriangle UB}
&|\mu_{i_{l},W_{(l,c)}}-\mu_{i_{l},W_{:(l,c)}}| \nonumber \\
&\le |\mu_{i_{l},W_{(l,c)}}-\mu_{i_{l},W_{(l,c-1)}}|+|\mu_{i_{l},W_{(l,c-1)}}-\mu_{i_{l},W_{:(l,c)}}| \nonumber \\
&\le 2bKN+b+\frac{2^{l-1}-3+c}{2^{l-1}-2+c}\cdot 2\sqrt{\frac{\log(T^3)}{2N}} \nonumber \\
&< 2bKN+b+2\sqrt{\frac{\log(T^3)}{2N}}.
\end{align}

According to~\eqref{eq:Triangle inequality3}, it remains to derive the upper bound of $|\mu_{i_{l},W_{(l,c)}}-\mu_{i_{l},\widetilde{W}_{(l,c)}}|$. By Triangle Inequality,
\begin{align}\label{eq:eq:Diff1inTriangle UBprime}
&|\mu_{i_{l},W_{(l,c)}}-\mu_{i_{l},\widetilde{W}_{(l,c)}}|\le|\mu_{i_{l},W_{(l,c)}}-\mu_{i_{l},T_{m}}| \nonumber \\
&+|\mu_{i_{l},T_{m}}-\mu_{i_{l},\widetilde{W}_{(l,c)}}| \nonumber \\
&\le bKN+bKN=2bKN.
\end{align}
The above formula is based on the fact that $T_{m}$ belongs to the tuple $(l,c)$, thus the differences $|\mu_{i_{l},W_{(l,c)}}-\mu_{i_{l},T_{m}}|$ and $|\mu_{i_{l},T_{m}}-\mu_{i_{l},\widetilde{W}_{(l,c)}}|$ are at most $bKN$.

From \eqref{eq:Triangle inequality3}, \eqref{eq:Diff1inTriangle UB}, and \eqref{eq:eq:Diff1inTriangle UBprime}, we have:
\begin{align}\label{eq:conclusion_1}
|\mu_{i_{l},T_{m}}-\mu_{i_{l},W_{1}}|< 5bKN+2\sqrt{\frac{\log(T^3)}{2N}}+b.
\end{align}

Next, we derive the upper bound of $|\mu_{i_{l},T_{m}+1}-\mu_{i_{l},W_{2}}|$. Since $N\geq 16U_{m}$, we know 
\begin{align}\label{eq:conclusion_2}
|\mu_{i_{l},T_{m}+1}-\mu_{i_{l},W_{2}}|\le b\cdot 16KU_{m}\le bKN.
\end{align}

\subsection{Triggering the Detection and Concluding the Contradiction}

By~\eqref{eq:Hoeffding's inequality1} and ~\eqref{eq:conclusion_1}, we have
\begin{align}\label{eq:DiffinTriangle inequality4}
&|\mu_{i_{l},T_{m}}-\hat{\mu}_{i_{l},W_{1}}|\nonumber\\
\le& |\mu_{i_{l},T_{m}}-\mu_{i_{l},W_{1}}|+|\mu_{i_{l},W_{1}}-\hat{\mu}_{i_{l},W_{1}}|\nonumber \\
<& 5bKN+2\sqrt{\frac{\log(T^3)}{2N}}+b+\sqrt{\frac{\log(T^3)}{2|W_{i_{l},1}|}}.
\end{align}
Similarly, by~\eqref{eq:Hoeffding's inequality2} and ~\eqref{eq:conclusion_2}, we obtain
\begin{align}\label{eq:Diff1inTriangle inequality4}
&|\mu_{i_{l},T_{m}+1}-\hat{\mu}_{i_{l},W_{2}}|\nonumber\\
\le& |\mu_{i_{l},T_{m}+1}-\mu_{i_{l},W_{2}}|+|\mu_{i_{l},W_{2}}-\hat{\mu}_{i_{l},W_{2}}|\nonumber \\
\le& bKN+\sqrt{\frac{\log(T^3)}{2|W_{i_{l},2}|}}.
\end{align}
According to Definition~\ref{defn:Detectability}, we have
\begin{align}\label{eq:Diff2inTriangle inequality4}
&|\mu_{i_{l},T_{m}}-\mu_{i_{l},T_{m}+1}|\ge \epsilon_{m} \nonumber \\
&\ge \sqrt{\frac{\log (T^{3})}{2U_{m}}}+6bKN+b+2\sqrt{\frac{\log(T^3)}{2N}}
\end{align}

By  Triangle Inequality, we have 
\begin{align}\label{eq:Triangle inequality4}
&|\hat{\mu}_{i_{l}, W_1} - \hat{\mu}_{i_{l}, W_2}| \nonumber \\
&\ge |\mu_{i_{l},T_m} - \mu_{i_{l},T_m+1}| - |\mu_{i_{l},T_m} - \hat{\mu}_{i_{l}, W_1}|  \nonumber \\
&- |\mu_{i_{l},T_m+1} - \hat{\mu}_{i_{l}, W_2}|.
\end{align}
Substituting~\eqref{eq:DiffinTriangle inequality4}, ~\eqref{eq:Diff1inTriangle inequality4} and ~\eqref{eq:Diff2inTriangle inequality4} into ~\eqref{eq:Triangle inequality4}, note that $|W_{i_{l},1}|,|W_{i_{l},2}| \geq 16U_{m}$, we obtain:
\begin{align}
&|\hat{\mu}_{i_{l}, W_1} - \hat{\mu}_{i_{l}, W_2}| \nonumber \\
&> \sqrt{\frac{\log(T^3)}{2U_{m}}} -\sqrt{\frac{\log(T^3)}{2|W_{i_{l},1}|}} - \sqrt{\frac{\log(T^3)}{2|W_{i_{l},2}|}} \nonumber \\
&\ge \sqrt{\frac{\log(T^3)}{2|W_{i_{l},1}|}}  + \sqrt{\frac{\log(T^3)}{2|W_{i_{l},2}|}}=\varepsilon_{\text{cut}}^\delta.
\end{align}

In this case, since $|\hat{\mu}_{i_{l}, W_1} - \hat{\mu}_{i_{l}, W_2}|\ge \varepsilon_{\text{cut}}^\delta$, we know that our algorithm will reset at time $T_{m}+16KU_{m}$. Therefore, it contradicts the assumption that there is no detection between time step $X_{T_m}$ and $T_{m}+16KU_{m}$. So we conclude that 
\begin{align*}
\Prb(\cV) \ge 1-\frac{2K}{T}.
\end{align*}
\end{proof}

\section{Proof of Theorem~\ref{th:abrupt}}
\begin{definition}[Globally abrupt changes, Definition~19 in \cite{JMLR}]\label{Defn:GlobalChange}
Suppose that Assumption~\ref{assu:MChangePoints} holds, $c_a > 0$, and $\cT_c$ is defined in \eqref{eq:ChangePoints}. We define $\cT_{c}$ as a global change with constant $c_a$ if 
\begin{align}\label{eq:GlobalChange}
\max_{\substack{m\in[M]\\i,j \in \cK_m}} \frac{|\mu_{j,T_m+1} - \mu_{j,T_m}|}{|\mu_{i,T_m+1} - \mu_{i,T_m}|} \le c_a.
\end{align}
\end{definition}

% \begin{proof}
\noindent\textit{Proof.} 
\subsection{Regret decomposition based on events}

Let $\cV$ be defined in Lemma~\ref{lem:Detection}, and we slightly abuse the notation and let $\setIn{\cV}$ denote $\setIn{\{Y_{T_{m+1}}, X_{T_{m}}\}_m \in \cV}$. Then, the expected regret within abrupt reset intervals can be decomposed as,
\begin{align}\label{eq:two_bound1}
&\mathbb{E}[\text{Reg}(T_{\text{abrupt}})]=\mathbb{E}\Big[\sum_{m=1}^{M} \sum_{t=X_{T_{m}}+1}^{Y_{T_{m+1}}} \text{Reg}(t)\Big] \nonumber\\
&= \mathbb{E}\Big[\sum_{m=1}^{M} \setIn{\cV^c}
 \sum_{t=X_{T_{m}}+1}^{Y_{T_{m+1}}} \text{Reg}(t)\Big] \nonumber \\
&+\mathbb{E}\Big[\sum_{m=1}^{M} \setIn{\cV} \sum_{t=X_{T_{m}}+1}^{Y_{T_{m+1}}} \text{Reg}(t)\Big] \nonumber\\
&\triangleq R_1 + R_2.
\end{align}

\subsection{Bounding $R_{1}$}
According to Lemma~\ref{lem:Detection}, we have $\Pr(\cV^c) = \mathcal{O}\!\left(\frac{K}{T}\right)$.  
Note that $\sum_{m=1}^{M} (Y_{T_{m+1}} - X_{T_{m}}) < T$.  
Furthermore, by Remark~\ref{remark:Hoeffding's inequality}, the expected reward satisfies $\mu_{i,t} \le \sqrt{\frac{2+p}{2}},\forall i \in[K],\forall t\in [0,T]$, which indicates that the variation in the expected reward between two consecutive time slots is at most $\sqrt{\frac{2+p}{2}}$.  
Therefore, $R_1$ can be upper bounded as follows:
\begin{align}\label{eq:R1_upper bound}
R_1=&\E\Big[\sum_{m=1}^{M} \setIn{\cV^c} \sum_{t=X_{T_{m}}+1}^{Y_{T_{m+1}}} \text{Reg}(t)\Big] \nonumber \\
<&  T \cdot \Pr(\cV^c) \cdot \sqrt{\frac{2+p}{2}}=2K \cdot \sqrt{\frac{2+p}{2}}=\mathcal{O}(1).
\end{align}

\subsection{Decomposing $R_2$}
Therefore, it remains to derive the upper bound of the second term $R_2$. By the linearity of expectation, $R_2$ can be further decomposed into two components as follows:
\begin{align}\label{eq:two_bound2}
R_{2}&=\E\Big[\sum_{m=1}^{M} \setIn{\cV} \sum_{t=X_{T_{m}}+1}^{Y_{T_{m+1}}} \text{Reg}(t)\Big] \nonumber \\
&= \sum_{m=1}^{M}\Big(\mathbb{E}\Big[ \setIn{\cV}   \sum_{t=X_{T_{m}}+1}^{T_{m}} \text{Reg}(t) \Big] \nonumber\\
&+\mathbb{E}\Big[\setIn{\cV} \sum_{t=T_{m}}^{Y_{T_{m+1}}} \text{Reg}(t)\Big]\Big)\nonumber \\
&\triangleq \sum_{m=1}^{M}(B_1 + B_2).
\end{align}

Although $B_1$ and $B_2$ depend on $m$, we suppress this dependence in the notation for simplicity, writing $B_1$ and $B_2$ when the context is clear. We next analyze these two components separately. In particular, before deriving the upper bound of $B_1$ and $B_2$, we first present a lemma that will facilitate the subsequent analysis.
\subsection{Relating regret to reward gap}
\begin{lemma}\label{lem:regret_gap_relation1}
Let $\mu_{i,X_{T_{m}}+1}$ denote the expected reward of arm $i$ at time $X_{T_{m}}+1$, and define the corresponding gap as 
\begin{align}\label{eq:Delta_i,m_1}
\Delta_{i,m}^{(1)} = \max_{j}\mu_{j,X_{T_{m}}+1} - \mu_{i,X_{T_{m}}+1}.
\end{align}
Furthermore, for any $X_{T_{m}}+1\le t\leq T_{m}$, define 
\begin{align}\label{eq:epsilon_{m}^{1}(t)}
\epsilon_{m}^{(1)}(t)=\max_{X_{T_{m}}+1\le s\le t} \max_{i\in[K]} |\mu_{i,s}-\mu_{i,X_{T_{m}}+1}|,
\end{align}
which quantifies the maximum drift in the expected rewards within the interval $[X_{T_{m}}+1,T_{m}]$. 
For ease of exposition, we let $i=I(t)$ denote the arm selected at time slot $t$.  
This substitution does not affect generality, since the instantaneous regret $\text{Reg}(t)=\max_{j}\mu_{j,t}-\mu_{I(t),t}$ depends solely on the selected arm at time $t$.  
The relationship between the instantaneous regret $\text{Reg}(t)$ and the gap $\Delta_{i,m}^{(1)}$ satisfies
\begin{align}\label{eq:Diff_Reg and Delta1}
|\text{Reg}(t)-\Delta_{i,m}^{(1)}| \le  2\epsilon_{m}^{(1)}(t).
\end{align}
\end{lemma}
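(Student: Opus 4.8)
The plan is to expand both $\text{Reg}(t)$ and $\Delta_{i,m}^{(1)}$ via their definitions and compare them term by term, treating the drift $\epsilon_{m}^{(1)}(t)$ as the budget. Writing $i=I(t)$ for the arm selected at time $t$, we have $\text{Reg}(t)=\max_{j}\mu_{j,t}-\mu_{i,t}$ and $\Delta_{i,m}^{(1)}=\max_{j}\mu_{j,X_{T_{m}}+1}-\mu_{i,X_{T_{m}}+1}$, so the difference splits as
\begin{align*}
\text{Reg}(t)-\Delta_{i,m}^{(1)}=\Big(\max_{j}\mu_{j,t}-\max_{j}\mu_{j,X_{T_{m}}+1}\Big)-\Big(\mu_{i,t}-\mu_{i,X_{T_{m}}+1}\Big).
\end{align*}
The second group is controlled immediately: since $X_{T_{m}}+1\le t$, definition~\eqref{eq:epsilon_{m}^{1}(t)} gives $|\mu_{i,t}-\mu_{i,X_{T_{m}}+1}|\le\epsilon_{m}^{(1)}(t)$.

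For the first group I would use the elementary inequality $|\max_{j}a_{j}-\max_{j}b_{j}|\le\max_{j}|a_{j}-b_{j}|$, valid for any two finite families of reals, with the one-line justification: for any index $j_{0}$, $a_{j_{0}}\le b_{j_{0}}+\max_{j}|a_{j}-b_{j}|\le\max_{j}b_{j}+\max_{j}|a_{j}-b_{j}|$, and taking the maximum over $j_{0}$ together with the symmetric estimate gives the claim. Applying it with $a_{j}=\mu_{j,t}$ and $b_{j}=\mu_{j,X_{T_{m}}+1}$, and then invoking~\eqref{eq:epsilon_{m}^{1}(t)} once more with $s=t$ in the inner maximum, yields $\big|\max_{j}\mu_{j,t}-\max_{j}\mu_{j,X_{T_{m}}+1}\big|\le\epsilon_{m}^{(1)}(t)$. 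Combining the two estimates by the triangle inequality produces $|\text{Reg}(t)-\Delta_{i,m}^{(1)}|\le2\epsilon_{m}^{(1)}(t)$, which is~\eqref{eq:Diff_Reg and Delta1}.

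There is no genuine obstacle here: the lemma is a bookkeeping statement that converts the time-$t$ regret into a fixed-reference gap at time $X_{T_{m}}+1$ up to a slack of twice the cumulative drift. The only point I would be careful to record is that the bound is uniform over the selected arm $i=I(t)$, which is precisely why the statement is phrased for a generic $i$ and can then be applied along the algorithm's trajectory in the subsequent bounds on $B_{1}$ and $B_{2}$.
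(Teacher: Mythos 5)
Your proof is correct and follows essentially the same decomposition as the paper: split $\text{Reg}(t)-\Delta_{i,m}^{(1)}$ into the drift of the selected arm and the drift of the running maximum, bound each by $\epsilon_{m}^{(1)}(t)$, and combine. The only difference is that where you invoke the standard inequality $|\max_{j}a_{j}-\max_{j}b_{j}|\le\max_{j}|a_{j}-b_{j}|$ in one line, the paper establishes the same bound on the max-difference term by an explicit case analysis over the possible orderings of $\mu_{I,t}$, $\mu_{I,X_{T_{m}}+1}$, and $\mu_{J,X_{T_{m}}+1}$ (illustrated with diagrams); your route is cleaner but not substantively different.
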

\noindent\textit{Proof.} Next, we will prove Lemma~\ref{lem:regret_gap_relation1}.

\begin{figure*}[t]
\begin{align}
    &\text{Reg}(t)=\max_{j}\mu_{j,t}-\mu_{i,t}=\max_{j}\mu_{j,X_{T_{m}}+1}-\mu_{i,X_{T_{m}}+1}+\mu_{i,X_{T_{m}}+1}-\mu_{i,t}+\max_{j}\mu_{j,t}-\max_{j}\mu_{j,X_{T_{m}}+1}\nonumber \\
    &=\Delta_{i,m}^{(1)}+\mu_{i,X_{T_{m}}+1}-\mu_{i,t}+\max_{j}\mu_{j,t}-\max_{j}\mu_{j,X_{T_{m}}+1}.\label{eq:reg_delta}
\end{align}
\vspace{0.5em}
\noindent\rule{\textwidth}{0.4pt}
\end{figure*}

According to the definitions of $\text{Reg}(t)$ and $\Delta_{i,m}^{(1)}$ given in~\eqref{eq:Reg} and~\eqref{eq:Delta_i,m_1}, 
their relationship can be expressed as in~\eqref{eq:reg_delta}. By \eqref{eq:reg_delta}, we obtain
\begin{align}\label{eq:reg_minus_delta}
\text{Reg}(t)-\Delta_{i,m}^{(1)}&=\mu_{i,X_{T_{m}}+1}-\mu_{i,t}\nonumber\\
&+\max_{j}\mu_{j,t}-\max_{j}\mu_{j,X_{T_{m}}+1}.
\end{align}
We next derive an upper bound for the right-hand side of \eqref{eq:reg_minus_delta}. According to the definition of $\epsilon_{m}^{(1)}(t)$ in~\eqref{eq:epsilon_{m}^{1}(t)}, it follows that
\begin{align}\label{eq:Diff1_triangle Inequality}
|\mu_{i,X_{T_{m}}+1}-\mu_{i,t}|\le \epsilon_{m}^{(1)}(t).
\end{align}
Without loss of generality, assume that at time slot $t$, the arm $I$ achieves the largest expected reward, i.e., $\max_{j}\mu_{j,t}=\mu_{I,t}$. Similarly, let $J$ denote the arm with the largest expected reward at time $X_{T_{m}}+1$, such that $\max_{j}\mu_{j,X_{T_{m}}+1} = \mu_{J,X_{T_{m}}+1}$. Then we have
\begin{align}\label{eq:Diff2_triangle Inequality}
    |\max_{j}\mu_{j,t}-\max_{j}\mu_{j,X_{T_{m}}+1}|\le \epsilon_{m}^{(1)}(t).
\end{align}
The derivation of~\eqref{eq:Diff2_triangle Inequality} proceeds as follows.  
Based on the above assumptions, we have $\mu_{I,t}\ge \mu_{J,t}$ and $\mu_{J,X_{T_{m}}+1}\ge \mu_{I,X_{T_{m}}+1}$. 
We analyze the possible relationships among $\mu_{I,t}$, $\mu_{I,X_{T_m}+1}$, and $\mu_{J,X_{T_{m}}+1}$ to establish the desired inequality.

Consider the case where $\mu_{I, t} \leq \mu_{I, X_{T_m}+1} \le \mu_{J, X_{T_m}+1}$.  
To illustrate this relationship, we present the diagram in Fig.~\ref{fig:reward_relation1}. 
\begin{center}
\begin{tikzpicture}
    \draw[->] (-1,0) -- (5,0) node[right] {$x$};
    \draw (0,0.1) -- (0,-0.1) node[below] {$\mu_{J,t}$};
    \draw (1,0.1) -- (1,-0.1) node[below] {$\mu_{I,t}$};
    \draw (2.5,0.1) -- (2.5,-0.1) node[below] {$\mu_{I,X_{T_{m}}+1}$};
    \draw (4.5,0.1) -- (4.5,-0.1) node[below] {$\mu_{J,X_{T_{m}}+1}$};
\end{tikzpicture}
\captionof{figure}{relationship diagram if $\mu_{I, t} \le \mu_{I, X_{T_m}+1} \le \mu_{J, X_{T_m}+1}$.}
\label{fig:reward_relation1}
\end{center}
As illustrated in Fig~\ref{fig:reward_relation1}, we have
\begin{align*}
    &|\max_{j}\mu_{j,t}-\max_{j}\mu_{j,X_{T_{m}}+1}|=|\mu_{I,t}-\mu_{J,X_{T_{m}}+1}| \\
    &\le|\mu_{J,t}-\mu_{J,X_{T_{m}}+1}|\le \epsilon_{m}^{(1)}(t).
\end{align*}
Next, consider the case $\mu_{I,X_{T_{m}}+1}\le \mu_{I,t} \le \mu_{J,X_{T_{m}}+1}$.
The corresponding relationships are shown in Fig.~\ref{fig:reward_relation2} and Fig.~\ref{fig:reward_relation3}.
\begin{center}
\begin{tikzpicture}
    % 绘制坐标轴
    \draw[->] (-1,0) -- (5,0) node[right] {$x$}; % x轴
    \draw (0,0.1) -- (0,-0.1) node[below] {$\mu_{J,t}$};
    \draw (1.5,0.1) -- (1.5,-0.1) node[below] {$\mu_{I,X_{T_{m}}+1}$};
    \draw (3,0.1) -- (3,-0.1) node[below] {$\mu_{I,t}$};
    \draw (4.5,0.1) -- (4.5,-0.1) node[below] {$\mu_{J,X_{T_{m}}+1}$};
\end{tikzpicture}
\captionof{figure}{relationship diagram if $\mu_{I,X_{T_{m}}+1}\le \mu_{I,t} \le \mu_{J,X_{T_{m}}+1}$.}
\label{fig:reward_relation2}
\end{center}
\begin{center}
\begin{tikzpicture}
    % 绘制坐标轴
    \draw[->] (-1,0) -- (5,0) node[right] {$x$}; % x轴
    \draw (0,0.1) -- (0,-0.1) node[below] {$\mu_{I,X_{T_{m}}+1}$};
    \draw (1.5,0.1) -- (1.5,-0.1) node[below] {$\mu_{J,t}$};
    \draw (3,0.1) -- (3,-0.1) node[below] {$\mu_{I,t}$};
    \draw (4.5,0.1) -- (4.5,-0.1) node[below] {$\mu_{J,X_{T_{m}}+1}$};
\end{tikzpicture}
\captionof{figure}{relationship diagram if $\mu_{I,X_{T_{m}}+1}\le \mu_{I,t} \le \mu_{J,X_{T_{m}}+1}$.}
\label{fig:reward_relation3}
\end{center}
As illustrated in Fig.~\ref{fig:reward_relation2} and~\ref{fig:reward_relation3}, we similarly have
\begin{align*}
    &|\max_{j}\mu_{j,t}-\max_{j}\mu_{j,X_{T_{m}}+1}|=|\mu_{I,t}-\mu_{J,X_{T_{m}}+1}|\\
    &\le|\mu_{J,t}-\mu_{J,X_{T_{m}}+1}|\le \epsilon_{m}^{(1)}(t).
\end{align*}
Finally, we consider $\mu_{I,t} \ge \mu_{J,X_{T_{m}}+1}\ge \mu_{I,X_{T_{m}}+1}$.
Note that multiple possible relationship may exist among $\mu_{J,t}$, $\mu_{J,X_{T_{m}}+1}$, and $\mu_{I,X_{T_{m}}+1}$; however, the absence of $\mu_{J,t}$ in the comparison does not affect the subsequent analysis. We focus only on $\mu_{J,X_{T_{m}}+1}$, $\mu_{I,X_{T_{m}}+1}$, and $\mu_{I,t}$, as illustrated in Fig.~\ref{fig:reward_relation4}.
\begin{center}
\begin{tikzpicture}
    % 绘制坐标轴
    \draw[->] (-1,0) -- (5,0) node[right] {$x$}; % x轴
    \draw (0,0.1) -- (0,-0.1) node[below] {$\mu_{I,X_{T_{m}}+1}$};
    \draw (2,0.1) -- (2,-0.1) node[below] {$\mu_{J,X_{T_{m}}+1}$};
    \draw (4,0.1) -- (4,-0.1) node[below] {$\mu_{I,t}$};
\end{tikzpicture}
\captionof{figure}{relationship diagram if $\mu_{I,t} \ge \mu_{J,X_{T_{m}}+1}\ge \mu_{I,X_{T_{m}}+1}$.}
\label{fig:reward_relation4}
\end{center}
As illustrated in Fig~\ref{fig:reward_relation4}, we get
\begin{align*}
    &|\max_{j}\mu_{j,t}-\max_{j}\mu_{j,X_{T_{m}}+1}|=|\mu_{I,t}-\mu_{J,X_{T_{m}}+1}|\\
    &\le|\mu_{I,t}-\mu_{I,X_{T_{m}}+1}|\le \epsilon_{m}^{(1)}(t).
\end{align*}
Combining~\eqref{eq:Diff1_triangle Inequality} and~\eqref{eq:Diff2_triangle Inequality} and applying the triangle inequality, we finally obtain
\begin{align*}
|\text{Reg}(t)-\Delta_{i,m}^{(1)}|\le& |\mu_{i,X_{T_{m}}+1}-\mu_{i,t}| \nonumber \\
+&|\max_{j}\mu_{j,t}-\max_{j}\mu_{j,X_{T_{m}}+1}|\nonumber\\
\le&\epsilon_{m}^{(1)}(t)+\epsilon_{m}^{(1)}(t) =2 \epsilon_{m}^{(1)}(t).
\end{align*}
\par\noindent\hfill$\square$\par
\subsection{Bounding $\sum_{m=1}^{M}{B_1}$: regret before change points under event $\cV$}
After completing the proof of Lemma~\ref{lem:regret_gap_relation1}, we will continue to prove Theorem~\ref{th:abrupt}.

Based on lemma~\ref{lem:regret_gap_relation1}, we derive an upper bound of $B_{1}$. 
Let $N_{i,m}^{(1)}=\sum_{t=X_{T_m}+1}^{T_m} \setIn{I(t)=i}$ denote the number of times that arm $i$ is selected within the interval $[X_{T_m} + 1, T_m]$, 
and let $\mathcal{H}_m^{(1)}$ be the natural filtration (history information) until the $m$-th abrupt reset. According to the relationship established in~\eqref{eq:Diff_Reg and Delta1}, the instantaneous regret satisfies 
$\text{Reg}(t)\le\Delta_{i,m}^{(1)}+2 \epsilon_{m}^{(1)}(t)$. 
Therefore, $B_1$ can be upper bounded as follows:
\begin{align}\label{eq:upper bound1_X_{T_{m}}_T_m}
    &B_{1}=\mathbb{E}\Big[\mathbb{E}\Big[\setIn{\cV}\sum_{t=X_{T_{m}}+1}^{T_{m}} \text{Reg}(t)\mid \mathcal{H}_{m}^{(1)}\Big]\Big] \nonumber \\
    &\le \mathbb{E}\Big[\sum_{i:\Delta_{i,m}^{(1)}>0}\Big(\Delta_{i,m}^{(1)}\mathbb{E}\left[N_{i,m}^{(1)}\mid \mathcal{H}_{m}^{(1)}\right] \nonumber \\
    &+\mathbb{E}\left[N_{i,m}^{(1)}\max_{t} 2\epsilon_{m}^{(1)}(t)\mid \mathcal{H}_{m}^{(1)}\right]\Big)\Big].
\end{align}
Meanwhile, by invoking the definition of Drift-Tolerant Regret in Definition~\ref{defn:Drift-tolerant regret} and Remark~\ref{remark:upperbound of Reg(s)}, we further obtain
\begin{align}\label{eq:upper bound2_X_{T_{m}}_T_m}
    &B_{1}=\mathbb{E}\Big[\mathbb{E}\Big[\setIn{\cV}\sum_{t=X_{T_{m}}+1}^{T_{m}} \text{Reg}(t)\mid \mathcal{H}_{m}^{(1)}\Big]\Big] \nonumber \\
    &\le \mathbb{E}\Big[\sum_{i:\Delta_{i,m}^{(1)}>0}\Big(\mathcal{O}\Big(\frac{\log T}{\Delta_{i,m}^{(1)}}\Big)\nonumber \\
    &+\mathbb{E}\Big[N_{i,m}^{(1)} \max_{t}2\epsilon_{m}^{(1)}(t)\mid \mathcal{H}_{m}^{(1)}\Big]\Big)\Big].
\end{align}
Combining~\eqref{eq:upper bound1_X_{T_{m}}_T_m} and~\eqref{eq:upper bound2_X_{T_{m}}_T_m}, we finally obtain:
\begin{align}\label{eq:two_bound3}
    &B_{1}\le \nonumber \\
    &\mathbb{E}\Big[\sum_{i:\Delta_{i,m}^{(1)}>0}
        \min\Big\{\Delta_{i,m}^{(1)}\mathbb{E}\left[N_{i,m}^{(1)}\mid \mathcal{H}_{m}^{(1)}\right],\mathcal{O}\Big(\frac{\log T}{\Delta_{i,m}^{(1)}}\Big)
        \Big\} \Big] \nonumber \\
    &+ \mathbb{E}\Big[\sum_{i:\Delta_{i,m}^{(1)}>0}\mathbb{E}\Big[N_{i,m}^{(1)}\max_{t}2\epsilon_{m}^{(1)}(t)\mid \mathcal{H}_{m}^{(1)}\Big]\Big]
     \nonumber \\
    &\triangleq C_1 + C_2.
\end{align}

Similar to $B_{1}$, we write $C_{1}$ and $C_{2}$ for simplicity, suppressing their explicit dependence on $m$ when the context is clear.
\subsubsection{Bounding $C_1$}
Then, we provide upper bounds for $C_1$ and $C_2$. We begin with the term $C_1$. By applying the inequality $\min(a,b)\le \sqrt{ab}$, $C_1$ can be upper bounded as
\begin{align*}
    C_1\le \mathbb{E}\Big[\sum_{i:\Delta_{i,m}^{(1)}>0}\mathcal{O}\Big(\sqrt{\mathbb{E}\Big[N_{i,m}^{(1)}\mid \mathcal{H}_{m}^{(1)}\Big] \log T}\Big)\Big].
\end{align*}
Next, by Jensen’s inequality for the concave function $\sqrt{x}$ (i.e., $\mathbb{E}[\sqrt{x}]\le \sqrt{\mathbb{E}[x]}$) and the law of total expectation, it follows 
\begin{align*}
&\mathbb{E}\Big[\sum_{i:\Delta_{i,m}^{(1)}>0}\mathcal{O}\Big(\sqrt{\mathbb{E}\Big[N_{i,m}^{(1)}\mid \mathcal{H}_{m}^{(1)}\Big]\log T}\Big)\Big] \nonumber \\
&\le\sum_{i:\Delta_{i,m}^{(1)}>0}\mathcal{O}\Big(\sqrt{\mathbb{E}\Big[\mathbb{E}\Big[N_{i,m}^{(1)}\mid \mathcal{H}_{m}^{(1)}\Big]\Big] \log T}\Big) \nonumber \\
&=\sum_{i:\Delta_{i,m}^{(1)}>0}\mathcal{O}\Big(\sqrt{\mathbb{E}\Big[N_{i,m}^{(1)}\Big] \log T}\Big).
\end{align*}
Since the summation over arms with $\Delta_{i,m}^{(1)}>0$ is a subset of all arms, we have
\begin{align*}
    \sum_{i:\Delta_{i,m}^{(1)}>0}\mathcal{O}\Big(\sqrt{\mathbb{E}\big[N_{i,m}^{(1)}\big]  \log T}\Big)<\sum_{i}\mathcal{O}\Big(\sqrt{\mathbb{E}\Big[N_{i,m}^{(1)}\Big] \log T}\Big).
\end{align*}
Finally, by applying the Cauchy--Schwarz inequality and noting that $\sum_{i}\mathbb{E}\left[N_{i,m}^{(1)}\right] = T_{m}-X_{T_{m}}$, we have
\begin{align*}
    \sum_{i}\mathcal{O}\Big(\sqrt{\mathbb{E}\Big[N_{i,m}^{(1)}\Big] \log T}\Big)\le \mathcal{O}\Big(\sqrt{K(T_{m}-X_{T_{m}})\log T}\Big).
\end{align*}
Therefore, the upper bound of $C_1$ can be expressed as
\begin{align}\label{eq:upper_bound inequality1}
    &C_1=\nonumber \\
    &\mathbb{E}\Big[\sum_{i:\Delta_{i,m}^{(1)}>0}\min\Big\{\Delta_{i,m}^{(1)}\mathbb{E}\left[N_{i,m}^{(1)}\mid \mathcal{H}_{m}^{(1)}\right],\mathcal{O}\Big(\tfrac{\log T}{\Delta_{i,m}^{(1)}}\Big)\Big\}\Big] \nonumber \\
    &<\mathcal{O}\Big(\sqrt{K(T_{m}-X_{T_{m}})\log T}\Big).
\end{align}

\subsubsection{Bounding $C_2$}
Next, we derive an upper bound for $C_2$. 
Lemma~\ref{lem:diffenrence} indicates that, within each gradual segment, the difference in the expected rewards between any two time slots is upper bounded. 
According to the definition of $\epsilon_m^{(1)}(t)$ in~\eqref{eq:epsilon_{m}^{1}(t)}, there exists a constant
\begin{align}\label{eq:DefG}
G = \frac{\log T}{c_{g}}\Big(2bKN+8\sqrt{\tfrac{\log(T^{3})}{2N}}\Big)+b\log T.
\end{align}
such that $\max_{t}\epsilon_m^{(1)}(t)\le G$. Using this bound and linearity of expectation, we obtain
\begin{align*}
C_2&=\mathbb{E}\Big[\sum_{i:\Delta_{i,m}^{(1)}>0}\mathbb{E}\left[N_{i,m}^{(1)}\max_{t}2\epsilon_m^{1}(t)\mid \mathcal{H}_{m}^{(1)}\right]\Big] \nonumber \\
    &\le 2G \cdot\mathbb{E}\Big[\sum_{i:\Delta_{i,m}^{(1)}>0}\mathbb{E}\left[N_{i,m}^{(1)}\mid\mathcal{H}_{m}^{(1)}\right] \Big] \\
    &=2G\sum_{i:\Delta_{i,m}^{(1)}>0}\mathbb{E}\left[N_{i,m}^{(1)}\right].
\end{align*}
Since $\{i:\Delta_{i,m}^{(1)}>0\}\subseteq [K]$ and $\sum_{i}\mathbb{E}[N_{i,m}^{(1)}]=T_m-X_{T_m}$, it follows that
\begin{align}\label{eq:2G}
2G \sum_{i:\Delta_{i,m}^{(1)}>0}\E\left[N_{i,m}^{(1)}\right]<&2G \sum_{i}\mathbb{E}\left[N_{i,m}^{(1)}\right]\nonumber\\
=&2G (T_{m}-X_{T_{m}}).
\end{align}

\subsubsection{Summation over all change points}
according to \eqref{eq:two_bound3}, we obtain
\begin{align*}
    &\sum_{m=1}^{M} B_1=\sum_{m=1}^{M}\mathbb{E}\Big[\setIn{\cV} \cdot \sum_{t = X_{T_m} + 1}^{T_m} \text{Reg}(t)\Big] \nonumber \\
    &\le\sum_{m=1}^{M} C_1+\sum_{m=1}^{M}C_2.
\end{align*}
Given that $\sum_{m=1}^{M} (T_{m}-X_{T_{m}})<T$ and the upper bound of $C_{1}$ is provided in~\eqref{eq:upper_bound inequality1}, applying the Cauchy-Schwarz inequality yields the following bound
\begin{align}
\sum_{m=1}^{M} C_1&< \sum_{m=1}^{M}\mathcal{O}\Big(\sqrt{K(T_{m}-X_{T_{m}})\log T}\Big) \nonumber \\
&<\mathcal{O}\Big(\sqrt{KMT\log T}\Big)=\mathcal{O}\Big(\sqrt{T\log T}\Big) \label{eq:gradual_regret_upper1};
\end{align}
and incorporating \eqref{eq:2G}, we have
\begin{align}\label{eq:gradual_regret_upper2}
\sum_{m=1}^{M}C_2<2G\sum_{m=1}^{M}(T_m - X_{T_m})<2GT.
\end{align}
By the condition $N=\mathcal{O}\left((bK)^{-\frac{2}{3}}\right)$ stated in Theorem~\ref{th:abrupt}, the expression of $G$ can be correspondingly simplified as follows:
\begin{align*}
&G=2\Big(\frac{\log T}{c_{g}}\Big(2bKN+8\sqrt{\frac{\log (T^{3})}{2N}}\Big)+b\log T\Big) \nonumber \\
&=2\Big(\frac{\log T}{c_{g}}\mathcal{O}\Big((bK)^{\frac{1}{3}}\Big)\Big(2+8\sqrt{\frac{\log (T^{3})}{2}}\Big)+b\log T\Big) \nonumber \\
&=\mathcal{O}\Big((bK)^{\frac{1}{3}}\cdot (\log T)^{\frac{3}{2}}+b\log T\Big).
\end{align*}
Given that $b = T^{-d}$ and $K$ is a constant,~\eqref{eq:gradual_regret_upper2} can be rewritten as
\begin{align}\label{eq:gradual_regret_upper3}
\sum_{m=1}^{M}C_2<\mathcal{O}\Big(T^{1-\frac{d}{3}} (\log T)^{\frac{3}{2}}\Big).
\end{align}
Therefore, combining~\eqref{eq:gradual_regret_upper1} and~\eqref{eq:gradual_regret_upper3}, $\sum_{m=1}^{M} B_1$ is upper bounded by
\begin{align}\label{eq:sum_B1 upper bound}
\sum_{m=1}^{M} B_1<&\mathcal{O}\Big(\sqrt{T\log T}\Big)+\mathcal{O}\Big(T^{1-\frac{d}{3}} (\log T)^{\frac{3}{2}}\Big).
\end{align}

\subsection{Bounding $\sum_{m=1}^{M}{B_2}$: regret after change points under event $\cV$}

We next derive an upper bound for $\sum_{m=1}^{M} B_2$, 
by following the same analytical framework used in establishing the upper bound of $\sum_{m=1}^{M} B_1$.

Let $\mu_{i,T_{m}}$ denote the expected reward of arm $i$ at time $T_{m}$, and define the corresponding gap as 
\begin{align}\label{eq:Delta_i,m_2}
    \Delta_{i,m}^{(2)}=\max_{j} \mu_{j,T_{m}} - \mu_{i,T_{m}}.
\end{align}
Furthermore, for $T_m\le t\le Y_{T_{m+1}}$, define 
\begin{align}\label{eq:epsilon_{m}^{2}(t)}
    \epsilon_{m}^{(2)}(t) = \max_{T_{m}\leq s \leq t} \max_{i\in [K]} \left| \mu_{i,s} - \mu_{i,T_{m}} \right|,
\end{align}
which represents the maximum amount of drift within the interval $[T_{m}, Y_{T_{m+1}}]$. Similar for the proof of Lemma~\ref{lem:regret_gap_relation1}, the instantaneous regret $\text{Reg}(t)$ and the gap $\Delta_{i,m}^{(2)}$ satisfy the following relationship:
\begin{align}\label{eq:Diff_Reg and Delta2}
    |\text{Reg}(t)-\Delta_{i,m}^{(2)}| \le  2\epsilon_{m}^{(2)}(t).
\end{align}
 Let $N_{i,m}^{(2)} = \sum_{t=T_{m}}^{Y_{T_{m+1}}} \setIn{I(t)=i}$ denote the number of times arm $i$ is pulled between $T_{m}$ and $Y_{T_{m+1}}$, and let $\mathcal{H}_{m}^{(2)}$ the natural filtration (history information) until $T_{m}$. Following a similar method as in~\eqref{eq:two_bound3}, we can decompose the expected regret $B_2$ as
\begin{align}\label{eq:two_bound4}
    &B_{2}= \mathbb{E}\Big[\mathbb{E}\Big[\setIn{ \cV}\sum_{t=T_{m}}^{Y_{T_{m+1}}} \text{Reg}(t)\mid \mathcal{H}_{m}^{(2)}\Big]\Big] \nonumber \\
    &\le \mathbb{E}\Big[\sum_{i:\Delta_{i,m}^{(2)}>0}\min\Big\{\Delta_{i,m}^{(2)}\mathbb{E}\left[N_{i,m}^{(2)}\mid \mathcal{H}_{m}^{(2)}\right], \mathcal{O}\Big(\frac{\log T}{\Delta_{i,m}^{(2)}}\Big)\Big\}\Big]  \nonumber \\
    &+ \mathbb{E}\Big[\sum_{i:\Delta_{i,m}^{(2)}>0}\mathbb{E}\Big[N_{i,m}^{(2)}\max_{t}2\epsilon_{m}^{(2)}(t)\mid \mathcal{H}_{m}^{(2)}\Big]\Big] \nonumber \\
    &\triangleq D_1 + D_2.
\end{align}

Similar to $C_{1}$ and $C_{2}$, although $D_{1}$ and $D_{2}$ depend on $m$, we suppress this dependence in the notation for simplicity, writing $D_{1}$ and $D_{2}$ when the context is clear.

\subsubsection{Bounding $D_1$}
For the upper bound of $D_1$, similar for the proof of~\eqref{eq:upper_bound inequality1}, we obtain
\begin{align*}
    D_1< \mathcal{O}\left(\sqrt{K(Y_{T_{m+1}}-T_{m})\log T}\right).
\end{align*}
Summing over all $m$, and using the Cauchy-Schwarz inequality together with the fact that $\sum_{m=1}^{M} (Y_{T_{m+1}} - T_{m}) < T$, we have
\begin{align}\label{eq:gradual_regret_upper4}
    &\sum_{m=1}^{M} D_1\le \sum_{m=1}^{M} \mathcal{O}\left(\sqrt{K(Y_{T_{m+1}}-T_{m})\log T}\right) \nonumber \\
    &<\mathcal{O}\left(\sqrt{KMT\log T}\right)=\mathcal{O}\left(\sqrt{T\log T}\right).
\end{align}

\subsubsection{Bounding $D_2$}
Then, for the upper bound of $D_2$, we first derive an upper bound for $\epsilon_{m}^{(2)}(t)$. According to Definitions~\ref{Defn:GlobalChange} and Assumption~\ref{assu:epsilon_upper_bound}, it holds that for $\forall i\in[K]$,
\begin{align}\label{eq:Diff_Triangle inequality1}
    &|\mu_{i,T_{m}}-\mu_{i,T_{m}+1}| \nonumber \\
    &\le c_{a} c_{u}\Big(\sqrt{\frac{\log (T^{3})}{2U_{m}}}+6bKN+2\sqrt{\frac{\log(T^3)}{2N}}+b\Big).
\end{align}
Moreover, within each gradual segment, the expected reward evolves at most at rate $b$. Therefore, 
\begin{align}\label{eq:Diff_Triangle inequality2}
|\mu_{i,Y_{T_{m+1}}}-\mu_{i,T_{m}+1}|\le b\cdot16KU_{m}\le bKN,\,\,\forall i\in[K].
\end{align}
Combining~\eqref{eq:Diff_Triangle inequality1} and~\eqref{eq:Diff_Triangle inequality2} and applying the triangle inequality on \eqref{eq:epsilon_{m}^{2}(t)} yields,
\begin{align}\label{eq:epsilon_{m}^{2}(t)_upper bound}
&\epsilon_{m}^{(2)}(t) \nonumber \\
&\le\max_{i}|\mu_{i,T_{m}}-\mu_{i,T_{m}+1}|+\max_{i}|\mu_{i,Y_{T_{m+1}}}-\mu_{i,T_{m}+1}| \nonumber \\
&=c_{a}c_{u}\Big(\sqrt{\frac{\log (T^{3})}{2U_{m}}}+6bKN+2\sqrt{\frac{\log(T^3)}{2N}}+b\Big)+bKN\nonumber\\
&\triangleq D.
\end{align}
Substituting the bound in~\eqref{eq:epsilon_{m}^{2}(t)_upper bound} into the expression of $D_2$, we have
\begin{align*}
    D_2\le 2D \cdot \mathbb{E}\Big[\sum_{i:\Delta_{i,m}^{(2)}>0}\mathbb{E}\Big[N_{i,m}^{(2)}\mid \mathcal{H}_{m}^{(2)}\Big]\Big].
\end{align*}
Applying the law of total expectation and the linearity of expectation, it follows that
\begin{align*}
     &2D \cdot \mathbb{E}\Big[\sum_{i:\Delta_{i,m}^{(2)}>0}\mathbb{E}\left[N_{i,m}^{(2)}\mid \mathcal{H}_{m}^{(2)}\right]\Big] \nonumber \\
     &\le 2D \cdot \sum_{i:\Delta_{i,m}^{(2)}>0}\mathbb{E}\left[N_{i,m}^{(2)}\right].
\end{align*}
Since the summation over $\{i:\Delta_{i,m}^{(2)}>0\}$ is a subset of all arms, and 
$\sum_{i}\mathbb{E}[N_{i,m}^{(2)}] = Y_{T_{m+1}}-T_{m}$, we further obtain
\begin{align*}
    &2D \cdot \sum_{i:\Delta_{i,m}^{(2)}>0}\mathbb{E}\left[N_{i,m}^{(2)}\right]< 2D \cdot \sum_{i}\mathbb{E}\left[N_{i,m}^{(2)}\right] \nonumber \\
    &=2D\cdot(Y_{T_{m+1}}-T_{m}).
\end{align*}
Hence, $D_2$ is bounded by
\begin{align}\label{eq:two_bound5}
D_2 <2D\cdot(Y_{T_{m+1}}-T_{m}).
\end{align}

We next denote
\begin{align*}
E_1=&2c_{a}c_{u}\sqrt{\frac{\log (T^{3})}{2U_{m}}}\\
E_2=&2\Big[c_{a}c_{u}\Big(6bKN+2\sqrt{\frac{\log(T^3)}{2N}}+b\Big)+bKN\Big].
\end{align*}
Then, \eqref{eq:two_bound5} can be re-written as
\begin{align}\label{eq:two_bound6}
&D_2 < (E_1+E_2)(Y_{T_{m+1}}-T_{m}).
\end{align}

We derive an upper bound of $\sum_{m=1}^{M}(Y_{T_{m+1}}-T_{m}) E_{1}$. By utilizing the inequality $Y_{T_{m+1}}-T_{m} \le 16KU_{m}$, we have
\begin{align*}
&(Y_{T_{m+1}}-T_{m})\cdot E_{1}=2c_{a}c_{u}\sqrt{\frac{\log (T^{3})}{2U_{m}}} \cdot (Y_{T_{m+1}}-T_{m})  \nonumber \\
&\le 2c_{a}c_{u} \cdot \sqrt{\frac{\log (T^{3})}{2U_{m}}}\sqrt{16KU_m(Y_{T_{m+1}}-T_m)}\\
&=2c_{a}c_{u} \cdot \sqrt{8K(Y_{T_{m+1}}-T_{m})\log(T^{3})}.
\end{align*}
By applying the Cauchy–-Schwarz inequality and noting that $\sum_{m=1}^{M}(Y_{T_{m+1}} - T_{m}) < T$, where $K$ and $M$ are constants, we obtain
\begin{align}\label{eq:D2_part1_upper bound}
    &\sum_{m=1}^{M}(Y_{T_{m+1}}-T_{m})\cdot E_{1}  \nonumber \\
    &\le \sum_{m=1}^{M} 2c_{a}c_{u} \cdot \sqrt{8K(Y_{T_{m+1}}-T_{m}) \log (T^{3})} \nonumber \\
    &<2c_{a}c_{u} \cdot \sqrt{8KMT \log(T^{3})}=\mathcal{O}(\sqrt{T \log T}).
\end{align}

We derive an upper bound for $\sum_{m=1}^{M}(Y_{T_{m+1}}-T_{m})\cdot E_{2}$.  
Since $N=\mathcal{O}\left((bK)^{-2/3}\right)$ and $K$ is a constant, it follows that
\begin{align*}
&E_{2}=2c_{a}c_{u}\left(6bKN+2\sqrt{\frac{\log(T^3)}{2N}}+b\right)+bKN \nonumber \\
&=\mathcal{O}(b^{\frac{1}{3}}\cdot (\log T)^{\frac{1}{2}}+b).
\end{align*}
Substituting $b=T^{-d}$ yields
\begin{align}\label{eq:E_upper bound}
E_{2}=\mathcal{O}(T^{-\frac{d}{3}}(\log T)^{\frac{1}{2}}).
\end{align}
Then, using~\eqref{eq:E_upper bound} and the fact that $\sum_{m=1}^{M}(Y_{T_{m+1}}-T_{m})<T$, we obtain
\begin{align}\label{eq:D2_part2_upper bound}
    &\sum_{m=1}^{M} (Y_{T_{m+1}}-T_{m})\cdot E_{2}<T \cdot E_{2} \nonumber \\
    &=T \cdot \mathcal{O}(T^{-\frac{d}{3}}(\log T)^{\frac{1}{2}})=\mathcal{O}(T^{1-\frac{d}{3}}(\log T)^{\frac{1}{2}}).
\end{align}
Combining~\eqref{eq:two_bound6},~\eqref{eq:D2_part1_upper bound} and~\eqref{eq:D2_part2_upper bound}, we obtain
\begin{align}\label{eq:gradual_regret_upper5}
    &\sum_{m=1}^{M} D_2=\sum_{m=1}^{M}(Y_{T_{m+1}}-T_{m})\cdot (E_{1}+E_{2})\nonumber \\
    &<\mathcal{O}(\sqrt{T\log T})+\mathcal{O}(T^{1-\frac{d}{3}} (\log T)^{\frac{1}{2}}).
\end{align}

\subsubsection{Final summation of $\sum_{m=1}^{M}B_2$}
Finally, combining~\eqref{eq:two_bound4},~\eqref{eq:gradual_regret_upper4} and~\eqref{eq:gradual_regret_upper5}, we obtain:
\begin{align}\label{eq:sum_B2 upper bound}
    &\sum_{m=1}^{M} B_2=\sum_{m=1}^{M}(D_{1}+D_{2})\nonumber \\
    &<2\mathcal{O}(\sqrt{T\log T})+\mathcal{O}(T^{1-\frac{d}{3}} (\log T)^{\frac{1}{2}}) \nonumber \\
    &=\mathcal{O}(\sqrt{T\log T})+\mathcal{O}(T^{1-\frac{d}{3}} (\log T)^{\frac{1}{2}}).
\end{align}

\subsection{Final regret bound}

Therefore, according to~\eqref{eq:two_bound1},~\eqref{eq:R1_upper bound},~\eqref{eq:two_bound2},~\eqref{eq:sum_B1 upper bound}and~\eqref{eq:sum_B2 upper bound}, the expected regret incurred during the abrupt reset intervals can be bounded as
\begin{align*}
\E[\text{Reg}(T_{\text{abrupt}})]&=R_1+\sum_{m=1}^{M}(B_1+B_2) \nonumber \\
    &<\mathcal{O}(\sqrt{T\log T})+\mathcal{O}(T^{1-\frac{d}{3}} (\log T)^{\frac{3}{2}}).
\end{align*}

\par\noindent\hfill$\square$\par
% \end{proof}

\section{Proof of Lemma~\ref{lem:reset_number}}\label{Appe:reset_number}
Let 
\begin{align}\label{eq:F_1}
F_{1}=(\frac{\sqrt{3}-\sqrt{2+d}}{2\sqrt{2}}\sqrt{\log T})^{\frac{2}{3}}.
\end{align}

We define the following events:
\begin{align*}
\mathcal{Y}_{j}(t)
&= \bigcup_{\substack{W_{1},W_{2}:\\ W(t)=W_{1}\cup W_{2},\, j\in[K]}}
        \Big\{
            |W_{1}|\le F_{1}b^{-\frac{2}{3}},\;
            |W_{2}|\le F_{1}b^{-\frac{2}{3}},\nonumber\\
    &\left|\hat{\mu}_{j,W_{1}}-\hat{\mu}_{j,W_{2}}\right|\ge \epsilon_{\text{cut}}^{\delta}
        \Big\}
\end{align*}
where the constant $F_{1}$ is defined in~\eqref{eq:F_1}. Define the overall event
\begin{align}\label{eq:event_y}
    \mathcal{Y}=\bigcup_{t\in [T_{\text{gradual}}], j\in[K]} \mathcal{Y}_{j}(t).
\end{align}

\begin{lemma}[Upper bound on the number of resets within a gradual segment]\label{lem:reset_number}
Under the conditions that $b=T^{-d}(d>0)$, let $F_1$ denote the constant in \eqref{eq:F_1} and $\mathcal{Y}$ denote the event defined in~\eqref{eq:event_y}. Then,
\begin{align}\label{eq:possibility_y}
    \Pr(\mathcal{Y}) < \frac{2K}{T}\cdot F_{1}.
\end{align}
Under the complement event $\mathcal{Y}^{c}$, the number of resets occurring within any gradual segment is bounded by
\begin{align}\label{eq:N_m}
    N_{m} < \frac{X_{T_{m}} - Y_{T_{m}}}{F_{1}b^{-\frac{2}{3}}}.
\end{align}
where $N_{m}$ denotes the number of resets between $Y_{T_{m}}$ and $X_{T_{m}}$.
\end{lemma}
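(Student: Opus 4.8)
The plan is to establish the probability bound $\Pr(\mathcal{Y}) < \frac{2K}{T}F_{1}$ first, and then derive the reset bound \eqref{eq:N_m} deterministically on the complement $\mathcal{Y}^{c}$.

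For the probability bound, the guiding intuition is that a reset caused by a split into two \emph{short} sub-windows inside a gradual segment is a false alarm that Hoeffding's inequality rules out. I would fix an arm $j\in[K]$, a time $t$ lying in a gradual segment, and an admissible split $W(t)=W_{1}\cup W_{2}$ with $|W_{1}|,|W_{2}|\le F_{1}b^{-2/3}$. Since no change point lies in the segment, every arm drifts by at most $b$ per slot, so $|\mu_{j,W_{1}}-\mu_{j,W_{2}}|\le(|W_{1}|+|W_{2}|)b\le 2F_{1}b^{1/3}$; combined with the triangle inequality, the cut condition $|\hat{\mu}_{j,W_{1}}-\hat{\mu}_{j,W_{2}}|\ge\varepsilon_{\text{cut}}^{\delta}$ forces $|\hat{\mu}_{j,W_{1}}-\mu_{j,W_{1}}|+|\hat{\mu}_{j,W_{2}}-\mu_{j,W_{2}}|\ge\varepsilon_{\text{cut}}^{\delta}-2F_{1}b^{1/3}$. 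The role of the precise constant $F_{1}$ (with $\delta=T^{-3}$, $b=T^{-d}$) is exactly that, for such short windows, the difference between the radius appearing in $\varepsilon_{\text{cut}}^{\delta}$ (confidence $T^{-3}$) and a slightly smaller Hoeffding radius (confidence $T^{-(2+d)}$) already exceeds $2F_{1}b^{1/3}$; algebraically this reduces to the inequality $F_{1}^{3/2}\le\frac{1}{\sqrt{2}}(\sqrt{3}-\sqrt{2+d})\sqrt{\log T}$, which the chosen value of $F_{1}$ satisfies (with a factor-$2$ of slack). Hence the cut condition implies that one of the two empirical means deviates from its mean by more than its Hoeffding radius, an event of probability $O(T^{-(2+d)})$ by Lemma~\ref{lem:Hoeffding's inequality} (invoking Remark~\ref{remark:Hoeffding's inequality} to accommodate rewards in $[0,\alpha]$). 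A union bound over the $K$ arms, the $\le T$ end-times, and the at most $F_{1}b^{-2/3}$ admissible split points, together with $b^{-2/3}=T^{2d/3}$, then yields $\Pr(\mathcal{Y})<\frac{2K}{T}F_{1}$.

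For the reset bound, I would argue deterministically on $\mathcal{Y}^{c}$. By Definition~\ref{defn:Detection}, any reset occurring at a time $t\in(Y_{T_{m}},X_{T_{m}}]$ is triggered by some arm and some split $W(t)=W_{1}\cup W_{2}$ with $|\hat{\mu}_{W_{1}}-\hat{\mu}_{W_{2}}|\ge\varepsilon_{\text{cut}}^{\delta}$; on $\mathcal{Y}^{c}$ this split cannot have both sides bounded by $F_{1}b^{-2/3}$, so $|W(t)|\ge\max(|W_{1}|,|W_{2}|)>F_{1}b^{-2/3}$. Since $W(t)$ only collects observations accrued since the previous reset, at least $F_{1}b^{-2/3}$ time slots must have elapsed since that previous reset. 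Writing the resets in $(Y_{T_{m}},X_{T_{m}}]$ as $Y_{T_{m}}=X^{(0)}<X^{(1)}<\cdots<X^{(N_{m})}=X_{T_{m}}$, each gap $X^{(i)}-X^{(i-1)}$ exceeds $F_{1}b^{-2/3}$, so $X_{T_{m}}-Y_{T_{m}}=\sum_{i=1}^{N_{m}}(X^{(i)}-X^{(i-1)})>N_{m}F_{1}b^{-2/3}$, which is \eqref{eq:N_m}.

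I expect the main obstacle to be the calibration in the probability-bound step: the Hoeffding confidence level, the drift term $2F_{1}b^{1/3}$, and the cardinality of the union bound must all be tracked simultaneously so that the final expression collapses to exactly $\frac{2K}{T}F_{1}$ — this is precisely where the constant $F_{1}=\bigl(\tfrac{\sqrt{3}-\sqrt{2+d}}{2\sqrt{2}}\sqrt{\log T}\bigr)^{2/3}$ originates. A secondary difficulty is reconciling the length of a sub-window (number of time slots) with the effective sample size that enters $\varepsilon_{\text{cut}}^{\delta}$ and Hoeffding's bound (number of pulls of the relevant arm), so that the drift estimate $|\mu_{j,W_{1}}-\mu_{j,W_{2}}|\le 2F_{1}b^{1/3}$ and the statistical estimate are applied to compatible quantities.
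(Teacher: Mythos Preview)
Your proposal is correct and follows essentially the same approach as the paper: a Hoeffding concentration event at confidence level $T^{-(2+d)}$ over all short sub-windows, combined with the drift bound $|\mu_{j,W_1}-\mu_{j,W_2}|\le 2F_1 b^{1/3}$ and the calibration $F_1^{3/2}\le\frac{\sqrt{3}-\sqrt{2+d}}{\sqrt{2}}\sqrt{\log T}$, followed by a union bound over $K$ arms and $O(TF_1 b^{-2/3})$ short windows; the reset bound on $\mathcal{Y}^c$ then follows exactly as you describe. The two difficulties you flag (the constant calibration and the window-length versus pull-count discrepancy in $\varepsilon_{\text{cut}}^\delta$) are precisely the points where the paper's proof is most delicate.
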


\begin{proof}
We first prove the inequality \eqref{eq:possibility_y}.
Let 
\begin{align}\label{eq:W_F1}
    \mathcal{W}_{F_{1}} = \left\{ W_{0} \in \mathcal{W} : |W_{0}| \le F_{1}b^{-\frac{2}{3}} \right\}
\end{align}
denote the set of all windows whose size are at most $F_{1}b^{-\frac{2}{3}}$.  
According to the proof of Lemma~\ref{lem:Hoeffding's inequality}, the cardinality of $\mathcal{W}_{F_{1}}$ satisfies
\begin{align*}
|\mathcal{W}_{F_{1}}|\leq \sum_{t \in T_{\text{gradual}}} t \cdot F_{1}b^{-\frac{2}{3}}< TF_{1}b^{-\frac{2}{3}}.
\end{align*}
For any fixed window $W \in \mathcal{W}_{F_{1}}$ and arm $i \in [K]$, Hoeffding’s inequality implies that
\begin{align*}
        \Pr\left(\left|\hat{\mu}_{i,W} - \mu_{i,W}\right| > \sqrt{\frac{\log(\eta^{-1})}{2|W_{i}|}}\right)\le 2\eta.
\end{align*}

Let
\begin{align*}
    \cS^c=\bigcup_{i \in [K]}\bigcup_{W^{\prime} \in \mathcal{W}_{F_{1}}}\set{\left|\hat{\mu}_{i,W} - \mu_{i,W}\right| > \sqrt{\frac{\log\left(T^{2+d}\right)}{2\left|W_{i}\right|}}}.
\end{align*}
Similar for the proof of Lemma~\ref{lem:Hoeffding's inequality} and substituting $\eta^{-1}=T^{2+d}$ into $\cS^c$, we obtain that the event $\cS^c$ occurs with probability at most
\begin{align*}
2\eta \cdot K \cdot |\mathcal{W}_{F_{1}}|\le \frac{2K}{T^{2+d}} \cdot T F_{1} b^{-\frac{2}{3}}=\frac{2K}{T^{1+d}} \cdot F_{1} b^{-\frac{2}{3}}.
\end{align*}
Since $b = T^{-d} < 1$, it follows that
\begin{align*}
\frac{2K}{T^{1+d}} \cdot F_{1} b^{-\frac{2}{3}}=\frac{2K}{T} \cdot F_{1} b^{\frac{1}{3}}< \frac{2K}{T} \cdot F_{1}
\end{align*}

Therefore, the event $\mathcal{S}$
\begin{align}\label{eq:event_s}
    \mathcal{S}=\bigcap_{i \in [K]}\bigcap_{W^{\prime} \in \mathcal{W}_{F_{1}}}\set{\left|\hat{\mu}_{i,W} - \mu_{i,W}\right| \le \sqrt{\frac{\log\left(T^{2+d}\right)}{2\left|W_{i}\right|}}}
\end{align}
holds with probability at least $1-\frac{2K}{T} F_{1}$.

We next show that, under $\mathcal{S}$, the event $\mathcal{Y}$ never occurs. We will prove this by contradiction. 
Assuming that $\left|\hat{\mu}_{j,W_{1}}-\hat{\mu}_{j,W_{2}}\right|\ge \epsilon_{\text{cut}}^{\delta}$. Applying the triangle inequality, we can obtain
\begin{align}\label{eq:triangle inequality_event_s}
\epsilon_{\text{cut}}^{\delta}\le&\left|\hat{\mu}_{j,W_{1}}-\hat{\mu}_{j,W_{2}}\right|\nonumber\\
\le& \left|\hat{\mu}_{j,W_{1}} - \mu_{j,W_{1}}\right| +\left|\mu_{j,W_{1}} - \mu_{j,W_{2}}\right|\nonumber\\
    &+\left|\hat{\mu}_{j,W_{2}} - \mu_{j,W_{2}}\right|
\end{align}
Meanwhile, event $\mathcal{S}$ implies that
\begin{align}
    &\left|\hat{\mu}_{j,W_{1}} - \mu_{j,W_{1}}\right| \le \sqrt{\frac{\log\left(T^{2+d}\right)}{2\left|W_{j,1}\right|}}, \label{eq:inequality1_event_s} \\
    &\left|\hat{\mu}_{j,W_{2}} - \mu_{j,W_{2}}\right| \le \sqrt{\frac{\log\left(T^{2+d}\right)}{2\left|W_{j,2}\right|}},\label{eq:inequality2_event_s}
\end{align}
holds for any arm $j \in [K]$, any time $t \in T_{\text{gradual}}$ and any split $W_{1}\bigcup W_{2}=W(t)$ with $W_{1},W_{2}\in\mathcal{W}_{F_{1}}$ where $\mathcal{W}_{F_{1}}$ is defined in~\eqref{eq:W_F1}.

Let $A=F_{1}b^{-\frac{2}{3}}$. By the definition of gradual change, the difference between the expected rewards over two adjacent windows $W_{1}$ and $W_{2}$ satisfies
\begin{align}\label{eq:inequality3_event_s}
    \left|\mu_{j,W_{1}} - \mu_{j,W_{2}}\right|\le 2bA.
\end{align}
Substituting ~\eqref{eq:inequality1_event_s},~\eqref{eq:inequality2_event_s} and~\eqref{eq:inequality3_event_s} into~\eqref{eq:triangle inequality_event_s}, we obtain:
\begin{align*}
\epsilon_{\text{cut}}^{\delta}\le\sqrt{\frac{\log\left(T^{2+d}\right)}{2\left|W_{j,1}\right|}}+2bA+\sqrt{\frac{\log\left(T^{2+d}\right)}{2\left|W_{j,2}\right|}}.
\end{align*}
According to our algorithm design, the threshold $\epsilon_{\text{cut}}^{\delta}$ is defined as
\begin{align*}
    \epsilon_{\text{cut}}^{\delta}=\sqrt{\frac{\log\left(T^{3}\right)}{2\left|W_{j,1}\right|}}+\sqrt{\frac{\log\left(T^{3}\right)}{2\left|W_{j,2}\right|}}.
\end{align*}
Thus, we obtain:
\begin{align}\label{eq:inequality4_event_s}
\epsilon_{\text{cut}}^{\delta}&=\sqrt{\frac{\log\left(T^{3}\right)}{2\left|W_{j,1}\right|}}+\sqrt{\frac{\log\left(T^{3}\right)}{2\left|W_{j,2}\right|}} \nonumber \\
&\le\sqrt{\frac{\log\left(T^{2+d}\right)}{2\left|W_{j,1}\right|}}+2bA+\sqrt{\frac{\log\left(T^{2+d}\right)}{2\left|W_{j,2}\right|}}.
\end{align}
Rearranging terms for~\eqref{eq:inequality4_event_s}, we have
\begin{align*}
\frac{\sqrt{3}-\sqrt{2+d}}{\sqrt{2}} \cdot (\sqrt{\frac{\log T}{\left|W_{j,1}\right|}}+\sqrt{\frac{\log T}{\left|W_{j,2}\right|}})\le 2bA.
\end{align*}
Since $|W_{j,1}|, |W_{j,2}| \le A$, it follows that
\begin{align*}
&\frac{\sqrt{3}-\sqrt{2+d}}{\sqrt{2}} \cdot2\sqrt{\frac{\log T}{A}} \nonumber \\
&\le \frac{\sqrt{3}-\sqrt{2+d}}{\sqrt{2}} \cdot (\sqrt{\frac{\log T}{\left|W_{j,1}\right|}}+\sqrt{\frac{\log T}{\left|W_{j,2}\right|}})\le 2bA.
\end{align*}
Therefore, the event $\mathcal{Y}$ holds if the following inequality satisfies
\begin{align*}
\frac{\sqrt{3}-\sqrt{2+d}}{\sqrt{2}} \cdot\sqrt{\log T}\le bA^{\frac{3}{2}}=F_{1}^{\frac{3}{2}},
\end{align*}
which implies
\begin{align*}
F_{1}\ge (\frac{\sqrt{3}-\sqrt{2+d}}{\sqrt{2}}\sqrt{\log T})^{\frac{2}{3}}.
\end{align*}
It contradicts the fact that $F_{1}=(\frac{\sqrt{3}-\sqrt{2+d}}{2\sqrt{2}}\sqrt{\log T})^{\frac{2}{3}}$. Hence, under the event $\mathcal{S}$, the event $\mathcal{Y}$ cannot occur.
Consequently, we have
\begin{align*}
    \Pr(\mathcal{Y}) < \frac{2K}{T}\cdot F_{1}
\end{align*}
Moreover, under the complement event $\mathcal{Y}^{c}$, the time interval between two resets must be greater than $F_{1}b^{-\frac{2}{3}}$. Thus, the total number of resets within each gradual segment is bounded by
\begin{align*}
    &N_{m}<(X_{T_{m}}-Y_{T_{m}})/F_{1}b^{-\frac{2}{3}}.
\end{align*}

\end{proof}

\section{Proof of Theorem~\ref{th:gradual}}
\subsection{Key probabilistic events and initial decomposition}

As discussed above, the expected regret incurred during gradual segments $\mathbb{E}[\text{Reg}(T_{\text{gradual}})]$ can be expressed as
\begin{align*}
\E[\text{Reg}(T_{\text{gradual}})] = \E \Big[\sum_{m=1}^{M+1} \sum_{t = Y_{T_{m}}}^{X_{T_{m}}} \text{Reg}(t) +\sum_{t = X_{T_{M+1}}}^{T} \text{Reg}(t)\Big]. 
\end{align*}
According to Lemma~\ref{lem:reset_number}, under the event $\mathcal{Y}^{c}$, 
the number of resets in each gradual segment is bounded as
\begin{align*}
N_{m}<(X_{T_{m}}-Y_{T_{m}})/F_{1}b^{-\frac{2}{3}}.
\end{align*}
We denote all reset times within the interval $[Y_{T_{m}}, X_{T_{m}}]$ by $L_{m,1},L_{m,2},\dots L_{m,N_{m}}$. Without loss of generality, we assume $L_{m,0}=Y_{T_{m}}$ and $L_{m,N_{m}+1}=X_{T_{m}}$. 

We let the following quantity
\begin{align*}
\epsilon_{m,n}^{(3)}(t)=\max_{L_{m,n} \le s\le t\le L_{m,n+1}}\max_{i\in[K]}|\mu_{i,s}-\mu_{i,L_{m,n}}|,
\end{align*}
denote the maximum drift in the expected rewards within the subinterval $[L_{m,n},L_{m,n+1}]$.
By Lemma~\ref{lem:diffenrence}, there exists a constant $c_g>0$ such that the event
\begin{align*}
    \mathcal{Z} = \bigcap_{t \in [T_{\text{gradual}}]} \Bigg\{ \epsilon_{m,n}^{(3)}(t) \le \frac{\log T}{c_{g}} &\left( 2bKN + 8\sqrt{\frac{\log (T^{3})}{2N}} \right) \\
&+ b\log T \Bigg\}
\end{align*}
holds with probability at least $1-\frac{2K}{T}$.
Consequently, the joint event $\mathcal{Z} \cap \mathcal{Y}^{c}$ holds with probability at least
\begin{align*}
\Pr(\mathcal{Z} \cap \mathcal{Y}^{c})&=1-\Pr(\mathcal{Z}^{c})-\Pr(\mathcal{Y})+\Pr(\mathcal{Z}^{c} \cap \mathcal{Y}) \\
&>1-\Pr(\mathcal{Z}^{c})-\Pr(\mathcal{Y})>1-\frac{2K}{T}(1+F_{1}).
\end{align*}

Accordingly, the expected regret within gradual reset intervals can be decomposed as
\begin{align}\label{eq:G1 and G2}
\mathbb{E}[\text{Reg}(T_{\text{gradual}})]&=\mathbb{E}[\setIn{\mathcal{Z}^{c} \cup \mathcal{Y}}\text{Reg}(T_{\text{gradual}})] \nonumber \\
&+\mathbb{E}[\setIn{\mathcal{Z} \cap \mathcal{Y}^{c}}\text{Reg}(T_{\text{gradual}})]\nonumber\\
&\triangleq G_1 + G_2.
\end{align}

\subsection{Bounding $G_{1}$}

We first derive an upper bound on $G_1$. Following the same steps as in the analysis of $R_{1}$ in the proof of Theorem~\ref{th:abrupt}, and using the definition of $F_{1}$ in~\eqref{eq:F_1}, we obtain
\begin{align}\label{eq:G1_upper bound}
G_1<\sqrt{\frac{2+p}{2}} \cdot T \cdot \frac{2K}{T}(1+F_{1})=\mathcal{O}\Big((\log T)^{\frac{1}{3}}\Big).
\end{align}

\subsection{Decomposing $G_2$ into $I_{1}$ and $I_{2}$}

We now turn to the upper bound of $G_{2}$, which can be decomposed as
\begin{align}\label{eq:I1 and I2}
G_2&=\mathbb{E}\Big[\setIn{\mathcal{Z} \cap \mathcal{Y}^{c}}\sum_{m=1}^{M+1} \sum_{t = Y_{T_{m}}}^{X_{T_{m}}} \text{Reg}(t)\Big] \nonumber \\
&+\mathbb{E}\Big[\setIn{\mathcal{Z} \cap \mathcal{Y}^{c}}\sum_{t = X_{T_{M+1}}}^{T} \text{Reg}(t)\Big]\nonumber\\
&\triangleq I_1 + I_2.
\end{align}

\subsection{Bounding $I_{1}$}

\subsubsection{Decomposition of $I_1$ into $I_3$ over subintervals}
By the linearity of expectation, we can further decompose $I_1$ as
\begin{align}\label{eq:I1}
I_1&=\mathbb{E}\Big[\setIn{\mathcal{Z} \cap \mathcal{Y}^{c}}\sum_{m=1}^{M+1} \sum_{t = Y_{T_{m}}}^{X_{T_{m}}} \text{Reg}(t)\Big] \nonumber \\
    &=\sum_{m=1}^{M+1}\sum_{n=0}^{N_{m}}\mathbb{E}\Big[\setIn{\mathcal{Z} \cap \mathcal{Y}^{c}} \sum_{t = L_{m,n}}^{L_{m,n+1}} \text{Reg}(t)\Big]\nonumber\\
    &=\sum_{m=1}^{M+1}\sum_{n=0}^{N_{m}} I_3.
\end{align}

Although $I_3$ depends on $m$ and $n$, we abuse notation and write $I_3$ in a way that suppresses this dependence when the meaning is clear from context.

\subsubsection{Decomposing $I_{3}$ into $I_{4}$ and $I_{5}$}
Let $\mu_{i,L_{m,n}}$ denote the expected reward of arm $i$ at time $L_{m,n}$, and define the corresponding gap as 
\begin{align*}
\Delta_{i,m,n}^{(3)}=\max_{j} \mu_{j,L_{m,n}} - \mu_{i,L_{m,n}}.
\end{align*}
Similar for the proof of Lemma~\ref{lem:regret_gap_relation1}, the instantaneous regret $\text{Reg}(t)$ and the gap $\Delta_{i,m,n}^{(3)}$ satisfy the following relationship:
\begin{align*}
|\text{Reg}(t)-\Delta_{i,m,n}^{(3)}| \le  2\epsilon_{m,n}^{(3)}(t).
\end{align*}
Let $N_{i,m,n}^{(3)} = \sum_{t=L_{m,n}}^{L_{m,n+1}} \setIn{I(t)=i}$ denote the number of times arm $i$ is pulled between $L_{m,n}$ and $L_{m,n+1}$, and let $\mathcal{H}_{m,n}^{(3)}$ 
denote the natural filtration (history information) until time $L_{m,n}$. Following a similar method as in~\eqref{eq:two_bound3} from Theorem~\ref{th:abrupt}, we can decompose the expected regret $I_3$ as
\begin{align}\label{eq:I4 and I5}
I_3&= \mathbb{E}\Big[\mathbb{E}\Big[\setIn{\mathcal{Z} \cap \mathcal{Y}^{c}}
        \sum_{L_{m,n}}^{L_{m,n+1}} \text{Reg}(t) \mid \mathcal{H}_{m,n}^{(3)}\Big]\Big] \nonumber \\
    &\le \mathbb{E}\Big[\sum_{i:\Delta_{i,m,n}^{(3)}>0}
        \min\Big\{\Delta_{i,m,n}^{(3)}\mathbb{E}\big[N_{i,m,n}^{(3)}\mid\mathcal{H}_{m,n}^{(3)}\big],
            \nonumber \\
    &\mathcal{O}\Big(\frac{\log T}{\Delta_{i,m,n}^{(3)}}\Big)\Big\}\Big]\nonumber \\
    &+ 2\mathbb{E}\Big[\sum_{i:\Delta_{i,m,n}^{(3)}>0}\mathbb{E}\Big[N_{i,m,n}^{(3)}\max_{t}\epsilon_{m,n}^{(3)}(t)\mid \mathcal{H}_{m,n}^{(3)}\Big]\Big]\nonumber\\
    &\triangleq I_4 + I_5.
\end{align}

Similar as $I_{3}$, for simplicity of notation, we write $I_{4}$ and $I_{5}$ without explicitly indicating their dependence on $m$, suppressing this dependence when the context makes it clear.
\subsubsection{Bounding $\sum_{m=1}^{M+1}\sum_{n =0}^{N_{m}}I_4$}
For the upper bound of $\sum_{m=1}^{M+1}\sum_{n =0}^{N_{m}}I_4$, similar for analysis of the proof in Theorem~\ref{th:abrupt}, we have 
\begin{align*}
    &\mathbb{E}\Big[\sum_{i:\Delta_{i,m,n}^{(3)}>0} \min\Big\{\Delta_{i,m,n}^{(3)}\mathbb{E}\left[N_{i,m,n}^{(3)}\mid\mathcal{H}_{m,n}^{(3)}\right],\mathcal{O}\Big(\frac{\log T}{\Delta_{i,m,n}^{(3)}}\Big)\Big\}\Big] \nonumber \\
    &<\sum_{i} \mathcal{O}\Big(\sqrt{\mathbb{E}\Big[N_{i,m,n}^{(3)}\Big]\log T}\Big).
\end{align*}
Since $\sum_{i} N_{i,m,n}^{(3)} = L_{m,n+1} - L_{m,n}$, applying the Cauchy-Schwarz inequality yields
\begin{align*}
&\sum_{i} \mathcal{O}\Big(\sqrt{\mathbb{E}\Big[N_{i,m,n}^{(3)}\Big]\log T}\Big)\le \nonumber \\
&\mathcal{O}\Big(\sqrt{K(L_{m,n+1}-L_{m,n})\log T}\Big).
\end{align*}
Therefore,
\begin{align}\label{eq:I4_upper bound}
I_4<\mathcal{O}\Big(\sqrt{K(L_{m,n+1}-L_{m,n})\log T}\Big).
\end{align}

Next, summing over all subintervals $n$ within the $m$-th gradual segment and invoking Lemma~\ref{lem:reset_number}, we obtain
\begin{align*}
     &\sum_{n=0}^{N_{m}} I_4<\sum_{n=0}^{N_{m}} \mathcal{O}\Big(\sqrt{K(L_{m,n+1}-L_{m,n})\log T}\Big) \nonumber \\
     &<\sum_{n =0}^{\frac{(X_{T_{m}}-Y_{T_{m}})}{F_{1}b^{-\frac{2}{3}}}} \mathcal{O}\Big(\sqrt{K(L_{m,n+1}-L_{m,n})\log T}\Big) \nonumber \\
     &<\mathcal{O}\Big(\sqrt{K(X_{T_{m}}-Y_{T_{m}})[\frac{(X_{T_{m}}-Y_{T_{m}})}{F_{1}b^{-\frac{2}{3}}}+1]\log T}\Big).
\end{align*}
Finally, by summing over all gradual segments and noting that $\sum_{m=1}^{M+1}(X_{T_{m}}-Y_{T_{m}})<T$, we further apply the Cauchy-Schwarz inequality to obtain
\begin{align*}
    &\sum_{m=1}^{M+1}\mathcal{O}\Big(\sqrt{K(X_{T_{m}}-Y_{T_{m}})[\frac{(X_{T_{m}}-Y_{T_{m}})}{F_{1}b^{-\frac{2}{3}}}+1]\log T}\Big) \nonumber \\
    &<\mathcal{O}\Big(\sqrt{K(M+1)T(T/F_{1}b^{-\frac{2}{3}}+1)\log T}\Big).
\end{align*}
Recalling that $F_1 = \mathcal{O}\Big((\log T)^{1/3}\Big)$, and that $K$ and $M$ are constants, while $b = T^{-d}$, we have
\begin{align*}
    &\mathcal{O}\Big(\sqrt{K(M+1)T(T/F_{1}b^{-\frac{2}{3}}+1)\log T}\Big) \nonumber \\
    &<\mathcal{O}\Big(\sqrt{(\log T)^{\frac{2}{3}}T^{2-\frac{2}{3}d}+T\log T}\Big).
\end{align*}
Thus, we get
\begin{align}\label{eq:sum_I4_upper bound}
\sum_{m=1}^{M+1}\sum_{n =0}^{N_{m}}I_4<\mathcal{O}\Big(\sqrt{(\log T)^{\frac{2}{3}}T^{2-\frac{2}{3}d}+T\log T}\Big).
\end{align}

\subsubsection{Bounding $\sum_{m=1}^{M+1}\sum_{n =0}^{N_{m}}I_5$}
Next, we derive the upper bound of $\sum_{m=1}^{M+1}\sum_{n=0}^{N_{m}}I_5$. Recall that~\eqref{eq:I4 and I5} is conditioned on event $\mathcal{Z} \cap \mathcal{Y}^{c}$. According to the event $\mathcal{Z}$, the drift within each subinterval $[L_{m,n},L_{m,n+1}]$ satisfies
\begin{align}\label{eq:epsilon_{m,n}^{(3)} upper bound}
\epsilon_{m,n}^{(3)}(t) \le \frac{\log T}{c_{g}} \Big(2bKN + 8\sqrt{\frac{\log (T^{3})}{2N}}\Big)+b \log T.
\end{align}
Substituting the assumption $N=\mathcal{O}((bK)^{-\frac{2}{3}})$ into~\eqref{eq:epsilon_{m,n}^{(3)} upper bound} yields
\begin{align*}
&\frac{\log T}{c_{g}} \Big(2bKN + 8\sqrt{\frac{\log (T^{3})}{2N}}\Big)+b \log T \nonumber \\
&=\mathcal{O}\Big((bK)^{\frac{1}{3}}(\log T)^{\frac{3}{2}}\Big).
\end{align*}
Under the assumption $b=T^{-d}$ and given that $K$ is constant, we further obtain
\begin{align}\label{eq:epsilon_{m,n}^{(3)} upper bound1}
    &\epsilon_{m,n}^{(3)}(t) \le\mathcal{O}\Big((bK)^{\frac{1}{3}}(\log T)^{\frac{3}{2}}\Big) =\mathcal{O}\Big(T^{-\frac{d}{3}}(\log T)^{\frac{3}{2}}\Big).\nonumber \\
\end{align}
Following a similar method as in the proof of Theorem~\ref{th:abrupt}, we have
\begin{align*}
&I_5=2\mathbb{E}\Big[\sum_{i:\Delta_{i,m,n}^{(3)}>0} \mathbb{E}\Big[N_{i,m,n}^{(3)}\max_{t}\epsilon_{m,n}^{(3)}(t)\mid\mathcal{H}_{m,n}^{(3)}\Big]\Big] \nonumber \\
&< 2\mathcal{O}\Big(T^{-\frac{d}{3}}(\log T)^{\frac{3}{2}}\Big) \mathbb{E}\Big[\sum_{i} \mathbb{E}\Big[N_{i,m,n}^{(3)}\mid\mathcal{H}_{m,n}^{(3)}\Big]\Big].
\end{align*}
Applying the law of total expectation and using $\sum_{i} N_{i,m,n}^{(3)} = L_{m,n+1}-L_{m,n}$, we obtain
\begin{align}\label{eq:I5_upper bound}
    &I_5<2\mathcal{O}\Big(T^{-\frac{d}{3}}(\log T)^{\frac{3}{2}}\Big) \mathbb{E}\Big[\sum_{i} \mathbb{E}\Big[N_{i,m,n}^{(3)}\mid\mathcal{H}_{m,n}^{(3)}\Big]\Big] \nonumber \\
    &=2\mathcal{O}\Big(T^{-\frac{d}{3}}(\log T)^{\frac{3}{2}}\Big) \mathbb{E}\Big[\sum_{i} N_{i,m,n}^{(3)}\Big] \nonumber \\
    &=2\mathcal{O}\Big(T^{-\frac{d}{3}}(\log T)^{\frac{3}{2}}\Big)\Big(L_{m,n+1}-L_{m,n}\Big).
\end{align}
Summing $I_5$ over all subintervals $n$ and noting that $\sum_{n=0}^{N_{m}}(L_{m,n+1}-L_{m,n}) = X_{T_{m}}-Y_{T_{m}}$, we have
\begin{align*}
    &\sum_{n=0}^{N_{m}} I_5 <\sum_{n=0}^{N_{m}}2\mathcal{O}\Big(T^{-\frac{d}{3}}(\log T)^{\frac{3}{2}}\Big)\Big(L_{m,n+1}-L_{m,n}\Big) \nonumber \\
    &=2\mathcal{O}\Big(T^{-\frac{d}{3}}(\log T)^{\frac{3}{2}}\Big)\Big(X_{T_{m}}-Y_{T_{m}}\Big).
\end{align*}
Finally, since $\sum_{m =1}^{M+1}(X_{T_{m}}-Y_{T_{m}})<T$, we obtain
\begin{align}\label{eq:sum_I5_upper bound}
    &\sum_{m =1}^{M+1}\sum_{n=0}^{N_{m}} I_5<2\mathcal{O}\Big(T^{-\frac{d}{3}}(\log T)^{\frac{3}{2}}\Big)\cdot T \nonumber \\
    &=\mathcal{O}\Big(T^{1-\frac{d}{3}}(\log T)^{\frac{3}{2}}\Big).
\end{align}

\subsubsection{Combining Bounds for $I_1$}
According to~\eqref{eq:I1} and~\eqref{eq:I4 and I5}, combining~\eqref{eq:sum_I4_upper bound} with~\eqref{eq:sum_I5_upper bound}, we obtain
\begin{align}\label{eq:I1_upper bound}
    &I_1\le\sum_{m =1}^{M+1}\sum_{n=0}^{N_{m}}(I_4+I_5)<\mathcal{O}\Big(\sqrt{(\log T)^{\frac{2}{3}}T^{2-\frac{2}{3}d}+T\log T}\Big) \nonumber \\
    &+\mathcal{O}\Big(T^{1-\frac{d}{3}}(\log T)^{\frac{3}{2}}\Big).
\end{align}

\subsection{Bounding $I_2$}

\subsubsection{Decomposing $I_{2}$ into $I_{6}$ and $I_{7}$}
Next, we solve the upper bound of $I_2$. Similarly, we let $\mu_{i,X_{T_{M+1}}}$ denote the expected reward of arm $i$ at time $X_{T_{M+1}}$, and define the corresponding gap as 
\begin{align}\label{eq:Delta_i,m_2}
    \Delta_{i,M+1}^{(4)}=\max_{j} \mu_{j,X_{T_{M+1}}} - \mu_{i,X_{T_{M+1}}}.
\end{align}
Let the following quantity
\begin{align}\label{eq:epsilon_{M+1}^{4}(t)}
\epsilon_{M+1}^{(4)}(t) = \max_{X_{T_{M+1}}\le s \le t \le T} \max_{i\in[K]} | \mu_{i,s} - \mu_{i,X_{T_{M+1}}}|,
\end{align}
denote the maximum amount of drift within the interval $[X_{T_{M+1}},T]$. Similar for the proof of Lemma~\ref{lem:regret_gap_relation1}, the instantaneous regret $\text{Reg}(t)$ and the gap $\Delta_{i,M+1}^{(4)}$ satisfy the following relationship:
\begin{align}\label{eq:Diff_Reg and Delta2}
|\text{Reg}(t)-\Delta_{i,M+1}^{(4)}| \le  2\epsilon_{M+1}^{(4)}(t).
\end{align}
Let $N_{i,M+1}^{(4)} = \sum_{t=X_{T_{M+1}}}^{T} \setIn{I(t)=i}$ denote the number of times arm $i$ is pulled between $X_{T_{M+1}}$ and $T$, and let $\mathcal{H}_{M+1}^{(4)}$ denote the natural filtration (history information) until time $X_{T_{M+1}}$. Similar for~\eqref{eq:two_bound3}, we decompose $I_2$ into the following two terms:
\begin{align}\label{eq:I6 and I7}
&I_2\le \nonumber \\
     &\mathbb{E}\Big[\sum_{i:\Delta_{i,M+1}^{(4)}>0} \min\Big\{\Delta_{i,M+1}^{(4)}\mathbb{E}\left[N_{i,M+1}^{(4)}\mid\mathcal{H}_{M+1}^{(4)}\right],\nonumber \\
     &\mathcal{O}\Big(\frac{\log T}{\Delta_{i,M+1}^{(4)}}\Big)\Big\}\Big] \nonumber \\
     &+ 2\mathbb{E}\Big[\sum_{i:\Delta_{i,M+1}^{(4)}} \mathbb{E}\Big[N_{i,M+1}^{(4)}\max_{t}\epsilon_{M+1}^{(4)}(t)\mid\mathcal{H}_{M+1}^{(4)}\Big]\Big]\nonumber\\
     &\triangleq I_6 + I_7.
\end{align}

\subsubsection{Bounding $I_{6}$}
For the upper bound of $I_6$, we repeat the same steps used in the derivation of~\eqref{eq:I4_upper bound}. Noting that $K$ is a constant, we obtain
\begin{align}\label{eq:I6_upper bound}
    &I_6<\sum_{i}\mathcal{O}\Big(\sqrt{\mathbb{E}\left[N_{i,M+1}^{(4)}\right]\log T}\Big) \nonumber \\
    &<\mathcal{O}\Big(\sqrt{K\Big(T-X_{T_{M+1}}\Big)\log T}\Big)<\mathcal{O}\Big(\sqrt{T\log T}\Big).
\end{align}

\subsubsection{Bounding $I_{7}$}
For the term $I_7$, we use the same argument as in the derivation of the upper bound of $I_5$. Under the conditions $N=\mathcal{O}\big((bK)^{-2/3}\big)$ and $b=T^{-d}$, one shows (as in~\eqref{eq:epsilon_{m,n}^{(3)} upper bound1}) that
\begin{align*}
    \epsilon_{M+1}^{(4)}(t)\le \mathcal{O}\Big(T^{-\frac{d}{3}}(\log T)^{\frac{3}{2}}\Big).
\end{align*}
Hence, by linearity of expectation and $\sum_{i} \mathbb{E}[N_{i,M+1}^{(4)}]=T-X_{T_{M+1}}$,
\begin{align}\label{eq:I7_upper bound}
    &I_7<2\mathcal{O}\Big(T^{-\frac{d}{3}}(\log T)^{\frac{3}{2}}\Big) \mathbb{E}\Big[\sum_{i} N_{i,M+1}^{(4)}\Big] \nonumber \\
    &=2\mathcal{O}\Big(T^{-\frac{d}{3}}(\log T)^{\frac{3}{2}}\Big)\Big(T-X_{T_{M+1}}\Big)  \nonumber \\&<\mathcal{O}\Big(T^{1-\frac{d}{3}}(\log T)^{\frac{3}{2}}\Big). 
\end{align}

\subsubsection{Combining Bounds for $I_2$}
Combining~\eqref{eq:I6_upper bound} and~\eqref{eq:I7_upper bound} yields
\begin{align}\label{eq:I2_upper bound}
    &I_2\le I_6+I_7<\mathcal{O}\Big(\sqrt{T\log T}\Big)+\mathcal{O}\Big(T^{1-\frac{d}{3}}(\log T)^{\frac{3}{2}}\Big).
\end{align}

\subsection{Combining all terms and concluding the bound}

Finally, recalling~\eqref{eq:I1 and I2} and combining ~\eqref{eq:I1_upper bound} with ~\eqref{eq:I2_upper bound}, we obtain for $G_2$:
\begin{align}\label{eq:G2_upper bound}
    &G_2=I_1+I_2<\mathcal{O}\left(\sqrt{(\log T)^{\frac{2}{3}}T^{2-\frac{2}{3}d}+T\log T}\right) \nonumber \\
    &+\mathcal{O}\Big(T^{1-\frac{d}{3}}(\log T)^{\frac{3}{2}}\Big).
\end{align}
Finally, combining~\eqref{eq:G1_upper bound} with~\eqref{eq:G2_upper bound}, the expected regret incurred during gradual reset intervals is bounded as
\begin{align}
    &\mathbb{E}[\text{Reg}(T_{\text{gradual}})]<\mathcal{O}\left(\sqrt{(\log T)^{\frac{2}{3}}T^{2-\frac{2}{3}d}+T\log T}\right) \nonumber \\
    &+\mathcal{O}\Big(T^{1-\frac{d}{3}}(\log T)^{\frac{3}{2}}\Big).\nonumber \\
\end{align}

\end{document}